\theoremstyle{plain}
\newtheorem{theorem}{Theorem}
\newtheorem{lemma}{Lemma}
\theoremstyle{definition}
\newtheorem{definition}{Definition}
\newtheorem*{remark*}{Remark}
\begin{document}

\twocolumn[
\icmltitle{A Probabilistic Approach to Neural Network Pruning}



\icmlsetsymbol{equal}{*}

\begin{icmlauthorlist}
\icmlauthor{Xin Qian}{to}
\icmlauthor{Diego Klabjan}{to}
\end{icmlauthorlist}

\icmlaffiliation{to}{Department of Industrial Engineering and Management Science, Northwestern University}

\icmlcorrespondingauthor{Xin Qian}{xinqian2017@u.northwestern.edu}
\icmlcorrespondingauthor{Diego Klabjan}{d-klabjan@northwestern.edu}

\icmlkeywords{Machine Learning, ICML}

\vskip 0.3in
]



\printAffiliationsAndNotice{}  

\begin{abstract}
Neural network pruning techniques reduce the number of parameters without compromising predicting ability of a network. Many algorithms have been developed for pruning both over-parameterized fully-connected networks (FCNs) and convolutional neural networks (CNNs), but analytical studies of capabilities and compression ratios of such pruned sub-networks are lacking. We theoretically study the performance of two pruning techniques (random and magnitude-based) on FCNs and CNNs. Given a target network {whose weights are independently sampled from appropriate distributions}, we provide a universal approach to bound the gap between a pruned and the target network in a probabilistic sense. The results establish that there exist pruned networks with expressive power within any specified bound from the target network.
\end{abstract}

\section{Introduction}

The common neural network architectures that achieve the state-of-the-art results usually have tens of billions of trainable parameters \citep{goodfellow2016deep, radford2019language, brown2020language}, leading to a problem that training and inference of these models are computationally expensive and memory intensive. To address this problem, researchers have developed many practical algorithms to compress the network structure while keeping the original network's expressive power \citep{li2016pruning, han2015deep, han2015learning, cheng2017survey}. 

Recently, \citet{frankle2018lottery} conjecture that, every successfully trained neural network contains much smaller subnetworks (winning tickets) that---when trained in isolation from the original initialization---reach test accuracy comparable to the original network. This conjecture is called the Lottery Ticket Hypothesis (LTH). \citet{ramanujan2020s} further conjecture that, a sufficiently over-parameterized neural network with random initialization contains subnetworks that can achieve competitive accuracy without any training, when comparing to a large trained network. This conjecture can be viewed as a stronger version of the LTH in the sense that we do not need to train this over-parameterized random network. However, to determine the lottery tickets from this over-parameterized network is NP-Hard in the worst case \citep{malach2020proving, pensia2020optimal}. In addition, since over-parameterization is compared to a trained neural network, which is usually already over-parameterized, the random initialized network is over-over-parameterized and thus too large to consider. 

Although the development of such network pruning algorithms dates back to late 80s, there have been only limited studies of the theoretical guarantees of network pruning. The existence and the representation power of good subnetworks are lacking (see the related sections of the survey papers \citep{sun2019optimization, fan2019selective}). Recently, \citet{malach2020proving} prove the strong LTH for fully-connected networks with ReLU activations. They show that, given a target FCN of depth $l$ and width $d$, any random initialized network with depth $2l$ and width $O\pth{d^5l^2/\epsilon^2}$ contains subnetworks that can approximate the target network with $\epsilon$ error with high probability. In the following works, \citet{pensia2020optimal} and \citet{orseau2020logarithmic} concurrently and independently prove that the width of the random initialized network can be reduced to $O\pth{d\log(dl/\epsilon)}$. \citet{pensia2020optimal} further show that this logarithmic over-parameterization is essentially optimal for networks with constant depth. Note that a random initialized network is introduced and pruning is applied on this new network, instead of the target network. Thus, these results cannot provide much insights for developing model pruning algorithms that are applied on the target network directly. Besides, the proof ideas heavily rely on the fact the the random initialized network is well over-parameterized so that a subnetwork with a specific structure that can replicate a single neuron of the target network exists. Although the researchers have improved the polynomial dependency of width to logarithm, the size of the random initialized network is still very large. 

In this work, we focus on the theoretical results of pruning an over-parameterized target network directly. There are two types of subnetworks: subnetworks where specific weights are pruned (weight-subnetwork) and subnetworks where entire neurons are pruned (neuron-subnetworks). We focus on weight-subnetworks and show that, for both magnitude-based pruning (prune the smallest entries in the weight matrices based on magnitude) and random pruning (randomly select some entries in the weight matrices to prune), we can prune some weights of the target network while maintaining comparable expressive power with positive probability. We show the {random pruning and magnitude-based pruning results for FCNs and the random pruning result for CNNs}, where the latter one requires a sophisticated formulation to translate the convolutional layers into fully-connected layers with a specific ciuculant structure. The proof framework, which bounds the gap between the output of the pruned and trained networks layer by layer, are universal for both FCNs and CNNs. We rely on the results from probability theory and theoretical computer sciences to give precise bounds of the norms of weight matrices and other random variables. 

Our results, {as one of the rare studies about the existence of good subnetworks,} provide relationships between the width of the target network, the {number of pruned weights in} each layer, the universal expressive error over a closed region, and the probability that such good subnetworks exist. These results also give guidance for practical researchers by providing the probability that a good subnetwork exists and an estimation of how many entries can be pruned at one time by magnitude-based pruning and random pruning. 

The rest of the manuscript is structured as follows. In Section \ref{sec:lit_review} we review the literature while in Section \ref{sec:notations} we show the preliminaries and notations that are used throughout the paper. Sections \ref{sec:FCN} and \ref{sec:CNN} discuss the theoretical results of pruning of FCNs and CNNs, respectively. We conclude the results and discuss the potential future works in Section \ref{sec:conclusion}. We present technical lemmas in Appendix \ref{appendix:sec:lemma} and the complete proofs of the theorems in Appendix \ref{appendix:sec:proof}. In Appendix \ref{appendix:sec:magnitude} we discuss how to extend the theorems to more general settings and in Appendix \ref{appendix:sec:numerical} we show some numerical results that support our theorems and assumptions.

\section{Literature Review} \label{sec:lit_review}

\paragraph{Empirical Neural Network Pruning} There has been a long history of neural network pruning. Early studies of pruning reduce the number of connections based on the information of second-order derivatives of the loss function \citep{lecun1989optimal}. Following works focus on magnitude-based pruning. \citet{han2015deep} propose to reduce the total number of parameters and operations in the entire network. Other works explore pruning neurons and design various methods to determine the redundancy of neurons \citep{hu2016network, Srinivas2015DatafreePP}. Similar approaches are also applied to CNNs to prune filters \citep{luo2017thinet} or entire convolutional channels \citep{li2016pruning}. Recently, \citet{frankle2018lottery} conjecture the lottery ticket hypothesis that, a trained network contains a subnetwork that---when trained in isolation from the original initialization---can match the performance of the original network. \citet{NEURIPS2019_1113d7a7} claim that the good subnetworks in the LTH have better-than-random performance without any training. Based on the above two works, \citet{ramanujan2020s} conjecture the so-called strong LTH that, within a sufficiently over-parameterized neural network (comparing to the target network) with random weights at initialization, there exists a subnetwork that achieves competitive accuracy with the target network.

\paragraph{Theoretical Study of Neural Network Pruning} The study of the theoretical properties of neural network pruning only started recently. \citet{malach2020proving} prove the strong LTH for FCNs with ReLU activations. In particular, they show that one can approximate any target FCN of width $d$ and depth $l$ by pruning a sufficiently over-parameterized network of width $O(d^5 l^2/\epsilon^2)$ and depth $2l$ such that the gap between the pruned and target networks is bounded by $\epsilon$. \citet{pensia2020optimal} and \citet{orseau2020logarithmic} concurrently and independently improved the width of the random network to $O(\textrm{poly}(d) \log(dl/\epsilon))$. These results are based on the idea that, for a single-neuron ReLU connection, we can use a two-hidden-layer neural network with constant width to approximate it. In comparison, our results study pruning of the target FCNs and CNNs directly. Another line of research by \citet{ye2020good, ye2020greedy} propose a greedy optimization based neural network pruning method. They also provide theoretical guarantees of the decreasing discrepancy between the pruned and target networks. \citet{elesedy2020lottery} stick with the iterative magnitude-based pruning procedure described in \citet{frankle2018lottery} and prove the LTH for linear models trained by gradient flow methods. {\citet{pmlr-v80-arora18b} and \citet{zhou2019nonvacuous} theoretically study 
a close connection between compressibility and generalization of neural networks. Another line of work \citep{baykal2018datadependent, Liebenwein2020Provable, baykal2019sipping} propose sampling-based neural network pruning algorithms according to certain sensitivity scores and provide theoretical guarantees for both FCNs and CNNs.}

\paragraph{Theoretical Study of CNNs}
Although CNNs are successful in many computer vision tasks \citep{goodfellow2016deep}, there is less work discussing theoretical properties of CNNs. \citet{jain1989} shows that a linear transformation of a 2D convolutional filter can be represented by a doubly block circulant matrix. The circulant structure provides an efficient way to calculate the singular values of the linear transformation corresponding to a convoultional layer \citep{sedghi2018singular}.

\section{Preliminaries and Notations} \label{sec:notations}

We introduce some notations that are used in the sequel. For vector $v$, we use $\norm{v}_0$ and $\norm{v}_2$ to denote the $L_0$ and $L_2$ norm of $v$, respectively. For matrix $M \in \reals^{m \times n}$, we use $M_{i,j}$ or $\pth{M}_{i,j}$ to denote the element in the $i$-th row and $j$-th column of $M$; we use $M_{i. :}$ and $M_{:, j}$ to denote the $i$-th row and $j$-th column of $M$, respectively; the vectorization of $M$ is defined as $\textrm{vec}(M) \defeq \qth{M_{1,1}, \ldots, M_{m,1}, \ldots, M_{1,n}, \ldots, M_{m,n}}^T$. We also use analogous notations for higher-order tensors. The operator norm and element-wise maximum norm of $M$ is denoted by $\norm{M}_2$ and $\norm{M}_{\max} \defeq \max_{i\in[m], j\in[n]} \abs{A_{i,j}}$, respectively. The Hadamard (element-wise) product of two matrices $A, B \in \reals^{m\times n}$ is denoted by $M \defeq A\circ B$, where $M_{i,j} = A_{i,j}B_{i,j}$. We denote $\vec{0}_{m\times n}$ and $\vec{1}_{m\times n}$ as the zero matrix and all 1 matrix of dimension $m \times n$.

For $n \in \mathbb{N}^+$, we define $[n] \defeq \sth{1,\ldots, n}$. Given a sequence of real numbers ${a_n}$, we denote $\prod_{k=i}^j a_k = a_i \times \cdots \times a_j$ if $j \ge i$ and $\prod_{k=i}^j a_k = 1$ otherwise. For integers $n>0$ and $k$, we use $k\%n = k \textrm{ mod } n$ if $n \nmid k$ and $k\%n = n$ otherwise\footnote{Note that this definition is slightly different from the common definition of modulo.}. We use $\log$ to denote the natural logarithm and set $\log^{(2)}(x) \defeq \log(\log(x))$.

We use $\calU\qth{a, b}$ to denote the uniform distribution on interval $[a, b]$, and $\calN\pth{\mu, \Sigma}$ to denote the multivariate normal distribution with mean $\mu \in \reals^p$ and covariance matrix $\Sigma \in \reals^{p\times p}$. 

We are given a target neural network $F$ of depth $l \ge 3$ of the form \begin{equation} \label{eq:target_network_general_form}
        F(x) = W_l^* \sigma_l \pth{W_{l-1}^* \sigma_{l-1} \pth{\cdots W_2^* \sigma_1 \pth{W_1^* x}}}
    \end{equation}
where $\sigma_k$ is the activation function and weight matrix $ W_k^* \in \reals^{d_{k} \times d_{k-1}}, k\in [l]$\footnote{Throughout the paper, we skip the bias terms in the expression of the neural network.}. Typically, there are two types of subnetworks, namely weight-subnetworks and neuron-subnetworks, depending on whether we remove (or set to zero) the entire neuron or just the entries of a weight matrix. In this paper, we focus the theoretical results on weight-subnetworks. Mathematically, a pruned weight-subnetwork $f$ of $F$ is a network of the same architecture as $F$ such that the weight matrix in the $k$-th layer of $f$ is represented by $W_k = M_k \circ W_k^*$ for some mask $M_k \in \sth{0, 1}^{d_k \times d_{k-1}}$. Throughout the paper, we fix $M_1$ and $M_l$ as the all 1 matrix (i.e. we do not prune any weight on the first and last weight matrix of the target network). {We aim at reducing the number of active weights while keeping the expressive power of the original network $F$.}

The compression ratio of the $k$-th layer is defined as $\gamma_k \defeq  \norm{\textrm{vec}\pth{W_k}}_{0} /D_k$, where $D_k \defeq d_k d_{k-1}$ is the number of weights in the $k$-th layer. Obviously, we aim at reducing the compression ratios while keeping the expressive power of the original network $F$. 

The error metric used throughout the paper is the universal approximation over the unit ball $\calB_{d_0} \defeq \sth{x\in\reals^{d_0}: \norm{x}_2 \le 1}$, or in the CNN results we use the unit cube $\calC_{d_0} \defeq \sth{x \in \reals^{d_0}: x_i \in [0, 1], i \in [d_0]}$ instead; i.e. $f$ is $\epsilon$-close to $F$ if \begin{equation*}
    \sup_{x \in \calB_{d_0}} \norm{f(x) - F(x)}_2 \le \epsilon.
\end{equation*}
{This definition of discrepancy is common in the theoretical model pruning literature \citep{malach2020proving, pensia2020optimal, orseau2020logarithmic}.} Note that the results of this paper can be easily generalized from the unit ball to any ball with radius $r$ in $\reals^{d_0}$. We use the unit ball (or unit cube) only for ease of notation. {The discrepancy between the losses of the pruned and target network on a given set of samples can be derived similarly.}


\section{Pruning Fully-connected Neural Networks} \label{sec:FCN}

In this section, we show that a fully-connected neural network can be approximated by pruning some of its entries while keeping comparable expressive power under mild assumptions.

We start with two different pruning approaches -- random pruning and magnitude-based pruning. Given a target network $F$ as defined in \eqref{eq:target_network_general_form} and compression ratios $\gamma_k, k \in [l]$, random pruning refers to applying a set of masks $\sth{M_1, \ldots, M_l}$ on $F$ such that $M_k$ is constructed by starting with $M_k = \vec{1}_{d_k d_{k-1}}$ and repeating $\lfloor \gamma_k D_k \rfloor$ times the following steps: (1) select $i \in [d_{k}]$ uniformly at random; (2) select $j \in [d_{k-1}]$ uniformly at random; (3) set $\pth{M_k}_{i,j} = 0$\footnote{{Note that this scheme corresponds to ``with-replacement'' sampling, i.e., an index pair $(i,j)$ might be selected twice. There is another ``without-replacement'' strategy. For more details regarding these two strategies, please refer to Appendix \ref{appendix:subsec:diff}.}}. The magnitude-based pruning refers to applying a set of masks $\sth{M_1, \ldots, M_l}$ on $F$ such that $\pth{M_k}_{i,j} = {0}$ if $(i,j) \in \calI_k$ and $\pth{M_k}_{i,j} = {1}$ otherwise, where we order the entries of $W_k^*$ such that $\abs{W_k^*}_{i_1, j_1} \le \cdots \le \abs{W_k^*}_{i_{D_k}, j_{D_k}}$ and set $\calI_k \defeq \sth{(i_u,j_u): 1 \le u \le \lfloor \gamma_k D_k \rfloor}$. Recall that we assume $\gamma_1 = \gamma_l = 1$ and thus $M_1 $ and $M_l$ are all 1 matrices\footnote{{There is another global version of magnitude-based pruning where the weights of the entire network are sorted and the weights with the smallest magnitudes are pruned. For comparison between these two approaches, please refer to Appendix \ref{appendix:subsec:magnitude}.}}. 

Our main theorems in this section show that, for both pruning approaches and under mild conditions, the target network $F$ contains a weight-subnetwork that is $\epsilon$-close to $F$ with high probability. We present the results for magnitude-based pruning and random pruning in Sections \ref{subsec:magnitude_FCN} and \ref{subsec:random_FCN}, respectively. We outline the proof in Section \ref{subsec:scratch_FCN} and defer the complete proof to Appendix \ref{appendix:sec:proof}.

\subsection{Magnitude-based Pruning of FCN} \label{subsec:magnitude_FCN}

We first present the result for magnitude-based pruning. 

\begin{theorem} \label{thm:FCN_uniform}
    We are given a target network $F$ as defined in \eqref{eq:target_network_general_form}. Let us assume that \begin{enumerate}[(i)]
        \vspace{-0.5em}\item $\sigma_k$ is $L_k$-Lipschitz and satisfies $\sigma_k(0) = 0, k \in [l]$;
        \vspace{-0.5em}\item $d \defeq \min\sth{d_1, \ldots, d_{l-1}} \ge \max\sth{d_0, d_l}$;
        \vspace{-0.5em}\item entries in $W_k^*$ are independent and identically distributed following $\calU\qth{-\frac{K}{\sqrt{\max\sth{d_{k}, d_{k-1}}}}, \frac{K}{\sqrt{\max\sth{d_{k}, d_{k-1}}}}}$ for a fixed positive constant $K$.
    \end{enumerate} \vspace{-0.5em}Let $\epsilon > 0, \delta > 0$, and $\alpha \in (0, 1)$ be such that \begin{equation*}
        d \ge \max\sth{C_1^{\frac{1}{\alpha}}, \pth{\frac{C_2}{\epsilon}}^{\frac{1}{\alpha}}, \pth{\frac{C_3}{\delta}}^{\frac{1}{\alpha}}, C_4 + C_5 \log\pth{\frac{1}{\delta}}}
    \end{equation*} for some positive constants $C_1, C_2, C_3, C_4$ and $C_5$ (depending on $l$ and $L_k$'s) as specified in the proof. Then with probability at least $1-\delta$, the subnetwork $f$ of $F$ with mask $M = \sth{M_1, \ldots, M_l, M_k \in \sth{0, 1}^{d_{k}\times d_{k-1}}}$ that prunes the smallest $\lfloor D_k^{1-\alpha} \rfloor$ entries of $W_k^*, 1<k<l$ based on magnitude is $\epsilon$-close to $F$, i.e. \begin{equation} \label{eq:prune_target_approximation_FCN_uniform}
        \sup_{x\in \calB_{d_0}} \norm{f(x) - F(x)}_2 \le \epsilon.
    \end{equation}
\end{theorem}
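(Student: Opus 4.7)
The plan is a layer-by-layer discrepancy bound driven by two random-matrix ingredients: an operator-norm bound on each target weight matrix $W_k^*$ and a bound on the pruning perturbation $W_k - W_k^*$. Let $h_k(x)$ and $h_k^*(x)$ denote the pre-activations at layer $k$ of the pruned and target networks, with $h_0 = h_0^* = x$. Using the identity $h_k - h_k^* = (W_k - W_k^*)\sigma(h_{k-1}) + W_k^*(\sigma(h_{k-1}) - \sigma(h_{k-1}^*))$, combined with $L_k$-Lipschitzness and $\sigma_k(0)=0$ (which gives $\norm{\sigma(v)}_2 \le L_k\norm{v}_2$), I would derive for $E_k := \sup_{x\in\calB_{d_0}}\norm{h_k(x)-h_k^*(x)}_2$ the recursion
\[ E_k \le L_k \norm{W_k^*}_2 E_{k-1} + L_k \norm{W_k - W_k^*}_2 (P_{k-1} + E_{k-1}), \]
where $P_{k-1} := \sup_{x\in\calB_{d_0}}\norm{h_{k-1}^*(x)}_2 \le \prod_{j<k} L_j \norm{W_j^*}_2$. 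Since $M_1$ and $M_l$ are all ones, $W_1 = W_1^*$ and $W_l = W_l^*$, so $E_1 = 0$ and only layers $k = 2, \ldots, l-1$ contribute new pruning error; unrolling the recursion then bounds $E_l$ by a sum of at most $l - 2$ terms of the form $\norm{W_k - W_k^*}_2 \cdot \prod_{j \ne k} L_j \norm{W_j^*}_2$ plus lower-order cross products.

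For $\norm{W_k^*}_2$, the entries of $W_k^*$ are i.i.d.\ sub-Gaussian uniforms with variance $\Theta\!\bigl(K^2/\max\sth{d_k, d_{k-1}}\bigr)$, so a standard non-asymptotic operator-norm inequality for random rectangular matrices (a Vershynin/Rudelson-style bound) gives $\norm{W_k^*}_2 \le c_1 K$ with probability at least $1 - 2\exp(-c_2\max\sth{d_k, d_{k-1}})$. Requiring $d \gtrsim \log(1/\delta)$ keeps this tail bound valid after a union bound over $k$, which is precisely where the $C_4 + C_5 \log(1/\delta)$ term of the theorem originates.

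For the pruning perturbation I would use $\norm{W_k - W_k^*}_2 \le \norm{W_k - W_k^*}_F$ and observe that the squared Frobenius norm equals $c_k^2 \sum_{j=1}^{m_k} U_{(j)}^2$, where $c_k = K/\sqrt{\max\sth{d_k, d_{k-1}}}$, $m_k = \lfloor D_k^{1-\alpha}\rfloor$, and $U_{(1)} \le \cdots \le U_{(D_k)}$ are the order statistics of $D_k$ i.i.d.\ $\calU\qth{0,1}$ samples. Invoking R\'enyi's representation $U_{(j)} \stackrel{d}{=} S_j/S_{D_k+1}$, with $S_j$ a sum of $j$ i.i.d.\ Exp(1) variables, one computes $\mathbb{E}\bigl[\sum_{j=1}^{m_k} U_{(j)}^2\bigr] = \Theta(m_k^3/D_k^2)$; a Markov-type second-moment bound then yields, with probability at least $1 - \delta/l$, an estimate of the form $\norm{W_k - W_k^*}_2 \le c_3 K m_k^{3/2}/\bigl(D_k\sqrt{\max\sth{d_k, d_{k-1}} \cdot \delta}\bigr) \le c_4 K d^{-\alpha}/\sqrt{\delta}$, where the last step uses $D_k \ge d^2$ and $\max\sth{d_k, d_{k-1}} \ge d$ for $1 < k < l$. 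The polynomial $\delta$-dependence here is the source of the $(C_3/\delta)^{1/\alpha}$ term in the theorem.

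Assembling the pieces by a union bound over the $O(l)$ concentration events, $E_l$ is bounded by $C(l, K, L_1, \ldots, L_l) \cdot d^{-\alpha}$ times a slowly varying function of $\delta$, so requiring $E_l \le \epsilon$ and total failure probability at most $\delta$ produces the stated lower bound on $d$ after absorbing $l$, $K$, and the $L_k$ into $C_1, \ldots, C_5$. The main obstacle will be the Frobenius bound in step three: the pruned entries are the smallest order statistics of dependent uniforms, so the naive ``each pruned entry is at most $c_k$'' argument loses a polynomial factor in $D_k$ and would fail to produce the required $d^{-\alpha}$ rate; R\'enyi's representation is what converts the order statistic into a sum of independent exponentials whose moments can be computed in closed form, recovering the sharp $m_k^{3/2}/D_k$ scaling that underlies the theorem's $(C_2/\epsilon)^{1/\alpha}$ requirement.
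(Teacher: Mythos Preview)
Your overall architecture---the layer-by-layer recursion $E_k \le L_k\norm{W_k^*}_2 E_{k-1} + L_k\norm{W_k-W_k^*}_2(P_{k-1}+E_{k-1})$, the sub-Gaussian operator-norm bound on $W_k^*$, and the union-bound accounting---matches the paper's. The gap is in your third step, where you pass from the operator norm of the perturbation to its Frobenius norm.

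The arithmetic in your final inequality is incorrect. With $m_k=\lfloor D_k^{1-\alpha}\rfloor$ you have
\[
\frac{m_k^{3/2}}{D_k\sqrt{\max\{d_k,d_{k-1}\}}}
=\frac{D_k^{1/2-3\alpha/2}}{\sqrt{\max\{d_k,d_{k-1}\}}},
\]
and in the homogeneous case $d_k=d_{k-1}=d$ this equals $d^{1/2-3\alpha}$, not $d^{-\alpha}$. (For $\alpha<1/3$ the exponent $1/2-3\alpha/2$ is positive, so the hypothesis $D_k\ge d^2$ gives only a lower bound, not an upper bound.) The two rates coincide only at $\alpha=1/4$; for $\alpha<1/4$ your bound is strictly weaker, and for $\alpha\le 1/6$ it does not even tend to zero as $d\to\infty$. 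Since the theorem is stated for all $\alpha\in(0,1)$ and the exponent $1/\alpha$ appears explicitly in the hypothesis $d\ge(C_2/\epsilon)^{1/\alpha}$, the Frobenius route cannot deliver the statement as written. The underlying reason is that when $\alpha$ is small you are pruning nearly all entries, so $\calW_k:=W_k-W_k^*$ has rank of order $d$ and $\norm{\calW_k}_F\approx\sqrt{d}\,\norm{\calW_k}_2$; bounding by the Frobenius norm discards exactly this factor of $\sqrt{d}$.

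The paper avoids this by bounding the operator norm directly. It uses the closed-form moments of uniform order statistics (your R\'enyi computation, but applied \emph{per entry} rather than summed) to obtain $\mathbb{E}\,e^2\le 2K^2 d^{-1-4\alpha}$ and $\mathbb{E}\,e^4\le 24K^4 d^{-2-8\alpha}$ for every entry $e$ of $\calW_k$, and then feeds these row-, column-, and total-moment bounds into Latala's inequality to get $\mathbb{E}\norm{\calW_k}_2\le c_2 d^{-2\alpha}$. A Markov step on the first moment then yields $\norm{\calW_k}_2\le d^{-\alpha}$ with probability at least $1-c_2 d^{-\alpha}$; this is where the $(C_3/\delta)^{1/\alpha}$ requirement actually originates (not from a second-moment Markov bound as you suggest). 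So: keep your order-statistic moment computation, but replace the Frobenius aggregation by Latala's operator-norm inequality.
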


Note that many activation functions, like ReLU and tanh, hold for assumption (i) with $L_k=1$. In assumption (ii), we assume that the width of the target neural network is larger than the input and output dimensions. This is common in most of the theoretical and practical deep learning results. For assumption (iii), we take the upper/lower bound of the uniform distribution to be $\pm \frac{K}{\sqrt{\max\sth{d_{k}, d_{k-1}}}}$ for a fixed positive constant $K$ so that the variance of this distribution is of the same order as in the Xavier initialization \citep{glorot2010understanding}. We are aware of the fact that for many trained networks, the weights in each layer do not fit a uniform distribution well. We use the uniform distribution since the closed-form of the order statistics is only available for this distribution. We utilize these closed-form results to give a precise relationship between the width $d$, error $\epsilon$, probability $1-\delta$, and the compression ratio that depends on $\alpha$. Asymptotic results exist for order statistics of general distributions and can be used to estimate such relationships. We discuss more details on how to apply the results of intermediate order statistics to generalize Theorem \ref{thm:FCN_uniform} to other distributions in Appendix \ref{appendix:magnitude:FCN}. {The weights are assumed to be independent for simplicity. For the near-independent and non-independent settings, please refer to Appendix \ref{appendix:subsec:independent}. Same discussions about independency apply for Theorems \ref{thm:FCN_general} and \ref{thm:CNN}.}


\subsection{Random Pruning of FCN} \label{subsec:random_FCN}

In this section, we present the result for random pruning of FCNs. The key difference between random pruning and magnitude-based pruning is that, given the target network $F$, the mask corresponding to magnitude-based pruning is fixed while the mask of random pruning is random. 

Given the compression ratio $\gamma_k$ (or the number of weights to prune) in the $k$-th layer, a random pruning mask $M_k$ can be viewed as random selecting $\lfloor \gamma_k D_k \rfloor$ entries of $\sth{0,1}^{d_{k} \times d_{k-1}}$ with replacement and setting them to zero. These random selected masks are combined to form the mask $\sth{M_1, \ldots, M_l}$ that represents a random pruned weight-subnetwork of $F$. This random property further complicates the proof, as we need to consider the randomness from the entries of the target network and the randomness from the mask at the same time.

Besides the difference of the two pruning approaches, we only assume that each entry of the weight matrix independently follows a distribution with bounded second-order and fourth-order moments, while in Theorem \ref{thm:FCN_uniform} we assume that all the entries in the weight matrix are independently and identically following a specific distribution. 

\begin{theorem} \label{thm:FCN_general}
    We are given a target network $F$ as defined in \eqref{eq:target_network_general_form}. Let us assume that \begin{enumerate}[(i)]
        \vspace{-0.5em}\item $\sigma_k$ is $L_k$-Lipschitz and satisfies $\sigma_k(0) = 0, k \in [l]$;
        \vspace{-0.5em}\item $d \defeq \min\sth{d_1, \ldots, d_{l-1}} \ge \max\sth{d_0, d_l}$;
        \vspace{-0.5em}\item $\pth{W_k^*}_{i,j}$ independently follows a distribution $\calX^k_{i, j}$; further, there exist two positive constants $K_1$ and $K_2$ such that $\mathbb{E} {\calX_{i,j}^k}=0$, $\mathbb{E} \abs{\calX_{i,j}^k}^2 \le \frac{K_1}{\max\sth{d_{k}, d_{k-1}}}$ and $\mathbb{E} \abs{\calX_{i,j}^k}^4 \le \frac{K_2}{\max\sth{d_{k}, d_{k-1}}^2}$;
        \vspace{-0.5em}\item for all $k \in [l]$, there exists a positive constant $N_k$ such that $\norm{W_k^*}_2 \le N_k$ with probability at least $1-\delta_k$.
    \end{enumerate} \vspace{-0.5em} Let $\epsilon > 0, \delta > 0$, and $\alpha \in (0, 1)$ be such that {\small \begin{align} 
        \alpha &\le 1 - \frac{\log\pth{d_{k+1}+1} - \log^{(2)}\pth{d_{k+1}}}{\log\pth{d_{k+1}} + \log\pth{d_{k}}}, 1<k<l, \label{eq:assumption_balls_in_bins_for_alpha_1}\\
        \alpha &\le 1 - \frac{\log\pth{d_{k}+1} - \log^{(2)}\pth{d_{k}}}{\log\pth{d_{k+1}} + \log\pth{d_{k}}}, 1<k<l, \label{eq:assumption_balls_in_bins_for_alpha_2}\\
        \delta_0 & \defeq \delta - \qth{\delta_l + \sum_{i=1}^{l-1} (l-i)\delta_i} \ge 0, \label{eq:assumption_general_delta}
    \end{align}}
    and {\small \begin{equation*}
	    d \ge \max \sth{C_1^{\frac{4}{\alpha}}, \pth{\frac{C_2}{\epsilon}}^{\frac{4}{\alpha}}, \pth{\frac{C_3}{\delta_0}}^{3}, \pth{\frac{C_4}{\delta_0}}^{\frac{4}{\alpha}}},
	\end{equation*}}for some positive constants $C_1, C_2, C_3$ and $C_4$ (depending on $l$, $L_k$'s, and $N_k$'s) specified in the proof. Then with probability at least $1-\delta \ge \pth{1-d^{-\frac{1}{3}}}^{2(l-2)}\pth{1 - \delta_l} \Big[ 1 - (l-2) c_2 d^{-\frac{\alpha}{4}} - \sum_{i=1}^{l-1} (l-i)\delta_i\Big]$ over the randomness of masks and weights {for some positive constant $c_2$ defined in the proof}, the subnetwork $f$ of $F$ with mask $M = \sth{M_1, \ldots, M_l, M_k \in \sth{0, 1}^{d_{k}\times d_{k-1}}}$ that randomly prunes $\lfloor D_k^{1-\alpha} \rfloor$ entries of $W_k^*, 1< k < l$ is $\epsilon$-close to $F$, i.e., \begin{equation} \label{eq:prune_target_approximation_FCN_general}
        \sup_{x\in \calB_{d_0}} \norm{f(x) - F(x)}_2 \le \epsilon.
    \end{equation}
\end{theorem}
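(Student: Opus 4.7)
The plan is to unroll the discrepancy $f(x) - F(x)$ layer by layer. Writing $h_k^*$ and $h_k$ for the post-activation outputs of $F$ and $f$ at layer $k$ (with $h_0 = h_0^* = x$), the Lipschitz assumption together with $\sigma_k(0) = 0$ yields the recursion
\begin{equation*}
\|h_k - h_k^*\|_2 \le L_k \|W_k\|_2 \cdot \|h_{k-1} - h_{k-1}^*\|_2 + L_k \|E_k h_{k-1}^*\|_2,
\end{equation*}
where $E_k \defeq (\vec{1} - M_k)\circ W_k^*$ is the sparse matrix collecting the pruned weights. Iterating this telescoping bound and using $M_1 = M_l = \vec{1}$ (so $E_1 = E_l = 0$) reduces the argument to controlling, with high probability: (a) the operator norms $\|W_k\|_2$, (b) the uniform norm $\sup_{x \in \calB_{d_0}} \|h_{k-1}^*(x)\|_2$, and (c) the pruning-perturbation terms $\|E_k h_{k-1}^*\|_2$ for $1 < k < l$.

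Item (a) is immediate from $\|W_k\|_2 \le \|W_k^*\|_2 + \|E_k\|_2 \le N_k + \|E_k\|_2$ combined with assumption (iv), and (b) follows inductively from (a) since $\|h_k^*\|_2 \le \prod_{j\le k} L_j N_j$ on the unit ball. The core difficulty is (c). I would first decouple $x$ from the pruning randomness via $\|E_k h_{k-1}^*\|_2 \le \|E_k\|_2 \cdot \sup_x \|h_{k-1}^*(x)\|_2$ and then bound $\|E_k\|_2 \le \|E_k\|_F$. Conditional on the mask $M_k$, the Frobenius norm squared $\|E_k\|_F^2$ is a sum of at most $\lfloor D_k^{1-\alpha}\rfloor$ independent squared weights, each with second moment $O(1/\max\{d_k, d_{k-1}\})$ and fourth moment $O(1/\max\{d_k, d_{k-1}\}^2)$ by assumption (iii); a Chebyshev tail bound driven by the fourth-moment hypothesis then concentrates $\|E_k\|_F$ at its natural scale. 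The balls-in-bins conditions \eqref{eq:assumption_balls_in_bins_for_alpha_1}--\eqref{eq:assumption_balls_in_bins_for_alpha_2} enter to guarantee that, when $D_k^{1-\alpha}$ indices are drawn uniformly with replacement, the maximum number of hits landing in any single row or column of $E_k$ is controlled, which is required if concentration is proved row-by-row (or column-by-column) rather than as a single global sum.

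The main obstacle is coordinating the mask and the weight randomness while keeping the bound uniform over $x \in \calB_{d_0}$. The decoupling above removes $x$ from the perturbation term entirely, so the supremum costs nothing beyond the inductive bound on $\|h_{k-1}^*\|_2$. The joint mask-weight randomness is then handled in two stages: first condition on $M_k$ to reduce $\|E_k\|_F^2$ to an independent sum and apply the fourth-moment Chebyshev bound; then take expectation over $M_k$ using the balls-in-bins concentration to control the sparsity pattern. A union bound over the $l-2$ middle layers aggregates the mask-concentration failures (contributing terms of order $d^{-\alpha/4}$), the weight operator-norm failures $\sum_i (l-i)\delta_i$, and the residual moment-concentration events (contributing the $(1-d^{-1/3})^{2(l-2)}$ factor), producing exactly the failure probability stated in the theorem. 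Finally, the four lower bounds on $d$ are calibrated so that each per-layer pruning perturbation is driven below $\epsilon/(l-2)$ after being amplified through the product of the surviving $L_j\|W_j\|_2$ factors, yielding the desired $\epsilon$-closeness.
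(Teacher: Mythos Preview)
Your layer-by-layer recursion, the decoupling $\|E_k h_{k-1}^*\|_2 \le \|E_k\|_2 \sup_x\|h_{k-1}^*(x)\|_2$, and the two-stage conditioning (first on the mask, then on the weights) all match the paper's architecture. The gap is in how you control $\|E_k\|_2$.

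The step $\|E_k\|_2 \le \|E_k\|_F$ is too crude to recover the theorem. In the homogeneous case $d_k = d_{k-1} = d$, there are $\lfloor d^{2(1-\alpha)}\rfloor$ pruned entries each with second moment $O(1/d)$, so $\mathbb{E}\|E_k\|_F^2 = O(d^{1-2\alpha})$ and the ``natural scale'' of $\|E_k\|_F$ is $d^{1/2-\alpha}$. This does not vanish unless $\alpha > 1/2$, whereas the balls-in-bins constraints \eqref{eq:assumption_balls_in_bins_for_alpha_1}--\eqref{eq:assumption_balls_in_bins_for_alpha_2} force $\alpha \lesssim \tfrac{1}{2} + \tfrac{\log\log d}{2\log d}$. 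So a Chebyshev bound on $\|E_k\|_F$ can never produce the stated $d^{-\alpha/4}$ terms in the probability or the $(C_2/\epsilon)^{4/\alpha}$ requirement on $d$; in most of the admissible range of $\alpha$ it does not even yield a bound tending to zero.

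What the paper actually does---and this is where the balls-in-bins conditions are essential rather than optional---is bound $\mathbb{E}[\|E_k\|_2 \mid A]$ via Latala's inequality (Lemma~\ref{lemma:latala}). Conditioning on the mask event $A$ that every row and column of $E_k$ has at most $3D_k^{1-\alpha}/d_k$ (resp.\ $3D_k^{1-\alpha}/d_{k-1}$) nonzeros makes each row and column second-moment sum $O(D_k^{-\alpha})$, and the global fourth-moment sum $O(D_k^{-\alpha})$. Latala then gives $\mathbb{E}[\|E_k\|_2\mid A]\le c_2 d^{-\alpha/2}$, and Markov yields $\mathbb{P}(\|E_k\|_2 > d^{-\alpha/4}\mid A)\le c_2 d^{-\alpha/4}$. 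This is a factor $d^{(1-\alpha)/2}$ tighter than the Frobenius route, and it is exactly the slack needed.

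You also have the bookkeeping of the failure probabilities reversed: the factor $(1-d^{-1/3})^{2(l-2)}$ is the probability of the balls-in-bins mask event $A$ (Lemma~\ref{lemma:ballsinbins}), not a moment-concentration residual; and the $(l-2)c_2 d^{-\alpha/4}$ term comes from the Markov step on the weight randomness after Latala, not from mask concentration. Getting these roles straight is what makes the final union bound match the expression in the statement.
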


We next discuss the feasibility of these assumptions. Assumptions (i) and (ii) have already been used in Theorem \ref{thm:FCN_uniform}. These two assumptions are common in both practice and theory. Since the target network $F$ is usually a trained one, a universal distribution for all entries in a layer might not be realistic. Thus we have assumption (iii) to allow non-homogeneous distributions of the entries in the weight matrices. The two bounds $\mathbb{E} \abs{\calX_{i,j}^k}^2 \le \frac{K_1}{\max\sth{d_{k+1}, d_k}}$ and $\mathbb{E} \abs{\calX_{i,j}^k}^4 \le \frac{K_2}{\max\sth{d_{k+1}, d_k}^2}$ hold for a variety of distributions, like the uniform distribution, normal distribution, and sub-Gaussian distribution, as long as the variance of the distribution is set to $O\pth{\frac{1}{d}}$. This holds because, if we initialize the target network $F$ following the Xavier initialization and train the network properly, the variance of the weights should remain of the same order, approximately. We further verify that this assumption holds by checking the distribution of some trained FCNs and CNNs. We train a 5-hidden-layer FCN with 1024 neurons in each hidden layer on the \texttt{Covertype} dataset \citep{blackard1998comparative} by randomly selecting initial weights. Figure \ref{fig:hist} shows the histogram of weights in different layers of the trained FCN. They exhibit a sub-Gaussian distribution and the second-order and fourth-order moments are well bounded by $O\pth{\frac{1}{d}}$ and $O\pth{\frac{1}{d^2}}$, respectively. See Appendix \ref{appendix:subsec:distribution} for more details. Assumption (iv) bounds the operator norm of the weight matrices, which is an important term in the proof. Without loss of generality, we assume that $N_k \ge 1$. We can also have $\delta \ge \delta_l + \sum_{i=1}^{l-1} (l-i)\delta_i$. This can be achieved by increasing the value of $N_k$ and thus reducing the value of $\delta_k$. However, we should carefully choose the values of $N_k$'s and $\delta_k$'s, as larger $N_k$'s also increase the lower-bound of the minimum number of neurons in the target network. In fact, assumption (iv) with certain $N_k$'s and $\delta_k$'s can be derived from assumption (iii) with Lemma \ref{lemma:latala} and the Markov's inequality. We use assumption (iv) as it allows possible tighter values.

\begin{figure*}[h!]
    \centering
    \begin{tabular}{cccccc}
        \hspace{-1.5em}\includegraphics[width=0.35\textwidth]{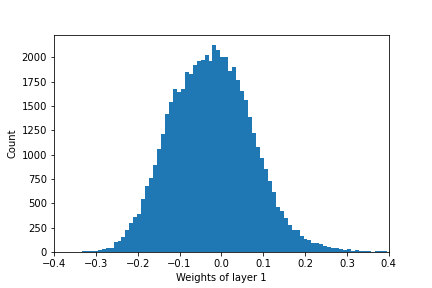}&
        \hspace{-1.5em}\includegraphics[width=0.35\textwidth]{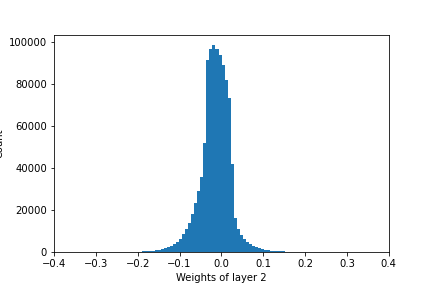}&
        \hspace{-1.5em}\includegraphics[width=0.35\textwidth]{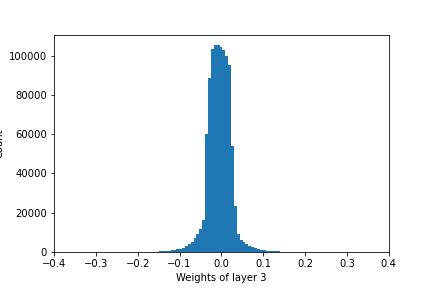}\\
        \hspace{-1.5em}\includegraphics[width=0.35\textwidth]{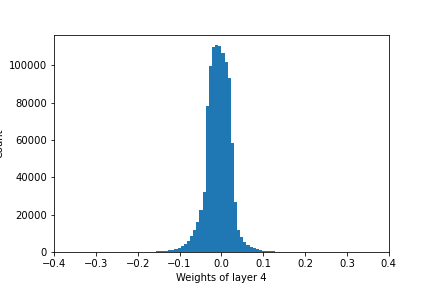}&
        \hspace{-1.5em}\includegraphics[width=0.35\textwidth]{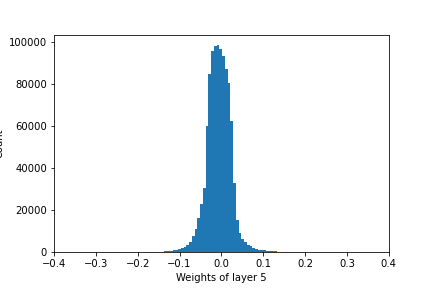}&
        \hspace{-1.5em}\includegraphics[width=0.35\textwidth]{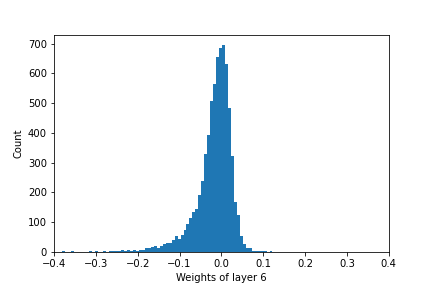}
    \end{tabular}
    \caption{\small{The histogram of entries of all the weight matrices from a trained FCN.}}
    \label{fig:hist}
\end{figure*}

\subsection{Further Discussions and Proof Outlines} \label{subsec:scratch_FCN}

In conclusion, under two different schemes, Theorems \ref{thm:FCN_uniform} and \ref{thm:FCN_general} show that, under certain conditions, we can prune $\lfloor D_k^{1-\alpha} \rfloor, \alpha > 0$ entries in the $k$-th layer of $F$ while keeping the pruned network $f$ to be $\epsilon$-close to $F$ with positive probability $1-\delta$. It is obvious that we cannot set $\alpha = 0$ as it makes the weight-subnetwork $f$ to be the zero function.

By fixing $\alpha$, our theorems show that a lower-bound of the minimum width of the target network can be represented as a polynomial in $\frac{1}{\epsilon}, \frac{1}{\delta} \textrm{ and } \log\pth{\frac{1}{\delta}}$.
Note that the constants in the theorems can be significantly improved by a finer analysis, but this is not the focus of this work. For example, by carefully discussing the independency of the events given in \eqref{eq:theorem1_events} and \eqref{eq:theorem2_events} in Appendix \ref{appendix:sec:proof}, we can improve the constants related to $\delta$ greatly. We can also give a finer upper-bound of the norm of the output of each layer by studying the corresponding distribution as a whole; in this case, we can even set the constant $C_2 = 1$ in Theorems \ref{thm:FCN_uniform} and \ref{thm:FCN_general}. The same argument holds for Theorem \ref{thm:CNN} (presented later) as well.

Next, we give a sketch on the universal framework for proving the theorems in this paper. For simplicity, we remove the statement about probabilities and use $C$ and $C'$ to denote universal positive constants which may vary by occurrence in this section. The details of the probabilities and constants are given in the full proof in Appendix \ref{appendix:sec:proof}. 

We use $y_k(x)$ and $y_k^*(x)$ to denote the output of the $k$-th layer of $f$ and $F$, respectively. The basic building block of the proof is showing how to iteratively bound the error between $y_k(x)$ and $y_k^*(x)$. This is achieved by inducting on the upper-bounds of $\norm{y_k(x) - y_k^*(x)}_2$ and $\norm{y_k^*(x)}_2$ at the same time. Intuitively, we expect the error $\norm{y_k(x) - y_k^*(x)}_2$ to be small and that the norm of the output $\norm{y_k^*(x)}_2$ is not exploding.

By the Lipschitz continuity of the activation functions and several matrix norm inequalities, we show that the above norms heavily depend on bounding two random variables  $\norm{W_k^*}_2$ and $\norm{W_k - W_k^*}_2$. 

Recall that we assume different distributions for the weights in the two theorems. For example, in Theorem \ref{thm:FCN_uniform} we assume the entries in $W_k^*$ are uniformly distributed. By Lemma \ref{lemma:2norm} and the Markov's inequality, we derive the probability that $\norm{W_k^*}_2 \le C$. A similar approach, depending on the specific distribution we assume, is applied in the other theorems to upper-bound the probability.

We want to make sure that $\norm{W_k - W_k^*}_2$ is small so that the gap between outputs can be small as well. In this sense, we cannot bound the two matrices $W_k$ and $W_k^*$ separately. Instead, we use the fact that $W_k - W_k^*$ is a zero matrix except for those pruned entries. In Theorem \ref{thm:FCN_uniform} we apply the closed-form order statistics of the uniform distribution to give a precise upper-bound of $\norm{W_k - W_k^*}_2$, which is $O(d^{-\alpha})$. In the other proofs, we rely on the results of the ``balls-into-bins'' problem (Lemma \ref{lemma:ballsinbins}) and the Latala's inequality (Lemma \ref{lemma:latala}) to give similar upper-bounds. 

The remaining part of the proofs are to estimate the probabilities that each event happens, and to determine the conditions between variables $d, \epsilon, \delta$ and $\alpha$. For more details about the proofs, please refer to Appendix \ref{appendix:sec:proof}.

\section{Pruning Convolutional Neural Networks} \label{sec:CNN}

In this section, we study model pruning of CNNs. We start with the mathematical definition of a single convolutional layer of CNN following the notations of \citet{sedghi2018singular}. We are given an input feature map $X \in \reals^{d \times p \times p}$ where $d$ denotes the number of input channels of the convolutional layer and $p$ is the height/width of the input feature map\footnote{We assume that the input feature map has the same width and height for simplicity. All the statement in this section can be generalized to fit different width and height.}. The entry $\pth{X}_{t,i,j}$ is the value of the input unit within channel $t$ at row $i$ and column $j$. The convolutional layer transforms $X$ into an output feature map $Y \in \reals^{d' \times p' \times p'}$, which becomes the input to the next convolutional layer. This is achieved by applying $d'$ 3D filters $\calF_{s} \in \reals^{d \times q \times q}$ on the $d$ input channels of $X$, where each $\calF_s$ generates the $s$-th channel of $Y, s \in [d']$, and $p > q$. Each filter $\calF_s $ is composed by $d$ 2D convolutional kernels (we use kernels for abbreviation in the sequel) $\calF_{s,t} \in \reals^{q \times q}, t \in [d]$. All the filters are combined to form the convolutional tensor $\calF \in \reals^{d'\times d \times q \times q}$. Mathematically, we have $\calF_{s,t} = \calF_{s,t,:,:}$ and $\calF_s = \calF_{s,:,:,:}, s \in [d'], t\in[d]$.

Filter $\calF_s$ is moved along the second and third axes of $X$ to get the output feature maps. We assume that the stride is 1, i.e., we move the filter $\calF_i$ by 1 pixel/element around every time. Note that there are two types of padding: (i) zero padding where we wrap the input feature maps with zeros around the edges; (ii) wrap-around padding where we pad the input feature maps in such a way that, if a pixel/element that is off the right end of the image is called by the filter, we use the pixel/element from the left end of the image instead; we do this similarly for all the edges and axes; mathematically, we set $ X_{t,i,j} = X_{t, i\%p, j\%p}$. Throughout the paper, we use the second approach for padding, as it leads to a circulant representation of the filters\footnote{The first approach leads to the Toeplitz representation and there exist numerous discussions regarding the error and (non-) asymptotic relationship between these two approaches in the CNN literature \citep{sedghi2018singular} and the matrix analysis literature \citep{CIT-006, zhu2017asymptotic}. The error gap can be bounded by $O\pth{\frac{1}{n}}$, where $n$ is the dimension of the matrix.}. 

With wrap-around padding and stride 1, the width and height of the output feature map are the same as the input feature map, i.e. we have $p = p'$. Let $K$ be the $d'\times d \times p \times p$ tensor such that {\footnotesize \begin{equation}
    K_{s,t,:,:} = \begin{bmatrix} \calF_{s,t,:,:} & \vec{0}_{q\times(p-q)} \\ \vec{0}_{(p-q)\times q} & \vec{0}_{(p-q)\times(p-q)}
\end{bmatrix}, s \in [d'], t \in [d]. \label{eq:calF}
\end{equation}}Then for $s\in[d'], a, b \in[p]$, we have \begin{equation*} 
    Y_{s,a,b} = \sum_{t\in[d]}\sum_{i\in[p]}\sum_{j\in[p]} X_{t, (a+i-1)\%p, (b+j-1)\%p} K_{s, t, i, j}.
\end{equation*}
For vector $a = (a_1, \ldots, a_n)^T$, we define {\footnotesize \begin{equation*}
    \textrm{circ}(a) \defeq \begin{bmatrix}
        a_1 & a_2 & \cdots & a_n \\
        a_n & a_1 & \cdots & a_{n-1} \\
        \vdots & \vdots & \ddots & \vdots \\
        a_2 & a_3 & \cdots & a_1
    \end{bmatrix}.
\end{equation*}}\citet{sedghi2018singular} show that a linear transformation $W \in \reals^{p^2d' \times p^2d}$ which satisfies $\textrm{vec}(Y) = W \textrm{vec}(X)$ can be represented by {\footnotesize \begin{equation}  \label{eq:CNN_linear_transformation}
    W = \begin{bmatrix}
        B_{1,1} & \cdots & B_{1, d} \\
        \vdots & \ddots & \vdots \\
        B_{d', 1} & \cdots & B_{d', d}
    \end{bmatrix},
\end{equation}}where each $B_{s,t}$ is a doubly block circulant matrix such that {\footnotesize \begin{equation} \label{eq:doubly_block}
    B_{s,t} = \begin{bmatrix}
            \textrm{circ}\pth{K_{s,t,1,:}} & \textrm{circ}\pth{K_{s,t,2,:}} & \cdots & \textrm{circ}\pth{K_{s,t,p,:}}\\
            \textrm{circ}\pth{K_{s,t,p,:}} & \textrm{circ}\pth{K_{s,t,1,:}} & \cdots & \textrm{circ}\pth{K_{s,t,p-1,:}} \\
            \vdots & \vdots & \ddots & \vdots \\
            \textrm{circ}\pth{K_{s,t,2,:}} & \textrm{circ}\pth{K_{s,t,3,:}} & \cdots &  \textrm{circ}\pth{K_{s,t,1,:}}
        \end{bmatrix}.
\end{equation}}Now we discuss the formulation of a convolutional neural network. Formally, consider a CNN $F$ of depth $l \ge 3$. For $1 \le k < l$, the $k$-th convolutional layer of $F$ takes the input feature map of dimension $d_{k-1} \times p_{k-1} \times p_{k-1}$, and transforms it to an output feature map of dimension $d_{k} \times p_{k} \times p_{k}$ by applying the convolutional tensor $\calF^{(k)} \in \reals^{d_k \times d_{k-1} \times q_{k-1} \times q_{k-1}}$. Then we pass the output feature map through an activation function $\sigma$ and feed it into the next layer. The last layer is a fully-connected layer that maps the output tensor of the previous layer with dimension $d_{l-1}\times p_{l-1} \times p_{l-1}$ into a vector of dimension $d_l$ by matrix $W_l^* \in \reals^{d_l \times d_{l-1} p_{l-1}^2}$. Mathematically, by reshaping the convolutional tensor $\calF^{(k)}$ into the corresponding linear mapping $W_k^* = \qth{B_{s,t}^{(k)}}_{s \in [d_k], t \in [d_{k-1}]}$, where $B_{s,t}^{(k)}$ is the doubly block circulant matrix induced by $K^{(k)}_{s,t,:,:}$ as defined in \eqref{eq:calF} -- \eqref{eq:doubly_block}, we write the convolutional neural network as \begin{equation} \label{eq:target_CNN_general_form}
    F(x) = W_l^* \sigma \pth{W_{l-1}^* \sigma_{l} \pth{\cdots W_2^* \sigma \pth{W_1^* x}}}.
\end{equation}
Similar to the definition of weight- and neuron-subnetworks of FCN, there are two analogous definitions for CNN. We define the channel-subnetwork of $F$ as achieved by removing several 3D channels from the 4D tensor $\calF^{(k)}, 1 \le k < l$\footnote{In practice we usually remove the whole channel and hence reduce the size of $\calF^{(k)}$ to $\reals^{d_k' \times d_{k-1} \times q_{k-1} \times q_{k-1}}$ with $d_k' < d_k$. The size of the input of the next layer is also reduced to $d_k' \times p_k \times p_k$. In the presentation of this paper, we set the pruned channels to zero instead of removing them. It helps us to keep the dimension of pruned and original tensors to be the same while not changing any theoretical property of the CNNs.}, and the filter-subnetwork of $F$ by removing several 2D filters from the 4D tensor $\calF^{(k)}$. The channel-subnetwork of $F$ is equivalent to setting rows of $W_k^*$ in terms of equation \eqref{eq:target_CNN_general_form} (we are actually setting several rows of block matrices $B_{s,t}^{(k)} $) to be zero while the filter-subnetwork refers to setting some block sub-matrices $B_{s,t}^{(k)}$ of $W_k^*$ to be zero. In the following, we focus on filter-subnetworks and present the result of random pruning on CNNs. We discuss the magnitude-based pruning of CNNs in Appendix \ref{appendix:magnitude:CNN}.

For ease of presentation, in the theorem below, we assume that the number of channels and the width/height of each channel in all convolutional layers are equal, i.e., we define $d \defeq d_1 = \cdots = d_{l-1}$ and $p \defeq p_1 =\cdots p_{l-1}$. A similar result can be derived by the same approach for the general non-homogeneous setting.

\begin{theorem} \label{thm:CNN}
    We are given a target network $F$ as defined in \eqref{eq:target_CNN_general_form} and we denote by $\calF^{(k)} \in \reals^{d_k \times d_{k-1} \times p_k \times p_k}$ the convolutional tensor corresponding to $W_k^*, 1 \le k < l$. Let us assume that \begin{enumerate}[(i)]
        \vspace{-1em}\item $\sigma$ is $L$-Lipschitz and $\sigma(0) = 0$;
        \vspace{-0.5em}\item $d \ge \max\sth{d_0, d_l}$;
        \item for $s\in[d_k],t\in[d_{k-1}],i,j\in[p],k \in [l-1]$, $\calF^{(k)}_{s,t,i,j}$ independently follows a distribution $\calX^k_{s,t,i,j}$; further, there exist two positive constants $C_1$ and $C_2$ such that $\mathbb{E} {\calX^k_{s,t,i,j}}=0$, $\mathbb{E} \abs{\calX^k_{s,t,i,j}}^2 \le \frac{C_1}{p^2d}$ and $\mathbb{E} \abs{\calX^k_{s,t,i,j}}^4 \le \frac{C_2}{p^4d^2}$; the weights in $W_{l}^*$ follow distributions with the same second-order and fourth-order moment upper-bounds.
    \end{enumerate} Let us consider the subnetwork $f$ of $F$ with mask $M = \sth{M_1, \ldots, M_l}$ that randomly prunes $\lfloor d^{2-\alpha} \rfloor, 0 < \alpha \le 2 - \frac{\log(d+1)+\log^{(2)}(d)}{\log(d)} $ filters in the $k$-th layer of $F$, $1<k<l$. For any positive constants $\beta_1 \in (0,1)$ and $\beta_2 \in \pth{0, \frac{1}{4}\alpha}$, with probability at least $\pth{1-d^{-\frac{1}{3}}}^{2(l-2)}\widebar{p}$, where $\widebar{p} \defeq 1 - (l-2)C_4 \frac{q^2}{p}d^{-\frac{1}{4}\alpha + \beta_2} - \frac{l^2-l-2}{2}C_3\frac{q^2}{p^{1-\beta_1}} - \frac{C_5}{p^{1-\beta_1}}$ over the randomness of masks and weights, we have {\footnotesize \begin{align}
        \sup_{x \in \calC_{p_0^2d_0}} &\norm{f(x) - F(x)}_2 \label{eq:CNN_upperbound}\\ 
        &\le p^{-\beta_1} L^{l-1} p_0 \sqrt{d} \qth{ p^{-\beta_1} \pth{p^{-\beta_1} + d^{-\beta_2}}^{l-2} - p^{-(l-1)\beta_1}} \notag
    \end{align}}for some positive constants $C_3, C_4$ and $C_5$ specified in the proof. 
\end{theorem}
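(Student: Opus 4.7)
The plan is to apply the universal layer-by-layer induction framework described in Section \ref{subsec:scratch_FCN}, but with additional care for the doubly block circulant structure of the convolutional weight matrices and for the fact that ``filter'' pruning acts on $p^2 \times p^2$ blocks rather than individual entries. Let $y_k^*(x)$ and $y_k(x)$ denote the outputs of the $k$-th layer of $F$ and $f$. Using $L$-Lipschitzness with $\sigma(0)=0$ and the decomposition $W_k y_{k-1} - W_k^* y_{k-1}^* = (W_k-W_k^*)y_{k-1}^* + W_k(y_{k-1} - y_{k-1}^*)$, I would derive the two-variable recursion
\begin{align*}
\norm{y_k - y_k^*}_2 &\le L\norm{W_k - W_k^*}_2\norm{y_{k-1}^*}_2 + L\norm{W_k}_2 \norm{y_{k-1}-y_{k-1}^*}_2,\\
\norm{y_k^*}_2 &\le L \norm{W_k^*}_2 \norm{y_{k-1}^*}_2,
\end{align*}
with initial condition $\norm{y_0^*}_2 = \norm{x}_2 \le p_0\sqrt{d_0}$ for $x \in \calC_{p_0^2 d_0}$.

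Next, I would control $\norm{W_k^*}_2$ and $\norm{W_k}_2$. Each $W_k^*$ is a $p^2 d \times p^2 d$ matrix partitioned into $d \times d$ doubly block circulant blocks $B_{s,t}^{(k)}$, each generated by a $q \times q$ kernel padded with zeros into a $p \times p$ array. Applying Latala's inequality (Lemma \ref{lemma:latala}) at the block level with the second- and fourth-moment bounds from assumption (iii), the rescaling by $p^2 d$ entries yields $\norm{W_k^*}_2 \le$ constant with probability $1 - O(p^{-(1-\beta_1)})$; the factor $p^{-\beta_1}$ in the theorem statement arises exactly from this concentration. The bound on $\norm{W_k}_2$ follows from $\norm{W_k}_2 \le \norm{W_k^*}_2 + \norm{W_k-W_k^*}_2$.

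The heart of the argument is the bound on $\norm{W_k-W_k^*}_2$. By the filter-pruning scheme, $W_k - W_k^*$ equals $-W_k^*$ on exactly $\lfloor d^{2-\alpha}\rfloor$ of the $d^2$ block positions and is zero elsewhere. Applying the balls-into-bins lemma (Lemma \ref{lemma:ballsinbins}), the constraint $\alpha \le 2 - \frac{\log(d+1)+\log^{(2)}(d)}{\log(d)}$ ensures that with probability $1 - O(d^{-1/3})$ every block-row and block-column has at most a bounded excess of pruned blocks on top of the mean $d^{1-\alpha}$. Conditioning on this event, a second application of Latala at the block level, combined with the fact that each pruned doubly block circulant block has only $q^2$ nonzero positions (contributing a $q^2/p^2$ sparsity factor), yields $\norm{W_k-W_k^*}_2 \le O(d^{-\beta_2})\cdot (q/p)$ with probability $1 - O\!\bigl(q^2 d^{-\alpha/4 + \beta_2}/p\bigr)$. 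I would treat the last fully-connected layer by exactly the argument of Theorem \ref{thm:FCN_general}.

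Finally, I would plug these into the recursion and iterate over $k = 2, \ldots, l-1$, producing a telescoping product in $(p^{-\beta_1} + d^{-\beta_2})$ for the error and $p^{-\beta_1}$ for the output norm, leading to the stated bound after subtracting the ``no-pruning'' baseline $p^{-(l-1)\beta_1}$ term. A union bound over all layers and over the three independent good events (operator-norm control on each $W_k^*$, balls-into-bins, and Latala on pruned blocks) produces the probability $\widebar{p}$. The main obstacle is controlling operator norms under the circulant structure: Latala-type bounds nominally require independent entries, which is violated because each of the $q^2$ kernel entries is repeated $p^2$ times inside $B_{s,t}^{(k)}$. The cleanest fix is to apply Latala at the block level (treating the $d\times d$ arrangement of random block-circulant matrices as the ambient random matrix) and to use the Sedghi et al.~diagonalization of $B_{s,t}^{(k)}$ via 2D DFTs to transfer per-block spectral information; carefully interleaving this with the pruning randomness, so that the balls-into-bins event and the weight moments are used at the correct scales, is the most delicate part of the proof.
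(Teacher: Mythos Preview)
Your overall architecture matches the paper's proof: the two-term layer recursion, balls-into-bins (Lemma \ref{lemma:ballsinbins}) on the $d\times d$ filter mask to control per-row/column load, Latala (Lemma \ref{lemma:latala}) plus Markov for operator-norm tail bounds, and the Sedghi--Gupta--Long diagonalization (Lemma \ref{lemma:sedghi}) to handle the circulant structure. The induction you describe and the telescoping product in $(p^{-\beta_1}+d^{-\beta_2})$ are exactly what the paper obtains.

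Where your proposal is imprecise is the phrase ``apply Latala at the block level.'' Latala's inequality needs independent \emph{scalar} entries, and there is no direct block version. The paper's mechanism is sharper than what you wrote: Lemma \ref{lemma:sedghi} gives $\norm{W_k^*}_2 = \max_{u,v\in[p]}\norm{P^{(k,u,v)}}_2$ where $P^{(k,u,v)}\in\reals^{d\times d}$ has $(s,t)$-entry $(S^T K^{(k)}_{s,t,:,:}S)_{u,v}$. One then writes $P^{(k,u,v)}=\sum_{i,j\in[q]}\omega^{ui+vj}\,\calF^{(k)}_{:,:,i,j}$, so that each summand has operator norm $\norm{\calF^{(k)}_{:,:,i,j}}_2$. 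For fixed $(i,j)$ the $d\times d$ matrix $\calF^{(k)}_{:,:,i,j}$ has genuinely independent entries (different $(s,t)$ index different kernels), and \emph{this} is where Latala applies, giving $\Expect\norm{\calF^{(k)}_{:,:,i,j}}_2 \le C_3/p$. The $q^2$ factor then comes from the triangle inequality over the $q^2$ summands, not from a ``$q^2/p^2$ sparsity factor'' of a single block. The same reduction handles $W_k-W_k^*$: under the balls-into-bins event $A$, each masked $\widebar K^{(k)}_{:,:,i,j}$ has at most $3d^{1-\alpha}$ nonzeros per row/column, and Latala yields $\condexp{}{\norm{\widebar K^{(k)}_{:,:,i,j}}_2}{A}\le C_4 d^{-\alpha/4}/p$, hence $\condexp{}{\norm{W_k-W_k^*}_2}{A}\le C_4 q^2 d^{-\alpha/4}/p$. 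One small correction: after Markov the bound reads $\norm{W_k-W_k^*}_2\le d^{-\beta_2}$ with failure probability $C_4\frac{q^2}{p}d^{-\alpha/4+\beta_2}$; the $q^2/p$ factor lives in the \emph{probability}, not in the operator-norm bound itself (and likewise $\norm{W_k^*}_2\le p^{-\beta_1}$ with failure probability $C_3 q^2/p^{1-\beta_1}$).
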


The first two assumptions are common in all the theorems we present. We next discuss the feasibility of assumption (iii). Since we translate the target CNN into a FCN form and there are $p^2 d$ neurons (instead of $d$ neurons) in the $k$-th layer of $F, 1<k<l$, we change the denominators in the upper-bounds of moments accordingly. The mathematical definition of the masks is also revised to fit the CNN structure. Here we set the mask $M_k$ to be the 0-1 matrices such that its sub-matrices are blocks of the zero matrices and all one matrices based on \eqref{eq:CNN_linear_transformation}. 
Condition $\alpha \le 2 - \frac{\log(d+1)+\log^{(2)}(d)}{\log(d)}$ is used to guarantee that Lemma \ref{lemma:ballsinbins} holds. As $d$ goes to infinity, the left-hand side goes to 1 and then $\alpha$ becomes less and less constrained. For example, for $d = 128$ or $1,024$, the bound reads $0.6729, 0.7205$, respectively. 

We next argue that the probability $\pth{1-d^{-\frac{1}{3}}}^{2(l-2)}\widebar{p}$ is positive in many of the use cases. Note that $\frac{q^2}{d}$ and $ \frac{q^2}{p}$ are close to zero as we usually take $q=1,3,5$ as the dimension of the kernel, $p=28$ for images of MNIST \citep{lecun1998gradient} and $p = 32$ for images of CIFAR-10 \citep{krizhevsky2009learning}. For images with 4K resolution, we have $p= 3,840 \textrm{ or } 2,160$. The number of channels $d$ varies from 64 to 512 in famous CNN architectures, like VGG16 \citep{simonyan2014very} and ResNet \citep{he2016deep}. The closed-forms of constants $C_3, C_4$, and $C_5$ are presented in Appendix \ref{appendix:proof3}. Similar to the discussions in Section \ref{subsec:scratch_FCN}, these constants can be significantly improved by a finer analysis, but this is not the focus of this work. It is easy to see that the right-hand side of \eqref{eq:CNN_upperbound} is positive and it converges to 0 as $d$ goes to infinity. Thus, by taking $\beta_1$ and $\beta_2$ appropriately small and $d$ to be large, we can make sure that the probability $\pth{1-d^{-\frac{1}{3}}}^{2(l-2)}\widebar{p}$ is positive while the upper-bound of the gap between the pruned and target networks is small. {We also point out that $\pth{1-d^{-\frac{1}{3}}}^{2(l-2)}$ is the probability with respect to masks and $\widebar{p}$ with respect to weights. As a result, the statement holds ``for almost all masks.''}

\section{Discussion and Future Works} \label{sec:conclusion}

In this paper, we establish theoretical results of model pruning for FCNs and CNNs under different schemes with mild assumptions. For magnitude-based pruning, we show the sub-network $f$ of $F$, which prunes $\lfloor D_k^{1-\alpha} \rfloor$ out of $D_k$ smallest entries of the $k$-th layer of $F$, can approximate the expressive power of $F$ on the unit ball or the unit cube with positive probability. For random pruning, we show that most random masks, which prune $\lfloor D_k^{1-\alpha} \rfloor$ out of $D_k$ entries of the $k$-th layer of $F$, approximate the expressive power of $F$ on the unit ball or the unit cube with positive probability. Our results are enabled by many results from the random matrix theory. The essential building block of our analysis is to iteratively show that the gap between the pruned and target weight matrices and the gap between the outputs of the $k$-th layer of the pruned and target networks are small.

{This is one of the rare theoretical works that discusses pruning of FCNs and CNNs.} We not only cover model pruning of general FCNs, but also establish the results regarding pruning CNNs. {The results can be applied to a variety of other network structures given the fact that almost all networks can be represented by a stack of fully-connected layers.}
Our theorems can provide precious insights to the iterative magnitude-based pruning as suggested by \citet{frankle2018lottery}. For example, our results are able to determine how many weights we can prune in each iteration and the corresponding probability that the gap between the pruned and target networks is smaller than a given error.

As discussed in Appendix \ref{appendix:sec:magnitude}, a direct extension of this work is to consider magnitude-based pruning for general distributions. {Appendix \ref{appendix:subsec:independent} discusses the assumption about the independency of weights in the target network and provide many approaches to alleviate it, but a detailed and strict theoretical study is definitely attractive.}
Besides, we usually use pooling layers and residual connections in practical CNN models. It is interesting to consider the impact of such non-parametric functions and skip connections on the theoretical neural network pruning results. 
Another interesting problem is trying to leverage additional information (e.g., gradients) of the target network to improve our results. Besides, it would be exciting if our results can provide guidance to improve the existing magnitude-based and random pruning algorithms. 

\section*{Acknowledgement} The authors would like to thank Yiqiao Zhong, Dawei Li, and Feiyi Xiao for critically reading the manuscript and helpful discussions.

\bibliography{ref}
\bibliographystyle{icml2021}


\newpage
\appendix
\onecolumn

\section{Supporting Lemmas} \label{appendix:sec:lemma}

We start by presenting various technical lemmas that support the main proofs. Lemma \ref{lemma:squaredUniform_single} shows the expectation of moments of order statistics of the uniform distribution. This lemma is used in the magnitude-based pruning result of FCNs.

\begin{lemma} \label{lemma:squaredUniform_single}
	Given $n$ independent and identically distributed random variables $U_1, \ldots, U_n \sim \calU\qth{-a, a}$ and $X_i = U_i^2, i \in [n]$, we have \begin{align*}
		\Expect X_{(r)} = a^2 \frac{(r+1)r}{(n+2)(n+1)} \quad \textrm{and} \quad
		\Expect X_{(r)}^2 = a^4 \frac{(r+3)(r+2)(r+1)r}{(n+4)(n+3)(n+2)(n+1)},
	\end{align*}
	where $r \le n$ is a constant and $X_{(1)} \le  \cdots \le X_{(n)} $ are order statistics of $X_1, \ldots, X_n$. 
\end{lemma}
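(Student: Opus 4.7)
The plan is to reduce the problem to moments of order statistics of a standard uniform distribution, where the Beta-distributional calculation is textbook. First I would observe that $|U_i|$ is itself uniform on $[0,a]$ (since $\Pr(|U_i|\le y)=\Pr(-y\le U_i\le y)=y/a$ for $y\in[0,a]$), and therefore $V_i \defeq |U_i|/a$ is uniform on $[0,1]$. Since $X_i = U_i^2 = a^2 V_i^2$ and the map $v\mapsto v^2$ is strictly increasing on $[0,1]$, order is preserved under the transformation; consequently $X_{(r)} = a^2 V_{(r)}^2$, so
\begin{equation*}
\Expect X_{(r)} = a^2 \Expect V_{(r)}^2, \qquad \Expect X_{(r)}^2 = a^4 \Expect V_{(r)}^4.
\end{equation*}

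Next I would invoke the standard fact that the $r$-th order statistic of $n$ i.i.d.\ $\calU[0,1]$ samples follows the Beta$(r,n-r+1)$ distribution with density $f_{V_{(r)}}(v) = \frac{n!}{(r-1)!(n-r)!} v^{r-1}(1-v)^{n-r}$. For any nonnegative integer $k$, the $k$-th moment then evaluates via a Beta integral:
\begin{equation*}
\Expect V_{(r)}^k \;=\; \frac{n!}{(r-1)!(n-r)!}\int_0^1 v^{r+k-1}(1-v)^{n-r}\,dv \;=\; \frac{n!(r+k-1)!}{(r-1)!(n+k)!}.
\end{equation*}

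Finally I would substitute $k=2$ and $k=4$ and simplify the resulting ratios of factorials. For $k=2$, the expression telescopes to $\frac{(r+1)r}{(n+2)(n+1)}$, giving the first identity after multiplying by $a^2$. For $k=4$, it simplifies to $\frac{(r+3)(r+2)(r+1)r}{(n+4)(n+3)(n+2)(n+1)}$, yielding the second identity after multiplying by $a^4$. There is no real obstacle here: the only nontrivial step is the initial observation that squaring preserves the ordering once we pass to $|U_i|$, which converts the two-sided uniform into a one-sided one; the remainder is a direct Beta-function computation.
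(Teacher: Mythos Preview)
Your proposal is correct and takes essentially the same approach as the paper: both reduce the moment computation to a Beta integral $\int_0^1 t^{r+2p-1}(1-t)^{n-r}\,dt$, and in fact the paper's substitution $t=\sqrt{x}/a$ is precisely your change of variables to $V_i=|U_i|/a$. The only stylistic difference is that you invoke the Beta$(r,n-r+1)$ law of uniform order statistics upfront and read off its moments, whereas the paper writes the density of $X_{(r)}$ directly and integrates by hand; the mathematical content is identical.
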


\begin{proof}
	Note that for $0\le x \le a^2$, we have $F(x) \triangleq \prob{X_i \le x} = \prob{U_i^2 \le x} = \prob{-\sqrt{x} \le U_i \le \sqrt{x}} = \frac{\sqrt{x}}{a}$. Therefore, the probability density function of $X_{(r)}$ is given by \begin{align*}
		f_{(r)}(x) &= \frac{n!}{(r-1)!(n-r)!} \qth{F(x)}^{r-1}\qth{1-F(x)}^{n-r} F'(x)\\
		&= \frac{n!}{(r-1)!(n-r)!} \qth{\frac{\sqrt{x}}{a}}^{r-1}\qth{1-\frac{\sqrt{x}}{a}}^{n-r} \frac{1}{2a\sqrt{x}}.
	\end{align*}
	
	For $p \in \mathbb{Z}$, we have\begin{align*}
		\Expect X_{(r)}^p &= \int_0^{a^2} x^p f_{(r)}(x) \diff x \\
		&= \int_0^{a^2} x^p \frac{n!}{(r-1)!(n-r)!} \qth{\frac{\sqrt{x}}{a}}^{r-1}\qth{1-\frac{\sqrt{x}}{a}}^{n-r} \frac{1}{2a\sqrt{x}} \diff x  \\
		&= \frac{n!}{(r-1)!(n-r)!} \frac{1}{2a^n} \int_0^{a^2} x^p x^{\frac{r}{2}-1} (a-\sqrt{x})^{n-r} \diff x\\
		&= \frac{n!}{(r-1)!(n-r)!} \frac{1}{2a^n} \int_0^{1} (at)^{2p+r-2} (a-at)^{n-r} 2a^2 t \diff t \\
		&= \frac{a^{2p} n!}{(r-1)!(n-r)!} \int_0^{1} t^{r+2p-1} (1-t)^{n-r} \diff t \\
		&= \frac{a^{2p} n!}{(r-1)!(n-r)!} \frac{ (n-r)! (r+2p-1)!}{(n+2p)!} \\
		&= \frac{(r+2p-1)!n!}{(r-1)!(n+2p)!} a^{2p}.
	\end{align*}
	
	Specifically, we have $\Expect X_{(r)} = a^2 \frac{(r+1)r}{(n+2)(n+1)}$ and $\Expect X_{(r)}^2 = a^4 \frac{(r+3)(r+2)(r+1)r}{(n+4)(n+3)(n+2)(n+1)}$.
\end{proof}

Next, we present some results for sub-Gaussian random matrices. We first give the definition of sub-Gaussian random variables in the following.

\begin{definition}
    A random variable $X \in \reals$ is said to be sub-Gaussian with variance proxy $\sigma^2$ if $\Expect X = 0$ and its moment generating function satisfies \begin{equation*}
        \Expect\exp\qth{sX} \le \exp\pth{\frac{\sigma^2 s^2}{2}}, \quad \forall s \in \reals.
    \end{equation*}
    In this case, we write $X \sim \textsf{subG}(\sigma^2)$.
\end{definition}

Note that $\textsf{subG}(\sigma^2)$ denotes a class of distributions rather than a single distribution. Many common distributions, like Gaussian and any bounded distributions with zero expectation, all fall into this category. If $X \sim \textsf{subG}(\sigma^2)$, then we have $\var\pth{X} = \Expect X^2 \le \sigma^2$.

\begin{lemma}[Proposition 2.4 of \citet{rudelson2010non}] \label{lemma:sub-Gaussian_random_matrix}
	Let $A$ be a $n_1 \times n_2$ random matrix whose entries are independent mean zero sub-Gaussian random variables whose sub-Gaussian variance proxy are bounded by $1$. Then there exists universal positive constants $c$ and $C$ such that, for any $t > 0$ we have \begin{equation}
		\prob{\norm{A}_2 > C\pth{\sqrt{n_1}+\sqrt{n_2}} + t} \le 2e^{-ct^2}.
	\end{equation}
\end{lemma}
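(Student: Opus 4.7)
The plan is to prove the operator norm bound via the standard epsilon-net (covering number) argument, which reduces the control of a supremum over the continuum of unit vectors to control over a discrete net combined with sub-Gaussian concentration of a single bilinear form. This is the classical approach going back to Rudelson and Vershynin.

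First, I would invoke the variational characterization $\|A\|_2 = \sup_{u \in S^{n_1-1}, v \in S^{n_2-1}} \langle u, Av\rangle$. For a fixed $\epsilon \in (0, 1/2)$ (say $\epsilon = 1/4$ to obtain clean constants), standard volumetric arguments give $\epsilon$-nets $\mathcal{N} \subset S^{n_1-1}$ and $\mathcal{M} \subset S^{n_2-1}$ with cardinalities at most $(1 + 2/\epsilon)^{n_1}$ and $(1 + 2/\epsilon)^{n_2}$, and the approximation lemma then yields
\begin{equation*}
\|A\|_2 \;\le\; (1-2\epsilon)^{-1} \max_{u \in \mathcal{N},\, v \in \mathcal{M}} \langle u, A v\rangle.
\end{equation*}
With $\epsilon = 1/4$ this absorbs into a constant factor in front of the max.

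Second, I would analyze $\langle u, Av\rangle = \sum_{i,j} u_i v_j A_{ij}$ for fixed unit vectors $u,v$. Since the $A_{ij}$ are independent mean-zero sub-Gaussian with variance proxy at most $1$, the moment generating function of $\langle u, Av\rangle$ factorizes and is bounded by
\begin{equation*}
\mathbb{E}\exp\!\bigl(s\langle u, Av\rangle\bigr) \;\le\; \prod_{i,j} \exp\!\Bigl(\tfrac{1}{2} s^2 u_i^2 v_j^2\Bigr) \;=\; \exp\!\Bigl(\tfrac{1}{2} s^2 \|u\|_2^2\|v\|_2^2\Bigr) \;=\; \exp\!\bigl(\tfrac{1}{2} s^2\bigr),
\end{equation*}
so $\langle u, Av\rangle \sim \textsf{subG}(1)$ and the Chernoff bound gives $\mathbb{P}(\langle u, Av\rangle > s) \le e^{-s^2/2}$ for every $s \ge 0$.

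Third, I would take a union bound over the product net. For any threshold $s > 0$,
\begin{equation*}
\mathbb{P}\!\Bigl(\max_{u \in \mathcal{N}, v \in \mathcal{M}} \langle u, Av\rangle > s\Bigr) \;\le\; 9^{n_1}\cdot 9^{n_2}\cdot e^{-s^2/2}.
\end{equation*}
Setting $s = C_0(\sqrt{n_1}+\sqrt{n_2}) + t$ for a constant $C_0$ large enough that the exponent $-s^2/2 + (n_1+n_2)\log 9$ is bounded above by $-c t^2 - (n_1+n_2)$ (using $s^2 \ge C_0^2(n_1 + n_2) + t^2$ from $(a+b)^2 \ge a^2 + b^2$ for $a,b\ge 0$), and then translating back to $\|A\|_2$ via the approximation step, yields $\mathbb{P}(\|A\|_2 > C(\sqrt{n_1}+\sqrt{n_2}) + t) \le 2 e^{-c t^2}$ with $C = 2 C_0$ (absorbing the $(1-2\epsilon)^{-1}$ factor) and an absolute $c > 0$; the factor $2$ on the right-hand side comfortably absorbs any slack.

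The main obstacle is essentially bookkeeping: choosing $\epsilon$ and $C_0$ so that the two $(n_1+n_2)$ terms coming from the net cardinality are dominated by the $s^2/2$ term while still leaving a clean $-ct^2$ residual, and verifying that the resulting constants are universal (independent of $n_1, n_2$ and the specific sub-Gaussian distributions). No heavy machinery beyond the covering number estimate and the scalar sub-Gaussian Chernoff bound is required; the bilinearity of $\langle u, Av\rangle$ is what makes this lemma strictly easier than analogous bounds for quadratic forms, where Hanson--Wright-type inequalities would be needed.
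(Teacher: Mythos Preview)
Your proposal is correct and is precisely the standard $\epsilon$-net argument for sub-Gaussian random matrices. Note, however, that the paper does not supply its own proof of this lemma: it is stated as a citation (Proposition~2.4 of \citet{rudelson2010non}) and used as a black box in the proof of Lemma~\ref{lemma:2norm}. Your argument is essentially the proof given in that reference, so there is nothing to compare against within the paper itself; your write-up would serve well as a self-contained justification should one be desired.
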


\begin{lemma} \label{lemma:2norm}
	Let $B$ be a $n_1 \times n_2$ random matrix whose entries are independently and identically distributed following $\calU\qth{-\frac{K}{\sqrt{n}}, \frac{K}{\sqrt{n}}}$, where $K$ is a positive constant and $n = \max\sth{n_1, n_2}$. Then there exist positive constants $c_0$ (depends on $K$) and $\delta_0$ such that  $\norm{B}_2 \le c_0$ with probability at least $1-2e^{-4\delta_0 n}$. 
\end{lemma}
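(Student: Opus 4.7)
The plan is to reduce the claim to Lemma \ref{lemma:sub-Gaussian_random_matrix} by rescaling the matrix so that its entries have sub-Gaussian variance proxy bounded by $1$, and then to choose the deviation parameter $t$ proportional to $\sqrt{n}$ so that the resulting bound on $\|B\|_2$ is dimension-free while the failure probability decays exponentially in $n$.

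First I would observe that a random variable uniform on $[-a,a]$ is bounded and mean zero, so by Hoeffding's lemma it is sub-Gaussian with variance proxy $a^2$. Applied entrywise to $B$, each $B_{ij}$ is sub-Gaussian with variance proxy $K^2/n$. Define the rescaled matrix $A \defeq \frac{\sqrt{n}}{K} B$, whose entries are i.i.d.\ $\calU[-1,1]$, hence independent, mean zero, sub-Gaussian with variance proxy bounded by $1$. Thus $A$ satisfies the hypotheses of Lemma \ref{lemma:sub-Gaussian_random_matrix}, so there exist universal constants $C, c > 0$ such that for every $t>0$,
\begin{equation*}
	\prob{\|A\|_2 > C(\sqrt{n_1}+\sqrt{n_2}) + t} \le 2e^{-ct^2}.
\end{equation*}

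Next I would translate this back to $B$ via $\|B\|_2 = \frac{K}{\sqrt{n}}\|A\|_2$, and use $\sqrt{n_1}+\sqrt{n_2}\le 2\sqrt{n}$ (since $n = \max\{n_1,n_2\}$) to get
\begin{equation*}
	\prob{\|B\|_2 > \tfrac{K}{\sqrt{n}}\bigl(2C\sqrt{n} + t\bigr)} \le 2e^{-ct^2}.
\end{equation*}
Choosing $t = 2\sqrt{n}$ (or any multiple of $\sqrt{n}$) makes the right-hand side equal to $2e^{-4cn}$, and makes the bound on $\|B\|_2$ become $2CK + 2K = 2K(C+1)$, which is a constant depending only on $K$. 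Setting $\delta_0 \defeq c$ and $c_0 \defeq 2K(C+1)$ yields the claim.

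No real obstacle is expected here; the argument is essentially a change of variables plus bookkeeping. The only care needed is in (a) confirming that a bounded mean-zero variable really does fall into the sub-Gaussian class with the stated proxy, which is standard, and (b) ensuring the two constants $c_0$ and $\delta_0$ come out independent of $n_1, n_2$, which is why we absorb the factor $\sqrt{n_1}+\sqrt{n_2}$ using $n=\max\{n_1,n_2\}$ before selecting $t$ to scale with $\sqrt{n}$. If a particular prescribed value of $\delta_0$ were desired, one would simply take $t = 2\sqrt{\delta_0 n/c}$ and absorb the resulting factor $\sqrt{\delta_0/c}$ into the constant $c_0$.
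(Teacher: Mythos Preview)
Your proposal is correct and follows essentially the same approach as the paper: rescale $B$ so the entries are bounded i.i.d.\ sub-Gaussian with proxy at most $1$, invoke Lemma~\ref{lemma:sub-Gaussian_random_matrix}, bound $\sqrt{n_1}+\sqrt{n_2}\le 2\sqrt{n}$, and take $t=2\sqrt{n}$. The only cosmetic difference is that the paper rescales by $\sqrt{3n}/K$ (making the entries $\calU[-\sqrt{3},\sqrt{3}]$ with variance $1$) rather than your $\sqrt{n}/K$, which changes the resulting constant $c_0$ by a harmless factor of $\sqrt{3}$.
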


\begin{proof}
    Let us denote $A = \frac{\sqrt{3n}}{K} B$. Then the entries in $A$ are independently and identically distributed following $\calU\qth{-\sqrt{3}, \sqrt{3}}$, which belongs to the sub-Gaussian distribution with variance proxy 1. Applying Lemma \ref{lemma:sub-Gaussian_random_matrix}, we know that there exist positive constants $C$ and $\delta_0$ such that \begin{align*}
		\prob{\norm{A}_2 > 2C\sqrt{n} + t} \le \prob{\norm{A}_2 > C\pth{\sqrt{n_1} + \sqrt{n_2}} + t} \le 2e^{-\delta_0 t^2}.
	\end{align*}
	
	Taking $t = 2\sqrt{n}$, we have \begin{align*}
		\prob{\norm{B}_2 > \frac{2K}{\sqrt{3}}\pth{C+1}} = \prob{\frac{K}{\sqrt{3n}} \norm{A}_2 > \frac{2K}{\sqrt{3}}\pth{C+1} } = \prob{\norm{A}_2 > 2\sqrt{n}\pth{C + 1}} \le 2e^{-4\delta_0 n},
	\end{align*}
	and therefore \begin{align*}
		\mathbb{P}\pth{\norm{B}_2 \le c_0} \ge 1 - 2e^{-4\delta_0 n},
	\end{align*}
	where $c_0 = \frac{2K}{\sqrt{3}}\pth{C+1} > 0$.
\end{proof}

In the two lemmas above, we assume certain distributions for the entries in the random matrices. The following lemma is more general in the sense that it only requires the entries in the matrices to be independent.

\begin{lemma}[Theorem 2 of \citet{latala2005some}] \label{lemma:latala}
    Let $A$ be a random matrix whose entries $A_{i,j}$ are independent mean zero random variables with finite fourth moment. Then \begin{equation}
        \Expect\norm{A}_2 \le C \qth{\max_i \pth{\sum_j \Expect A_{i, j}^2}^{\frac{1}{2}} + \max_j \pth{\sum_i \Expect A_{i, j}^2}^{\frac{1}{2}} + \pth{\sum_{i,j}\Expect A_{i, j}^4}^{\frac{1}{4}}},
    \end{equation}
    where $C$ is an universal positive constant.
\end{lemma}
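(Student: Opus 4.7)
The plan is to follow the layer-by-layer induction framework sketched in Section~\ref{subsec:scratch_FCN}, specialized to the CNN setting. First I would rewrite (\ref{eq:target_CNN_general_form}) in the fully-connected form $F(x) = W_l^* \sigma(W_{l-1}^* \sigma(\cdots W_2^* \sigma(W_1^* x)))$, where $W_k^* = [B_{s,t}^{(k)}]_{s,t \in [d]} \in \reals^{p^2 d \times p^2 d}$ is the block matrix of doubly block circulants defined in (\ref{eq:CNN_linear_transformation})--(\ref{eq:doubly_block}). Under this re-encoding, a random filter-subnetwork $f$ is obtained by sampling $\lfloor d^{2-\alpha}\rfloor$ index pairs $(s,t) \in [d]\times[d]$ independently and uniformly and zeroing out the corresponding $p^2 \times p^2$ blocks in $W_k^*$, for $1<k<l$. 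Writing $y_k^*(x)$ and $y_k(x)$ for the output of the $k$-th layer of $F$ and $f$, the $L$-Lipschitz assumption with $\sigma(0)=0$ yields the double recursion
\begin{align*}
\norm{y_k^*(x)}_2 &\le L\,\norm{W_k^*}_2\,\norm{y_{k-1}^*(x)}_2,\\
\norm{y_k(x)-y_k^*(x)}_2 &\le L\,\norm{W_k}_2\,\norm{y_{k-1}(x)-y_{k-1}^*(x)}_2 + L\,\norm{W_k-W_k^*}_2\,\norm{y_{k-1}^*(x)}_2,
\end{align*}
initialized by $\norm{y_0^*(x)}_2 \le p_0\sqrt{d_0}$ for $x \in \calC_{p_0^2 d_0}$.

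The entire proof then reduces to two high-probability spectral-norm bounds. For $\norm{W_k^*}_2$, I would invoke the Fourier characterization of doubly block circulant matrices \citep{sedghi2018singular} to compare $\norm{W_k^*}_2$ to the spectral norm of the $d\times d$ matrix of underlying filters, then apply Latała's inequality (Lemma~\ref{lemma:latala}) together with the 2nd and 4th moment bounds in assumption~(iii), and finally Markov's inequality to obtain $\norm{W_k^*}_2 \le p^{\beta_1}$ with probability at least $1 - C_3 q^2 p^{-(1-\beta_1)}$ per layer. Union-bounding over the $l-1$ layers (and separately handling $W_l^*$) produces the $\tfrac{l^2-l-2}{2}C_3 q^2/p^{1-\beta_1} + C_5/p^{1-\beta_1}$ terms appearing in $\widebar p$.

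For the defect $\norm{W_k - W_k^*}_2$, I would first apply the balls-into-bins bound (Lemma~\ref{lemma:ballsinbins}) with $\lfloor d^{2-\alpha}\rfloor$ balls thrown into the $d^2$ block slots; the hypothesis $\alpha \le 2 - (\log(d+1) + \log^{(2)} d)/\log d$ is exactly what is needed for that lemma to yield that every block row and every block column of the defect contains only a mild number of pruned positions. Accounting for rows and columns across the $l-2$ hidden layers is precisely what produces the $(1-d^{-1/3})^{2(l-2)}$ factor in the stated probability. On this sparsity event I would again pass from the circulant operator norm to the filter-level spectral norm, and then apply Latała once more, now to a sparse random filter block matrix whose nonzero entries are confined to the pruned support. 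The reduction in the number of nonzero contributions to the rowwise/columnwise variance sums and to the fourth-moment sum then delivers the decay $\norm{W_k - W_k^*}_2 \le d^{-\beta_2}$ with probability at least $1 - C_4 q^2 p^{-1} d^{-\alpha/4+\beta_2}$, which is where the constraint $\beta_2 \in (0, \alpha/4)$ enters.

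Once these two bounds are in hand, unrolling the recursion gives $\norm{y_k^*(x)}_2 \le (Lp^{\beta_1})^{k} p_0\sqrt{d_0}$ and, by a straightforward geometric-series summation,
\[
\norm{y_{l-1}(x)-y_{l-1}^*(x)}_2 \le p^{-\beta_1} L^{l-1} p_0\sqrt{d}\left[ p^{-\beta_1}(p^{-\beta_1}+d^{-\beta_2})^{l-2} - p^{-(l-1)\beta_1}\right],
\]
which matches \eqref{eq:CNN_upperbound} after passing through the final layer $W_l^*$ (using the same $\norm{W_l^*}_2 \le p^{\beta_1}$ estimate) and taking the supremum over $\calC_{p_0^2 d_0}$; a union bound over all the "good" events assembles the overall probability $\pth{1-d^{-1/3}}^{2(l-2)}\widebar p$. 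The main obstacle I expect is the defect-norm step, which requires simultaneously (a) exploiting balls-into-bins concentration to argue that $W_k - W_k^*$ has a structured sparse block support, (b) converting the operator norm of the block-circulant defect to a filter-level spectral-norm problem with only a controllable loss in $q$ and $p$, and (c) applying Latała to the resulting sparse random filter matrix so that the moment bounds in assumption~(iii) align to yield precisely the $d^{-\beta_2}$ decay and the stated $C_4 q^2 p^{-1} d^{-\alpha/4+\beta_2}$ probability term. The remaining bookkeeping---unrolling the recursion, handling the first and last layers, and combining the good events by a union bound---is routine once these ingredients are in place.
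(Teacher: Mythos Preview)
Your proposal does not address the stated lemma at all. The statement you were asked to prove is Lemma~\ref{lemma:latala}, Lata{\l}a's spectral-norm inequality, which the paper does not prove: it is quoted verbatim as Theorem~2 of \citet{latala2005some} and used as a black box. What you have written is instead a proof outline for Theorem~\ref{thm:CNN}, the CNN random-pruning result, in which Lata{\l}a's inequality appears only as an ingredient. Nothing in your write-up---the layer-by-layer recursion, the balls-into-bins argument, the Fourier reduction via Lemma~\ref{lemma:sedghi}, the assembly of the probability $(1-d^{-1/3})^{2(l-2)}\widebar p$---bears on why the inequality
\[
\Expect\norm{A}_2 \le C\Bigl[\max_i\bigl(\textstyle\sum_j \Expect A_{i,j}^2\bigr)^{1/2} + \max_j\bigl(\textstyle\sum_i \Expect A_{i,j}^2\bigr)^{1/2} + \bigl(\textstyle\sum_{i,j}\Expect A_{i,j}^4\bigr)^{1/4}\Bigr]
\]
holds for an arbitrary matrix with independent mean-zero entries. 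A proof of Lemma~\ref{lemma:latala} would require entirely different machinery (Lata{\l}a's original argument uses a chaining/comparison scheme for the supremum over the unit sphere together with moment estimates for sums of independent random variables), none of which your outline touches.

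If your actual target was Theorem~\ref{thm:CNN}, then your plan does match the paper's proof in Appendix~\ref{appendix:proof3} essentially step for step: the same double induction on $\norm{y_k^*(x)}_2$ and $\norm{y_k(x)-y_k^*(x)}_2$, the same use of Lemma~\ref{lemma:sedghi} to reduce $\norm{W_k^*}_2$ and $\norm{W_k-W_k^*}_2$ to filter-level $d\times d$ matrices, Lata{\l}a plus Markov for each, and Lemma~\ref{lemma:ballsinbins} for the sparsity event. Two minor slips there: the high-probability bound is $\norm{W_k^*}_2 \le p^{-\beta_1}$ (not $p^{\beta_1}$), and correspondingly $\norm{y_k^*(x)}_2 \le (Lp^{-\beta_1})^k p_0\sqrt{d_0}$; also, the balls-into-bins lemma is applied with $d$ bins (block rows, then block columns) rather than $d^2$ slots.
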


The proofs of the main theorems in this paper heavily rely on Lemmas \ref{lemma:2norm} and \ref{lemma:latala}. Note that there are some universal constants in the statement of these two lemmas that all appear in the bounds of the main theorems. Thus we give a numerical study of these two lemmas in Appendix \ref{appendix:subsec:uniform} and \ref{appendix:subsec:latala}.


\begin{lemma}[Chernoff Bound] \label{lemma:chernoff_bound}
    Suppose $X_1, \ldots, X_m$ are independent random variables taking values in $\sth{0, 1}$. Let $X \defeq \sum_{i=1}^m X_i$ and $\mu \defeq \Expect X$. Then for any $\delta > 0$, we have \begin{equation}
        \prob{X \ge (1+\delta)\mu } \le \exp\pth{-\frac{\delta^2}{1+\delta}\mu}.
    \end{equation}
\end{lemma}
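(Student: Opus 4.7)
The plan is to follow the classical Chernoff argument, i.e., apply Markov's inequality to an exponential moment and then optimize over the free parameter. Concretely, for any $t > 0$ the event $\{X \ge (1+\delta)\mu\}$ coincides with $\{e^{tX} \ge e^{t(1+\delta)\mu}\}$, so Markov's inequality yields
\begin{equation*}
    \prob{X \ge (1+\delta)\mu} \;\le\; \frac{\Expect e^{tX}}{e^{t(1+\delta)\mu}}.
\end{equation*}
Since the $X_i$'s are independent, $\Expect e^{tX} = \prod_{i=1}^m \Expect e^{tX_i}$, and for a $\{0,1\}$-valued variable with $p_i \defeq \Expect X_i$ we have $\Expect e^{tX_i} = 1 + p_i(e^t - 1)$.

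Next I would use the elementary inequality $1 + x \le e^x$ with $x = p_i(e^t-1)$ to obtain $\Expect e^{tX_i} \le \exp\bigl(p_i(e^t-1)\bigr)$. Multiplying and recalling $\mu = \sum_i p_i$, the product collapses to $\Expect e^{tX} \le \exp\bigl(\mu(e^t - 1)\bigr)$. Substituting back,
\begin{equation*}
    \prob{X \ge (1+\delta)\mu} \;\le\; \exp\!\Bigl(\mu(e^t - 1) \;-\; t(1+\delta)\mu\Bigr),
\end{equation*}
and this bound holds for every $t > 0$.

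I would then optimize by taking $t = \log(1+\delta)$, which is the stationary point of the exponent; this collapses the right-hand side to $\bigl(e^\delta / (1+\delta)^{1+\delta}\bigr)^{\mu}$. What remains is a purely analytic step: verify the scalar inequality
\begin{equation*}
    \delta - (1+\delta)\log(1+\delta) \;\le\; -\frac{\delta^2}{1+\delta} \qquad \text{for all } \delta > 0,
\end{equation*}
since exponentiating it (times $\mu$) gives exactly the claimed bound $\exp\!\bigl(-\delta^2\mu/(1+\delta)\bigr)$.

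The main (and really the only) obstacle is this last elementary inequality; the probabilistic content of the lemma is entirely encoded in the MGF bound. I would dispatch the analytic step by setting $h(\delta) \defeq (1+\delta)\log(1+\delta) - \delta - \delta^2/(1+\delta)$, checking $h(0) = 0$, and studying $h'$ to show $h \ge 0$ on $(0,\infty)$ (equivalently, by Taylor-expanding $\log(1+\delta)$ or by comparing series coefficients). If the tight form turns out to require denominator $2+\delta$ rather than $1+\delta$, exactly the same strategy (Markov $\to$ MGF product $\to$ optimize $t = \log(1+\delta)$) applies and only the final scalar inequality changes; no other step in the outline is affected.
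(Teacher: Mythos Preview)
The paper states this lemma without proof, treating it as the classical Chernoff bound and invoking it directly in the proof of Lemma~\ref{lemma:ballsinbins}. Your proposal is exactly the standard argument, and every step through the choice $t=\log(1+\delta)$ is correct, yielding
\[
\prob{X \ge (1+\delta)\mu}\;\le\;\exp\!\bigl(\mu[\,\delta-(1+\delta)\log(1+\delta)\,]\bigr).
\]

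Your hedge in the final paragraph is well founded, however: the scalar inequality
\[
\delta-(1+\delta)\log(1+\delta)\;\le\;-\frac{\delta^2}{1+\delta}
\]
that you propose to verify is \emph{false} on an interval $(0,c)$ with $c$ between $2$ and $3$. For instance at $\delta=1$ the left-hand side equals $1-2\log 2\approx -0.386$ while the right-hand side is $-0.5$; at $\delta=2$ one gets $-1.296$ versus $-4/3$. The inequality that does hold for all $\delta>0$ has $2+\delta$ in the denominator, and your derivation proves that version with no change to the probabilistic steps. The bound as printed therefore appears to be a typo for the standard form $\exp\!\bigl(-\mu\delta^2/(2+\delta)\bigr)$; indeed, for sums of many Bernoulli variables with small success probability and large $\mu$, the true upper-tail probability at $\delta=1$ already exceeds $\exp(-\mu/2)$, so no argument can recover the stated constant.
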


The next lemma results from the famous problem ``balls-into-bins.'' This is a classic problem in probability theory that has many applications in computer science. See the survey paper by \citet{richa2001power} for more details.

\begin{lemma} \label{lemma:ballsinbins}
    Consider the problem of throwing $N$ balls independently and uniformly at random into $n$ bins. Let $X_j$ be the random variable that counts the number of balls in the $j$-th bin, $1 \le j \le n$. If $N \ge n \log(n)$, then with probability at least $1-n^{-\frac{1}{3}}$ we have $\max_{j\in[n]} X_j \le \frac{3N}{n}$.
\end{lemma}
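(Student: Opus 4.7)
The plan is to reduce the statement to a Chernoff concentration bound applied to each bin count, followed by a simple union bound over the $n$ bins. First, I would rewrite each bin count as a sum of independent indicators: letting $Y_{i,j} = \mathbf{1}\{\text{ball } i \text{ lands in bin } j\}$, we have $X_j = \sum_{i=1}^{N} Y_{i,j}$ with each $Y_{i,j} \sim \textrm{Bernoulli}(1/n)$ independently across $i$. Linearity of expectation gives $\mu \defeq \mathbb{E}[X_j] = N/n$.

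Next, I would invoke Lemma \ref{lemma:chernoff_bound} with the choice $\delta = 2$, which gives the clean upper tail estimate
\begin{equation*}
    \mathbb{P}\pth{X_j \ge \tfrac{3N}{n}} = \mathbb{P}\pth{X_j \ge (1+\delta)\mu} \le \exp\pth{-\tfrac{\delta^2}{1+\delta}\mu} = \exp\pth{-\tfrac{4N}{3n}}.
\end{equation*}
A union bound over $j \in [n]$ then yields $\mathbb{P}(\max_{j\in[n]} X_j \ge 3N/n) \le n\exp(-4N/(3n))$.

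Finally, I would plug in the hypothesis $N \ge n\log(n)$ to simplify the exponent: $\exp(-4N/(3n)) \le \exp(-\tfrac{4}{3}\log n) = n^{-4/3}$, so the union bound collapses to $n \cdot n^{-4/3} = n^{-1/3}$. Taking the complementary event yields $\max_{j\in[n]} X_j \le 3N/n$ with probability at least $1 - n^{-1/3}$.

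There is no real obstacle here; this is a textbook Chernoff-plus-union-bound argument. The only thing to verify is that the multiplicative slack $\delta = 2$ is the right tuning: it produces the target factor $3N/n$ in the event and, combined with the hypothesis $N \ge n\log n$, exactly matches the single logarithmic factor that the union bound over $n$ bins demands in order to land at the stated failure probability $n^{-1/3}$.
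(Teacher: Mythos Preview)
Your proposal is correct and essentially identical to the paper's own proof: both decompose $X_j$ as a sum of independent Bernoulli indicators, apply the Chernoff bound (Lemma~\ref{lemma:chernoff_bound}) with $\delta=2$ to get $\mathbb{P}(X_j \ge 3N/n) \le n^{-4/3}$ via the hypothesis $N \ge n\log n$, and finish with a union bound over the $n$ bins.
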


\begin{proof}
    Let $X_{ij}$ be the indicator random variable for the event that the $i$-th ball falls into the $j$-th bin, $i\in [N], j \in [n]$. Then $\Expect X_j = \sum_{i=1}^N \Expect X_{ij} = \frac{N}{n}, j \in [n]$. Note that $\mu = \frac{N}{n} \ge \log(n)$, applying Lemma \ref{lemma:chernoff_bound} with $\delta = 2$, we have \begin{align*}
        \prob{X_j \ge 3\frac{N}{n}} \le \exp \pth{-\frac{4}{3}\mu} \le \exp \pth{-\frac{4}{3}\log(n)} = n^{-\frac{4}{3}}.
    \end{align*}
    By the union bound we have \begin{align*}
        \prob{\max_{j\in[n]} X_j\ge \frac{3N}{n}} = \prob{\bigcup_{j\in[n]} \sth{X_j\ge \frac{3N}{n}}}\le \sum_{j\in[n]} \prob{X_j\ge \frac{3N}{n}} \le n \cdot n^{-\frac{4}{3}} = n^{-\frac{1}{3}},
    \end{align*}
    and therefore \begin{align*}
        \prob{\max_{j\in[n]} X_j \le \frac{3N}{n}} \ge 1 - n^{-\frac{1}{3}}.
    \end{align*}
\end{proof}

The last lemma focuses on the singular values of the matrix representation of convolutional operators. Given a convolutional tensor $\calF \in \reals^{d\times d \times q\times q}$, the corresponding matrix representation $W$ of $\calF$ has dimension $p^2d \times p^2d$, where $p$ is the width and height of the input feature map. Applying the traditional singular value decomposition methods on such a large matrix is usually time-consuming and computationally-inefficient. \citet{sedghi2018singular} provide tools to represent the set of singular values of $W$ by the joint of sets of singular values of many smaller sub-matrices. This is done by carefully analyzing the properties of ciuculant-type matrices. We use the following lemma from \citet{sedghi2018singular} to calculate the $L_2$ norm of the weight matrices in CNNs. 

\begin{lemma}[Theorem 6 of \citet{sedghi2018singular}] \label{lemma:sedghi}
    Let $\omega = \exp\pth{2\pi i/p}$, where $i=\sqrt{-1}$ and $S$ be the $p \times p$ matrix that represents the discrete Fourier transform $$S \defeq \begin{bmatrix}
        \omega^{1\times 1} & \cdots & \omega^{1\times p} \\
        \vdots & \ddots & \vdots \\
        \omega^{p \times 1} & \cdots & \omega^{p \times p}
    \end{bmatrix}.$$
    Given a tensor $\calF \in \reals^{d\times d \times q\times q}$, let us denote $K \in \reals^{d\times d \times p \times p}$ as defined in \eqref{eq:calF} and we denote $W^* \in \reals^{dp^2 \times dp^2}$ as the matrix encoding the linear transformation computed by the convolutional layer parameterized by $K$, as defined in \eqref{eq:CNN_linear_transformation} -- \eqref{eq:doubly_block}. Let $P^{(u,v)}$ be the $d \times d$ matrix such that the $(s, t)$-th element of $P^{(u,v)}$ is equal to the $(u, v)$-th element of $S^T K_{s,t,:,:} S, u, v \in [p], s, t, \in [d]$, or equivalently \begin{equation*}
        P^{(u,v)}_{s,t} = \pth{S^T K_{s,t,:,:} S}_{u,v}, \quad u, v \in [p], s, t, \in [d].
    \end{equation*} Then \begin{equation*}
        \norm{W^*}_2 = \max_{u,v \in [p]} \sth{\norm{P^{(u,v)}}_2}.
    \end{equation*}
\end{lemma}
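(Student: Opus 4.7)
The plan is to prove the spectral-norm identity by invoking the classical fact that every doubly block circulant matrix is simultaneously unitarily diagonalized by the two-dimensional discrete Fourier transform, and then to reshuffle the resulting block structure of $W^*$ into a block-diagonal form whose diagonal blocks are exactly the matrices $P^{(u,v)}$.

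First I would set $Q \defeq S \otimes S$, the $p^2 \times p^2$ matrix implementing the 2D DFT (normalized by $1/p$ so that $Q$ is unitary), and recall the standard fact that any $p \times p$ circulant $\textrm{circ}(a)$ is diagonalized by $S$, with diagonal entries read off from $S^T a$. Extending this through the Kronecker structure visible in the block layout \eqref{eq:doubly_block}, I would verify that $Q^{-1} B_{s,t} Q = D_{s,t}$, where $D_{s,t}$ is the diagonal $p^2 \times p^2$ matrix whose $((u-1)p + v)$-th diagonal entry equals $\pth{S^T K_{s,t,:,:} S}_{u,v}$. This step just repeats, block-by-block, the computation by which the 2D DFT diagonalizes a circular 2D convolution on a $p \times p$ grid.

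Next I would conjugate $W^*$ block-wise: setting $\widetilde{W} \defeq (I_d \otimes Q^{-1}) W^* (I_d \otimes Q)$ and using the block decomposition \eqref{eq:CNN_linear_transformation}, I get $\widetilde{W} = \sum_{s,t \in [d]} E_{s,t} \otimes D_{s,t}$, where $E_{s,t}$ is the $d \times d$ standard basis matrix. Because all the $d^2$ blocks $D_{s,t}$ are diagonal in the same basis, the perfect shuffle (commutation) permutation $\Pi$ that swaps the two Kronecker factors carries $\widetilde{W}$ to $\Pi \widetilde{W} \Pi^T = \sum_{s,t} D_{s,t} \otimes E_{s,t}$, which is block-diagonal with $p^2$ blocks of size $d \times d$: its $((u-1)p + v)$-th diagonal block has $(s,t)$-entry $(D_{s,t})_{(u-1)p+v,\,(u-1)p+v} = \pth{S^T K_{s,t,:,:} S}_{u,v}$, i.e.\ precisely $P^{(u,v)}$.

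Finally, since $I_d \otimes Q$ and $\Pi$ are unitary, the operator norm is unitarily invariant, and the spectral norm of a block-diagonal matrix is the maximum of its block spectral norms, I would conclude $\norm{W^*}_2 = \norm{\widetilde{W}}_2 = \max_{u,v \in [p]} \norm{P^{(u,v)}}_2$. The main obstacle I anticipate is purely bookkeeping: checking that the orderings and normalizations baked into \eqref{eq:calF}--\eqref{eq:doubly_block} and into the definition $S_{u,v} = \omega^{uv}$ line up, so that the diagonal entries of $Q^{-1} B_{s,t} Q$ really do enumerate $(S^T K_{s,t,:,:} S)_{u,v}$ in the intended order rather than some permuted, transposed, or complex-conjugated variant; once that alignment is nailed down, the reduction to block-diagonal form and the final unitary-invariance step are immediate.
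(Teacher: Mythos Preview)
The paper does not supply its own proof of this lemma: it is stated as Theorem~6 of \citet{sedghi2018singular} and used as a black box, so there is no in-paper argument against which to compare your proposal line by line. That said, your outline is exactly the standard proof of the cited result: simultaneous diagonalization of every doubly block circulant $B_{s,t}$ by the 2D DFT $Q=S\otimes S$, followed by the commutation (perfect-shuffle) permutation to collect the $p^2$ diagonal entries across $(s,t)$ into the $d\times d$ blocks $P^{(u,v)}$, and then unitary invariance of the operator norm. This is precisely how \citet{sedghi2018singular} argue it, so your approach is correct and matches the source the paper relies on.

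The one genuine caution you already flagged is the right one: with the paper's convention $S_{u,v}=\omega^{uv}$ (no $1/\sqrt{p}$), $S$ is not unitary but satisfies $S^*S=pI$, so you must either insert the $1/p$ normalization into $Q$ as you propose or track the scalar through and observe it cancels when comparing $\norm{W^*}_2$ to $\norm{P^{(u,v)}}_2$ (since both sides pick up the same factor under the same conjugation). Once that is pinned down the rest is indeed immediate.
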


\section{Proofs} \label{appendix:sec:proof}

In this section, we provide the full proof of Theorems \ref{thm:FCN_uniform}, \ref{thm:FCN_general}, and \ref{thm:CNN}. Note that the proofs of these three theorems are similar. Theorem \ref{thm:FCN_general} exhibits all ideas and thus it is presented in full. The proofs of the other theorems show the difference.

\subsection{Proof of Theorem \ref{thm:FCN_general}}

\begin{proof}
    For any $x \in \calB_{d_0}$ and $1 \le k < l$, we denote $y_k(x) \defeq \sigma_k \pth{W_k \sigma_{k-1} \pth{\cdots W_2 \sigma_1 \pth{W_1 x}}}$ and $y_k^*(x) \defeq \sigma_k\pth{W_k^* \sigma_{k-1} \pth{\cdots W_2^* \sigma_1 \pth{W_1^* x}}}$ as the output of the $k$-th layer of $f$ and $F$, respectively. 
    
    Recall that we set $M_1$ and $M_l$ as the all 1 matrices, i.e. $W_1 = W_1^*$ and $W_l = W_l^*$. For each $1 < k < l$, we order the entries of $W_k^*$ by their absolute values such that \begin{equation*}
        \abs{(W_k^*)_{i_1^k, j_1^k}} \le \abs{(W_k^*)_{i_2^k, j_2^k}} \le \cdots \le \cdots \le \abs{(W_k^*)_{i_{D_k}^k, j_{D_k}^k}}
    \end{equation*}
    and denote $\calI_k \defeq \sth{ (i_s^k, j_s^k): 1 \le s \le \lfloor D_k^{1-\alpha} \rfloor }$. We set $(M_k)_{i,j} = 0 $ if $(i,j) \in \calI_k$, and $(M_k)_{i,j} = 1 $ otherwise. We further denote two events \begin{align*}
        A^{(k)}_{r} \defeq \sth{\textrm{the number of zero entries in each row of } M_k \textrm{ is at most } 3\lfloor D_k^{1-\alpha}\rfloor/d_{k}}, \\  
        A^{(k)}_{c} \defeq \sth{\textrm{the number of zero entries in each column of } M_k \textrm{ is at most } 3\lfloor D_k^{1-\alpha}\rfloor/d_{k-1}}
    \end{align*} and set event $A^{(k)} \defeq A^{(k)}_{r} \bigcap A^{(k)}_{c}$. Note that \eqref{eq:assumption_balls_in_bins_for_alpha_1} and \eqref{eq:assumption_balls_in_bins_for_alpha_2} guarantee that $\lfloor D_k^{1-\alpha} \rfloor \ge d_{k}\log\pth{d_{k}}$ and $D_k^{1-\alpha} \ge d_{k-1}\log\pth{d_{k-1}}$, respectively, and the events $A^{(k)}_{r} $ and $A^{(k)}_{c}$ are independent. Thus by Lemma \ref{lemma:ballsinbins}, we have \begin{equation*}
        \prob{A^{(k)}} = \prob{A^{(k)}_{r} \bigcap A^{(k)}_{c}} = \prob{A^{(k)}_{r}}\prob{A^{(k)}_{c}} \ge\pth{1-d_{k}^{-\frac{1}{3}}}\pth{1-d_{k-1}^{-\frac{1}{3}}} \ge \pth{1-d^{-\frac{1}{3}}}^2.
    \end{equation*}
    Further, for $A \defeq A^{(2)} \bigcap \cdots \bigcap A^{(l-1)}$, we have $\prob{A} = \prod_{k=2}^{l-1} \prob{A^{(k)}} \ge \pth{1-d^{-\frac{1}{3}}}^{2(l-2)}$ where the probability is taken over the randomness of masks (and is not over the randomness of weights in $W_k^*$'s).
    
    Let us assume that \begin{equation} \label{eq:FCN_general_d_assumption}
        d^{-\frac{1}{4}\alpha} \le \min\sth{N_2, \ldots, N_{l-1}, \frac{\epsilon}{\pth{2^{l-2}-1} L_{1:l-1} N_{1:l}}}.
    \end{equation} We use induction to show that, for any $x \in \calB_{d_0}$ and $1\le k < l$, \begin{enumerate}[(I)]
        \item with probability at least $\prod_{i=1}^{k}\pth{1 - \delta_i}$, we have $\norm{y_k^*(x)}_2 \le L_{1:k} N_{1:k}$,
        \item with probability at least $1 - (k-1) c_2 d^{-\frac{\alpha}{4}} - 2(k-1) d^{-\frac{1}{3}} - \sum_{i=1}^{k} (k+1-i)\delta_i$, we have $\norm{{\pth{y_k(x) \big| A}} - y_k^*(x)}_2 \le \pth{2^{k-1}-1} d^{-\frac{1}{4}\alpha} L_{1:k} N_{1:k}$ for some positive constant $c_2$ specified later\footnote{Note that in the induction statement (II), the probability (and the expectations in the following context) is taken over the randomness of weights but not the masks. The random variable $\norm{\pth{y_k(x) \big| A} - y_k^*(x)}_2$ is equivalent to $\norm{y_k(x) - y_k^*(x)}_2 \Big| A$, and the statement can also be written as $\prob{\sth{\norm{y_k(x) - y_k^*(x)}_2 \le \pth{2^{k-1}-1} d^{-\frac{1}{4}\alpha} L_{1:k} N_{1:k}} \Big| A} \ge 1 - (k-1) c_2 d^{-\frac{\alpha}{4}} - 2(k-1) d^{-\frac{1}{3}} - \sum_{i=1}^{k} (k+1-i)\delta_i$.}.
    \end{enumerate}
    
    The case of $k=1$ is as follows. Note that for any vector $v$, we have $\norm{\sigma_1(v)}_2 = \norm{\sigma_1(v)- \sigma_1(0)}_2 \le L_1 \norm{v - 0}_2 = L_1\norm{v}_2$. Thus, $\norm{y_1^*(x)}_2 =\norm{\sigma_1\pth{W_1^*x}}_2 \le L_1\norm{W_1^*x}_2 \le L_1 \norm{W_1^*}_2 \norm{x}_2 \le L_1 N_1 $ with probability at least $1 - \delta_1$. Further, we have $y_1(x) = \sigma_1\pth{W_1x} = \sigma_1\pth{W_1^*x} = y_1^*(x)$, and thus $\norm{y_1(x) - y_1^*(x)}_2 = 0$.
    
    Suppose the statement holds for $1 \le k < l-1 $; we consider the case of $k+1$. Note that the events $\sth{\norm{W_{k+1}^*}_2 \le N_{k+1}}$ and $\sth{\norm{y_{k}^*(x)}_2 \le L_{1:k} N_{1:k}}$ are independent. By induction statement (I), with probability at least $$\prob{\sth{\norm{W_{k+1}^*}_2 \le N_{k+1}} \bigcap \sth{\norm{y_{k}^*(x)}_2 \le L_{1:k} N_{1:k}}} = \prob{\norm{W_{k+1}^*}_2 \le N_{k+1}} \cdot \prob{\norm{y_{k}^*(x)}_2 \le L_{1:k} N_{1:k}} \ge \prod_{i=1}^{k}\pth{1 - \delta_i},$$ we have \begin{align*}
    	\norm{y_{k+1}^*(x)}_2 &= \norm{\sigma_{k+1} \pth{W_{k+1}^* y_k^*(x)}}_2 \le L_{k+1} \norm{W_{k+1}^* y_k^*(x)}_2 \le L_{k+1} \norm{W_{k+1}^* }_2 \norm{y_k^*(x)}_2 \\
    	&\le L_{k+1} N_{k+1} \cdot L_{1:k} N_{1:k} = L_{1:k+1} N_{1:k+1},
    \end{align*} which shows (I) in the induction statement. 
    
    
    
    
    We next show that (II) holds. Under event $A$, the number of non-zero entries in each row of $\calW_{k+1}$ is at most $3 \lfloor D_{k+1}^{1-\alpha}\rfloor/d_{k+1}$. Thus we have \begin{align} \label{eq:latala1}
        \max_{i\in [d_{k+1}]} \pth{\sum_{j\in[d_k]} \condexp{}{\pth{\calW_{k+1}}_{i, j}^2}{A}}^{\frac{1}{2}} \le \pth{ \frac{3\lfloor D_{k+1}^{1-\alpha} \rfloor}{d_{k+1}} \cdot \frac{K_1}{\max\sth{d_{k+1}, d_k}} }^{\frac{1}{2}} \le \pth{ \frac{3K_1D_{k+1}^{1-\alpha}}{d_{k+1}d_k}}^{\frac{1}{2}} = \sqrt{3K_1} D_{k+1}^{-\frac{\alpha}{2}} \le \sqrt{3K_1} d^{-\alpha},
    \end{align} and similarly, \begin{align} \label{eq:latala2}
        \max_{j\in [d_k]} \pth{\sum_{i\in[d_{k+1}]} \condexp{}{\pth{\calW_{k+1}}_{i, j}^2}{A}}^{\frac{1}{2}} \le \sqrt{3K_1} d^{-\alpha}.
    \end{align}

    In addition, since there are at most $\lfloor D_{k+1}^{1-\alpha}\rfloor$ non-zero entries in $\calW_{k+1}\defeq W_{k+1} - W_{k+1}^*$, we have \begin{align} \label{eq:latala3}
        \sum_{i\in [d_{k+1}],j\in [d_k]} \condexp{}{\abs{\pth{\calW_{k+1}}_{i,j}}^4}{A} \le \lfloor D_{k+1}^{1-\alpha}\rfloor \cdot \frac{K_2}{\max\sth{d_{k+1}, d_k}^2} \le D_{k+1}^{1-\alpha} \frac{K_2}{D_{k+1}} = K_2 D_{k+1}^{-\alpha} \le K_2 d^{-2\alpha}.
    \end{align}
    
    Combining \eqref{eq:latala1} -- \eqref{eq:latala3} and Lemma \ref{lemma:latala}, there exists a universal positive constant $c_1$ such that
    \begin{align}
        \condexp{}{\norm{\calW_{k+1}}_2}{A} &\le c_1 \qth{\max_{i\in [d_{k+1}]} \pth{\sum_{j\in [d_k]} \Expect \pth{\calW_{k+1}}_{i, j}^2}^{\frac{1}{2}} + \max_{j\in [d_k]} \pth{\sum_{i\in [d_{k+1}]} \Expect \pth{\calW_{k+1}}_{i, j}^2}^{\frac{1}{2}} + \pth{\sum_{i\in [d_{k+1}],j\in [d_k]}\Expect \pth{\calW_{k+1}}_{i, j}^4}^{\frac{1}{4}}} \notag\\
        &\le c_1 \qth{\sqrt{3K_1} d^{-\alpha} + \sqrt{3K_1} d^{-\alpha} + \pth{K_2 d^{-2\alpha}}^{\frac{1}{4}}} \label{eq:apply_latala}\\
        &\le c_2 d^{-\frac{\alpha}{2}} \notag,
    \end{align} where $c_2 = c_1\pth{2\sqrt{3K_1} + K_2^\frac{1}{4}}$. 
    
    
    By the Markov's inequality, for all $t > 0$ we have \begin{align*}
        \prob{\sth{\norm{\calW_{k+1}}_2 \ge t} \Big| A} \le \frac{\condexp{}{\norm{\calW_{k+1}}_2}{A}}{t}.
    \end{align*}
    Taking $t = d^{-\frac{\alpha}{4}}$, we have \begin{align*}
        \prob{\sth{\norm{\calW_{k+1}}_2 \le d^{-\frac{\alpha}{4}}} \Big| A} \ge 1 - c_2 d^{-\frac{\alpha}{4}}.
    \end{align*}

	By induction statement (II) and the fact that $\prob{\bigcap_{i=1}^s A_i} \ge \sum_{i=1}^s\prob{A_i} - (s-1)$, with probability at least\footnote{We use the fact that, for any $a_1, \cdots, a_s \in (0, 1)$, we have $\prod_{i=1}^s (1-a_i) \ge 1 - \sum_{i=1}^s a_i$. This inequality is frequently used in the following proofs.}  \begin{align}
	    & \; \mathbb{P} \bigg( \sth{\norm{\pth{W_{k+1} \big| A} - W_{k+1}^*}_2 \le d^{-\frac{\alpha}{4}}} \bigcap \sth{\norm{W_{k+1}^*}_2 \le N_{k+1}} \bigcap \sth{\norm{y_k^*(x)}_2 \le L_{1:k} N_{1:k}} \notag\\
	    & \quad \quad \quad\quad \quad \quad\quad \quad \quad\quad \quad \quad\quad \quad \quad \quad \bigcap \sth{\norm{\pth{y_k(x)\big| A} - y_k^*(x)}_2 \le \pth{2^{k-1}-1} d^{-\frac{1}{4}\alpha} L_{1:k} N_{1:k}} \bigg) \label{eq:induct_2_1} \\
	    \ge & \; \pth{1 - c_2 d^{-\frac{\alpha}{4}}} + \pth{1-\delta_{k+1}} + \prod_{i=1}^k \pth{1-\delta_i} + \pth{1 - (k-1) c_2 d^{-\frac{\alpha}{4}} - \sum_{i=1}^{k} (k+1-i)\delta_i} - 3 \notag \\
	    \ge & \; \pth{1 - c_2 d^{-\frac{\alpha}{4}}} + \pth{1-\delta_{k+1}} +  \pth{1-\sum_{i=1}^k \delta_i} + \pth{1 - (k-1) c_2 d^{-\frac{\alpha}{4}} - \sum_{i=1}^{k} (k+1-i)\delta_i} - 3 \notag \\
	    = & \; 1 - k c_2 d^{-\frac{\alpha}{4}} - \sum_{i=1}^{k+1} (k+2-i)\delta_i, \notag 
	\end{align} we have \begin{align}
		&\norm{{\pth{y_{k+1}(x) \big| A}} - y_{k+1}^*(x)}_2 \notag\\
        =\; & \norm{\sigma_{k+1} \pth{{\pth{W_{k+1}\big| A}} y_k(x)} - \sigma_{k+1} \pth{W_{k+1}^* y_k^*(x)}}_2 \notag\\
		\le\;& L_{k+1} \norm{\pth{W_{k+1}\big| A} y_k(x) - W_{k+1}^* y_k^*(x)}_2 \notag \\
		\le\;& L_{k+1} \qth{\norm{\pth{W_{k+1}\big| A} y_k(x) - W_{k+1} y_k^*(x)}_2 + \norm{\pth{W_{k+1}\big| A} y_k^*(x) - W_{k+1}^* y_k^*(x)}_2}\notag \\
		\le\;& L_{k+1} \qth{\norm{\pth{W_{k+1}\big| A}}_2\norm{y_k(x) - y_k^*(x)}_2  + \norm{\pth{W_{k+1}\big| A} - W_{k+1}^*}_2 \norm{y_k^*(x)}_2 } \notag\\
		\le\;& L_{k+1} \qth{\pth{\norm{W_{k+1}^*}_2 + \norm{\pth{W_{k+1}\big| A} - W_{k+1}^*}_2} \norm{y_k(x) - y_k^*(x)}_2 + \norm{\pth{W_{k+1}\big| A}- W_{k+1}^*}_2 \norm{y_k^*(x)}_2} \notag\\
		\le\;& L_{k+1} \qth{\pth{N_{k+1} + d^{-\frac{1}{4}\alpha}}\norm{{\pth{y_{k}(x)\big| A}} - y_k^*(x)}_2 + d^{-\frac{1}{4}\alpha} L_{1:k} N_{1:k}} \label{eq:23} \\
		\le\;& L_{k+1} \qth{2N_{k+1} \norm{\pth{y_{k}(x)\big| A} - y_k^*(x)}_2 + d^{-\frac{1}{4}\alpha} L_{1:k} N_{1:k}} \notag\\
		\le\;& L_{k+1} \qth{2N_{k+1} \cdot \pth{2^{k-1}-1} d^{-\frac{1}{4}\alpha} L_{1:k} N_{1:k} + d^{-\frac{1}{4}\alpha} L_{1:k} N_{1:k}} \notag\\
		\le\;& L_{k+1} \qth{2N_{k+1} \cdot \pth{2^{k-1}-1} d^{-\frac{1}{4}\alpha} L_{1:k} N_{1:k} + d^{-\frac{1}{4}\alpha} L_{1:k} N_{1:k+1}} \notag\\
		= \; & \pth{2^{k} - 1} d^{-\frac{1}{4}\alpha} L_{1:k+1} N_{1:k+1} \label{eq:induct_2_2}
	\end{align} 
	where in \eqref{eq:23} we use assumption \eqref{eq:FCN_general_d_assumption}. This finishes the induction. 
	
	We have just shown that with probability at least {$1 - (l-2) c_2 d^{-\frac{\alpha}{4}} - \sum_{i=1}^{l-1} (l-i)\delta_i$}, we have \begin{align*}
	    \norm{{\pth{y_{l-1}(x)\big| A}} - y_{l-1}^*(x)}_2 \le \pth{2^{l-2}-1} d^{-\frac{1}{4}\alpha} L_{1:l-1} N_{1:l-1}.
	\end{align*}
	
	For the last layer, by assumption, with probability at least {$\pth{1 - \delta_l} \cdot \qth{1 - (l-2) c_2 d^{-\frac{\alpha}{4}} - \sum_{i=1}^{l-1} (l-i)\delta_i}$}, we have for every $x \in \calB_{d_0}$, \begin{align*}
		\norm{{\pth{f(x)\big| A}} - F(x)}_2 &= \norm{W_{l} {\pth{y_{l-1}(x)\big| A}} - W_{l}^* y_{l-1}^*(x)}_2 \\
		& = \norm{W_{l}^* \pth{y_{l-1}(x)\big| A} - W_{l}^* y_{l-1}^*(x)}_2 \\
		& \le \norm{W_{l}^*}_2 \norm{ \pth{y_{l-1}(x)\big| A} - y_{l-1}^*(x)}_2 \\
		& \le N_{l} \pth{2^{l-2}-1} d^{-\frac{1}{4}\alpha} L_{1:l-1} N_{1:l-1} \\
		& = \pth{2^{l-2}-1} d^{-\frac{1}{4}\alpha} L_{1:l-1} N_{1:l} \\
		& \le \epsilon,
	\end{align*}
	where the last inequality follows from assumption \eqref{eq:FCN_general_d_assumption}. In conclusion, with probability at least $\prob{A} \ge \pth{1-d^{-\frac{1}{3}}}^{2(l-2)}$ over the randomness of masks, we have $\sup_{x\in \calB_{d_0}}\norm{\pth{f(x) \big| A} - F(x)}_2 \le \epsilon$ with probability at least $\pth{1 - \delta_l} \cdot \qth{1 - (l-2) c_2 d^{-\frac{\alpha}{4}} - \sum_{i=1}^{l-1} (l-i)\delta_i}$. As a result, basic probability yields that with probability at least $$p_0 \defeq \pth{1-d^{-\frac{1}{3}}}^{2(l-2)}\pth{1 - \delta_l} \qth{1 - (l-2) c_2 d^{-\frac{\alpha}{4}} - \sum_{i=1}^{l-1} (l-i)\delta_i},$$ we have $$\sup_{x\in \calB_{d_0}}\norm{f(x) - F(x)}_2 \le \epsilon.$$

	It remains to determine a lower bound of $d$ such that \begin{equation} \label{eq:requirement_d1_general}
	    d^{-\frac{1}{4}\alpha} \le \min\sth{N_2, \ldots, N_{l-1},  \frac{\epsilon}{\pth{2^{l-2}-1} L_{1:l-1} N_{1:l}}}
	\end{equation}  and \begin{equation} \label{eq:requirement_d2_general}
	    p_0 \ge 1 - \delta.
	\end{equation}
	
	
	For \eqref{eq:requirement_d1_general}, we have \begin{align} \label{eq:poly_d1_general}
	    d \ge N_k^{-\frac{4}{\alpha}}, \quad 2\le k \le l-1 \quad \textrm{ and } \quad d \ge \pth{\pth{2^{l-2}-1} L_{1:l-1} N_{1:l}}^{\frac{4}{\alpha}} \cdot \epsilon^{-\frac{4}{\alpha}}.
	\end{align}
	
	Regarding \eqref{eq:requirement_d2_general}, condition \eqref{eq:assumption_general_delta} guarantees that $ \delta_0 = \delta - \qth{\delta_l + \sum_{i=1}^{l-1}(l-i)\delta_i} \ge 0$. We have \begin{align}
	    2(l-2)d^{-\frac{1}{3}} \le \frac{2}{3}\delta_0 \quad \Leftrightarrow \quad d \ge \delta_0^{-3} \pth{3(l-2)}^3, \\
	    (l-2)c_2 d^{-\frac{\alpha}{4}} \le \frac{1}{3}\delta_0 \quad \Leftrightarrow \quad d \ge \delta_0^{-\frac{4}{\alpha}} \pth{3c_2(l-2)}^{\frac{4}{\alpha}}. \label{eq:poly_d2_general}
	\end{align}
	
	Combining \eqref{eq:poly_d1_general} - \eqref{eq:poly_d2_general}, we know that if \begin{equation*}
	    d \ge \max \sth{C_1^{\frac{4}{\alpha}}, \pth{\frac{C_2}{\epsilon}}^{\frac{4}{\alpha}}, \pth{\frac{C_3}{\delta_0}}^{3}, \pth{\frac{C_4}{\delta_0}}^{\frac{4}{\alpha}}},
	\end{equation*} for some positive constant $C_1, C_2, C_3$ and $C_4$, then with probability at least \begin{align*}
	    p_0 &= {\pth{1-d^{-\frac{1}{3}}}^{2(l-2)}\pth{1 - \delta_l} \cdot \qth{1 - (l-2) c_2 d^{-\frac{\alpha}{4}} - \sum_{i=1}^{l-1} (l-i)\delta_i}}\\
	    &\ge 1 - (l-2) c_2 d^{-\frac{\alpha}{4}} - 2(l-2) d^{-\frac{1}{3}} - \qth{\delta_l + \sum_{i=1}^{l-1} (l-i)\delta_i} \\
	    &\ge 1 - \frac{2}{3}\delta_0 - \frac{1}{3}\delta_0 - \pth{\delta - \delta_0}\\
	    &= 1-\delta,
	\end{align*} we have \begin{equation*}
	    \sup_{x\in \calB_{d_0}} \norm{f(x) - F(x)}_2 \le \epsilon.
	\end{equation*}
	
\end{proof}

\subsection{Proof of Theorem \ref{thm:FCN_uniform}}

\begin{proof}
    For any $x \in \calB_{d_0}$ and $1 \le k < l$, we denote $y_k(x) = \sigma \pth{W_k \sigma \pth{\cdots W_2 \sigma \pth{W_1 x}}}$ and $y_k^*(x) = \sigma\pth{W_k^* \sigma \pth{\cdots W_2^* \sigma \pth{W_1^* x}}}$ as the output of the $k$-th layer of $f$ and $F$, respectively. 
    
    Recall that we set $M_1$ and $M_l$ as the all 1 matrices, i.e. $W_1 = W_1^*$ and $W_l = W_l^*$. For each $1 < k < l$, we order the entries of $W_k^*$ by their absolute values such that \begin{equation*}
        \abs{(W_k^*)_{i_1^k, j_1^k}} \le \abs{(W_k^*)_{i_2^k, j_2^k}} \le \cdots \le \cdots \le \abs{(W_k^*)_{i_{D_k}^k, j_{D_k}^k}}
    \end{equation*}
    and denote $\calI_k \defeq \sth{ (i_s^k, j_s^k): 1 \le s \le \lfloor D_k^{1-\alpha} \rfloor }$. We set $(M_k)_{i,j} = 0 $ if $(i,j) \in \calI_k$, and $(M_k)_{i,j} = 1 $ otherwise. In the following, we show that $M_1, \ldots, M_l$ defined above satisfy \eqref{eq:prune_target_approximation_FCN_uniform}.
    
    By Lemma \ref{lemma:2norm}, there exist positive constants $c_0$ (depends on $K$) and $\delta_0$ such that\footnote{In fact, we get $l$ different sets of $\sth{c_i, \delta_i}, i\in [l]$ by applying Lemma \ref{lemma:2norm} $l$ times. We take $c_0 = \max \sth{c_i}$ and $\delta_0 = \min \sth{\delta_i}$ so that \eqref{eq:weight_matrix_norm} is satisfied for all $1 \le k \le l$.} \begin{align} \label{eq:weight_matrix_norm}
    	\mathbb{P}\pth{\norm{W_k^*}_2 \le c_0} \ge 1 - 2e^{-4\delta_0 d}, \quad  1\le k \le l.
    \end{align}
    
    Let us assume that \begin{equation} \label{eq:d_assumption_uniform}
        d^{-\alpha} \le \min\sth{c_0, \frac{\epsilon}{\pth{2^{l-2}-1} L_{1:(l-1)} c_0^{l-1}}}.
    \end{equation} 
    We use induction to show that, for any $x \in \calB_{d_0}$ and $1\le k < l$, \begin{enumerate}[(I)]
        \item with probability at least $\pth{1 - 2e^{-4\delta_0 d}}^{k}$, we have $\norm{y_k^*(x)}_2 \le L_{1:k} c_0^{k}$
        \item with probability at least $ 1 - (k-1)c_2 d^{-\alpha} - (k+2)(k-1) e^{-4\delta_0 d}$, we have $\norm{y_k(x) - y_k^*(x)}_2 \le \pth{2^{k-1}-1} d^{-\alpha} L_{1:k} c_0^{k-1}$.
    \end{enumerate}
    
    Statement (I) can be proved in the same way as in the proof of Theorem \ref{thm:FCN_general}. 
    We next show that (II) holds. The case of $k=1$ is trivial since $y_1(x)=y_1^*(x)$. Suppose the statement holds for $1 \le k < l-1$; we consider the case of $k+1$. Note that the non-zero entries of $\calW_{k+1} \defeq W_{k+1} - W_{k+1}^*$ are $\sth{\pth{W_{k+1}^*}_{i, j}: (i,j) \in \calI_{k+1}}$. Taking $a=\frac{K}{\sqrt{\max\sth{d_k, d_{k+1}}}}, n = D_{k+1}, r = \lfloor D_{k+1}^{1-\alpha} \rfloor$ in Lemma \ref{lemma:squaredUniform_single}, for every entry $e$ of $\calW_{k+1}$, we have \begin{align*}
        \Expect \, e^2 &\le \Expect \abs{\pth{W_{k+1}^*}_{i_{\lfloor D_{k+1}^{1-\alpha} \rfloor}^{k+1}, j_{\lfloor D_{k+1}^{1-\alpha} \rfloor}^{k+1}}}^2 = \frac{K^2}{\max\sth{d_k, d_{k+1}}} \cdot \frac{\lfloor D_{k+1}^{1-\alpha} \rfloor \pth{\lfloor D_{k+1}^{1-\alpha} \rfloor + 1}}{(D_{k+1}+1)(D_{k+1}+2)} \\
        & \le \frac{K^2}{\max\sth{d_k, d_{k+1}}} \cdot \frac{D_{k+1}^{1-\alpha}  \pth{D_{k+1}^{1-\alpha} + 1}}{(D_{k+1}+1)(D_{k+1}+2)} \le \frac{K^2}{\max\sth{d_k, d_{k+1}}} \cdot 2D_{k+1}^{-2\alpha} \le 2K^2 d^{-1-4\alpha},
        \end{align*} and similarly \begin{align*}
        \Expect \, e^4 &\le \Expect \abs{\pth{W_{k+1}^*}_{i_{\lfloor D_{k+1}^{1-\alpha} \rfloor}^{k+1}, j_{\lfloor D_{k+1}^{1-\alpha} \rfloor}^{k+1}}}^4 \le 24K^4 d^{-2-8\alpha}.
    \end{align*} 
    
    

    Taking $A = \calW_{k+1}$ in Lemma \ref{lemma:latala}, we know there exists a constant $c_2>0$ such that $ \Expect \norm{\calW_{k+1}}_2 \le c_2 d^{-2\alpha}$, where $c_2 = CK\pth{2\sqrt{2} + (24)^{1/4}}$ and $C$ is the universal constant as defined in Lemma \ref{lemma:latala}. By Markov's inequality, for all $t > 0$ we have $ \prob{\norm{\calW_{k+1}}_2 \ge t} \le \frac{\Expect\norm{\calW_{k+1}}_2}{t}$. Taking $t = d^{-\alpha}$, we have \begin{align*}
        \prob{\norm{\calW_{k+1}}_2 \le d^{-\alpha}} \ge 1 - c_2 d^{-\alpha}.
    \end{align*}
    
    Similar to \eqref{eq:induct_2_1} -- \eqref{eq:induct_2_2} in the proof of Theorem \ref{thm:FCN_general}, with probability at least {\small \begin{align}
	    & \; \mathbb{P} \bigg( \sth{\norm{W_{k+1} - W_{k+1}^*}_2 \le d^{-\alpha}} \bigcap \sth{\norm{W_{k+1}^*}_2 \le c_0} \bigcap \sth{\norm{y_k^*(x)}_2 \le L_{1:k} c_0^k} \notag \\
	    & \quad \quad \quad \quad \quad \quad \quad \quad \quad \quad \quad \quad \quad \; \; \bigcap \sth{\norm{y_k(x) - y_k^*(x)}_2 \le \pth{2^{k-1} - 1} d^{-\alpha} L_{1:k} c_0^{k-1}} \bigg) \label{eq:theorem1_events}\\
	    \ge & \; \pth{1 - c_2 d^{-\alpha}} + \pth{1-2e^{-4\delta_0 d}} + \pth{1-2e^{-4\delta_0 d}}^k + \pth{1 - (k-1)c_2 d^{-\alpha} - (k+2)(k-1)e^{-4\delta_0 d}} - 3 \notag \\
	    \ge & \; 1 - k c_2 d^{-\alpha} - (k+3)k e^{-4\delta_0 d}, \notag
	\end{align}}we have \begin{equation*}
	    \norm{y_{k+1}(x) - y_{k+1}^*(x)}_2 \le \pth{2^k - 1} d^{-\alpha} L_{1:(k+1)} c_0^k,
	\end{equation*} which finishes the induction.
	
	We have just shown that with probability at least $1 - (l-2) c_2 d^{-\alpha} - (l+1)(l-2)e^{-4\delta_0 d}$, we have \begin{align*}
	    \norm{y_{l-1}(x) - y_{l-1}^*(x)}_2 \le \pth{2^{l-2}-1} d^{-\alpha} L_{1:(l-1)} c_0^{l-2}.
	\end{align*}
	
	For the last layer, by \eqref{eq:weight_matrix_norm}, with probability at least $\pth{1 - 2e^{-4\delta_0 d}} \cdot \pth{1 - (l-2) c_2 d^{-\alpha} - (l+1)(l-2)e^{-4\delta_0 d}}$, we have for every $x \in \calB_{d_0}$, \begin{align*}
	    \norm{f(x) - F(x)}_2 \le c_0 \cdot \pth{2^{l-2}-1} d^{-\alpha} L_{1:(l-1)} c_0^{l-2} \le \epsilon,
	\end{align*}
	where the last inequality follows from assumption \eqref{eq:d_assumption_uniform}. In conclusion, we show that with probability at least $$p_0 \defeq \pth{1 - 2e^{-4\delta_0 d}} \cdot \pth{1 - (l-2) c_2 d^{-\alpha} - (l+1)(l-2)e^{-4\delta_0 d}},$$ we have $$\sup_{x\in \calB_{d_0}}\norm{f(x) - F(x)}_2 \le \epsilon.$$
	
	It remains to determine a lower bound of $d$ such that \begin{equation} \label{eq:requirement_d1}
	    d^{-\alpha} \le \min\sth{c_0, \frac{\epsilon}{\pth{2^{l-2}-1} L_{1:(l-1)} c_0^{l-1}}}
	\end{equation} and \begin{equation} \label{eq:requirement_d2}
	    p_0 \ge 1 - \delta.
	\end{equation}
	
	For \eqref{eq:requirement_d1}, we have \begin{align}
	   d \ge c_0^{-\frac{1}{\alpha}} \quad \textrm{ and } \quad d \ge \pth{\frac{\pth{2^{l-2}-1} L_{1:(l-1)} c_0^{l-1}}{\epsilon}}^{\frac{1}{\alpha}}. \label{eq:poly_d1_FCN}
	\end{align}
	
	Regarding \eqref{eq:requirement_d2}, we have $p_0 \ge 1 - (l-2) c_2 d^{-\alpha} - (l^2-l)e^{-4\delta_0 d}$. Note that $p_0 \ge 1-\delta$ if $(l-2) c_2 d^{-\alpha} \le \frac{l-2}{l^2-2}\delta$ and $(l^2-l)e^{-4\delta_0 d} \le \frac{l^2-l}{l^2-2}\delta$. These conditions are satisfied if \begin{equation}
	    d \ge \pth{\frac{(l^2-2)c_2}{\delta}}^{\frac{1}{\alpha}}	\end{equation} and \begin{equation}
	        d \ge \frac{1}{4\delta_0}\pth{\log\pth{\frac{1}{\delta}} + \log(l^2-2)}. \label{eq:poly_d4_FCN}
	    \end{equation}
	
	Combining \eqref{eq:poly_d1_FCN} - \eqref{eq:poly_d4_FCN}, we know that if \begin{equation*}
	    d \ge \max\sth{C_1^{\frac{1}{\alpha}}, \pth{\frac{C_2}{\epsilon}}^{\frac{1}{\alpha}}, \pth{\frac{C_3}{\delta}}^{\frac{1}{\alpha}}, C_4 + C_5 \log\pth{\frac{1}{\delta}}},
	\end{equation*}
	for some positive constants $C_1, C_2, C_3, C_4$ and $C_5$ specified above, then with probability at least $1-\delta$ we have  
    \begin{equation*}
	    \sup_{x\in \calB_{d_0}} \norm{f(x) - F(x)}_2 \le \epsilon.
	\end{equation*}
\end{proof}

\subsection{Proof of Theorem \ref{thm:CNN}} \label{appendix:proof3}
\begin{proof}
    Let $\calF^{(k)} \in \reals^{d_{k} \times d_{k-1} \times q_k \times q_k}$ be the corresponding convulotional tensor of $W_k^*$ and $K^{(k)} \in \reals^{d_{k} \times d_{k-1} \times p_k \times p_k}$ be as defined in \eqref{eq:calF}. For any $x \in \calC_{d_0}$ and $1 \le k < l$, we denote $y_k(x) = \sigma \pth{W_k \sigma \pth{\cdots W_2 \sigma \pth{W_1 x}}}$ and $y_k^*(x) = \sigma\pth{W_k^* \sigma \pth{\cdots W_2^* \sigma \pth{W_1^* x}}}$ as the output of the $k$-th layer of $f$ and $F$, respectively. 

    Recall that for $1<k<l$, random pruning is based on 2D filters, i.e., we randomly select $\lfloor d^{2-\alpha} \rfloor$ pairs of indices $(s', t')$ from $[d] \times [d]$ with replacement and set $\calF^{(k)}_{s',t',:,:}$ to be zero. Denote $\calI_{k} \defeq \sth{(s', t'): \calF^{(k)}_{s',t',:,:} \textrm{ is pruned}}$ and $\calM^{(k)}$ be the $d \times d$ matrix such that $\calM^{(k)}_{s',t'} = 1$ if $(s',t') \in \calI_{k}$ and $\calM^{(k)}_{s',t'} = 0$ otherwise. We further denote two events \begin{align*}
        A^{(k)}_{r} \defeq \sth{\textrm{the number of zero entries in each row of } \calM^{(k)} \textrm{ is at most } 3\lfloor d^{2-\alpha}\rfloor/d}, \\  
        A^{(k)}_{c} \defeq \sth{\textrm{the number of zero entries in each column of } \calM^{(k)} \textrm{ is at most } 3\lfloor d^{2-\alpha}\rfloor/d}
    \end{align*} and set event $A^{(k)} \defeq A^{(k)}_{r} \bigcap A^{(k)}_{c}$. Note that $\alpha \le 2 - \frac{\log(d+1)+\log^{(2)}(d)}{\log(d)}$ guarantees that $\lfloor d^{2-\alpha} \rfloor \ge d \log(d)$ and the events $A^{(k)}_{r} $ and $A^{(k)}_{c}$ are independent. Thus by Lemma \ref{lemma:ballsinbins}, we have \begin{equation*}
        \prob{A^{(k)}} = \prob{A^{(k)}_{r} \bigcap A^{(k)}_{c}} = \prob{A^{(k)}_{r}}\prob{A^{(k)}_{c}} \ge \pth{1-d^{-\frac{1}{3}}}^2.
    \end{equation*}
    Further, for $A \defeq A^{(2)} \bigcap \cdots \bigcap A^{(l-1)}$, we have $\prob{A} = \prod_{k=2}^{l-1} \prob{A^{(k)}} \ge \pth{1-d^{-\frac{1}{3}}}^{2(l-2)}$ where the probability is taken over the randomness of masks (and is not over the randomness of weights in $W_k^*$'s).
    
    For $1 \le k < l$, let $P^{(k,u,v)}\in \reals^{d\times d}, u,v\in[p]$ be as defined in Lemma \ref{lemma:sedghi} such that\footnote{Note that the dimension of $W_1^*$ and $W_l^*$ are not $p^2d \times p^2d$ and thus we cannot apply Lemma \ref{lemma:sedghi} directly. However, we can always embed them into a $p^2d \times p^2d$ matrix. For example, we can define $\tilde{W}_l^* = [W_l^*, \vec{0}_{p^2d\times p^2(d-d_l)}]$ and apply Lemma \ref{lemma:sedghi} on $\tilde{W}_l^*$. We use the fact that $\norm{W_l^*}_2 \le \norm{\tilde{W}_l^*}_2$ to get the same result.} \begin{equation*}
        \norm{W_k^*}_2 = \max_{u,v\in[p]} \sth{\norm{P^{(k,u,v)}}_2}.
    \end{equation*}
    Recall that $\omega = \exp\pth{2\pi \sqrt{-1} / p}$ and $S \in \reals^{p\times p}$ is the matrix of the discrete Fourier transform. By Lemma \ref{lemma:sedghi}, the $(s,t)$-th entry of $P^{(k,u,v)}$ can be written as \begin{equation*}
        P^{(k, u, v)}_{s,t} = \pth{S^T K_{s,t,:,:} S}_{u,v} = \sum_{i, j \in [p]} \omega^{ui} K^{(k)}_{s,t,i,j} \omega^{vj} = \sum_{i, j \in [q]} \omega^{ui} K^{(k)}_{s,t,i,j} \omega^{vj}, \quad s, t \in [d], u, v \in [p],
    \end{equation*} where the last equality is due to \eqref{eq:calF} since $K^{(k)}_{s,t,:,:}$ has non-zero entries only in its top-left $q\times q$ sub-matrix.
    
    Denoting $P^{(k, u,v,i,j)} \defeq \omega^{ui+vj} K^{(k)}_{:,:,i,j}, u,v \in [p], i,j\in[q]$, then we have $P^{(k, u, v)} \defeq \sum_{i,j \in [q]} P^{(k, u,v,i,j)}$ and \begin{equation*}
        \norm{P^{(k, u,v,i,j)}}_2 = \norm{\omega^{ui+vj} K^{(k)}_{:,:,i,j}}_2 = \norm{K^{(k)}_{:,:,i,j}}_2 = \norm{\calF^{(k)}_{:,:,i,j}}_2, \quad u,v\in[p], i,j\in[q].
    \end{equation*}
    
    By assumption (iii), $\calF^{(k)}_{:,:,i,j} \in \reals^{d\times d}$ is a random matrix whose entries are independently sampled from different distributions. In addition, these distributions' second-order moments are upper-bounded by $\frac{C_1}{p^2 d}$ and the fourth-order moments are upper-bounded by $\frac{C_2}{p^4d^2}$. By Lemma \ref{lemma:latala}, for all $i,j \in [q]$, there exists a universal constant $C > 0$ such that \begin{equation} \label{eq:filter_norm_CNN}
    \Expect \norm{\calF^{(k)}_{i,j,:,:}}_2 \le C \qth{\pth{d \frac{C_1}{p^2 d}}^{\frac{1}{2}} + \pth{d \frac{C_1}{p^2 d}}^{\frac{1}{2}} + \pth{d^2 \frac{C_2}{p^4 d^2}}^{\frac{1}{4}}} \le \frac{C_3}{p},
    \end{equation} where $C_3 = C\pth{2\sqrt{C_1}+C_2^{1/4}}$.
    
    Thus we have \begin{equation}\label{eq:filter_norm_CNN1}
        \Expect \norm{W_k^*}_2 \le \max_{u,v\in[p]} \sth{\Expect \norm{P^{(k,u,v)}}_2} \le \max_{u,v\in[p]} \sth{\sum_{i,j\in[q]}\Expect \norm{P^{(k,u,v,i,j)}}_2} = \max_{u,v\in[p]} \sth{\sum_{i,j\in[q]}\Expect \norm{\calF^{(k)}_{:,:,i,j}}_2} \le C_3\frac{q^2}{p}.
    \end{equation}
    
    By the Markov's inequality, we have \begin{equation} \label{eq:layer_norm_CNN}
        \prob{\norm{W_k^*}_2 \le p^{-\beta_1}} \ge 1- C_3 \frac{q^2}{p^{1-\beta_1}}.
    \end{equation}
    
    We use induction to show that, for any $x \in \calC_{p_0^2 d_0}$ and $1\le k < l$, we have \begin{enumerate}[(I)]
        \item with probability at least $\pth{1 - C_3 \frac{q^2}{p^{1-\beta_1}}}^{k}$, we have $\norm{y_k^*(x)}_2 \le \pth{L p^{-\beta_1}}^k p_0 \sqrt{d_0}$,
        \item with probability at least $ 1-(k-1)C_4 \frac{q^2}{p}d^{-\frac{1}{4}\alpha + \beta_2} - \frac{k^2+k-2}{2}C_3\frac{q^2}{p^{1-\beta_1}}$, we have $ \norm{\pth{y_k(x) \big| A} - y_k^*(x)}_2 \le \pth{p^{-\beta_1}\pth{p^{-\beta_1} + d^{-\beta_2}}^{k-1}-p^{-k\beta_1}} L^k p_0\sqrt{d_0}$ holds for some positive constant $C_4$ specified later\footnote{Note that in induction statement (II), the probability (and the expectations in the following context) is taken over the randomness of weights but not the masks, the random variable $\norm{\pth{y_k(x) \big| A} - y_k^*(x)}_2$ is equivalent to $\norm{y_k(x) - y_k^*(x)}_2\big| A$. Further, the statement can also be written as \begin{align*}
            \prob{\sth{\norm{y_k(x) - y_k^*(x)}_2\le \pth{p^{-\beta_1}\pth{p^{-\beta_1} + d^{-\beta_2}}^{k-1}-p^{-k\beta_1}} L^k p_0\sqrt{d_0}} \Big| A} \ge 1-(k-1)C_4 \frac{q^2}{p}d^{-\frac{1}{4}\alpha + \beta_2} - \frac{k^2+k-2}{2}C_3\frac{q^2}{p^{1-\beta_1}}.
        \end{align*}}.
    \end{enumerate}
    
    The case of $k=1$ is as follows. With probability at least $1 - C_3 \frac{q^2}{p^{1-\beta_1}}$, we have $\norm{y_1^*(x)}_2 =\norm{\sigma\pth{W_1^*x}}_2 \le L \norm{W_1^*x}_2 \le L \norm{W_1^*}_2 \norm{x}_2 \le L p^{-\beta_1} p_0 \sqrt{d_0} $. Further, we have $y_1(x) = \sigma\pth{W_1x} = \sigma\pth{W_1^*x} = y_1^*(x)$, and thus $\norm{y_1(x) - y_1^*(x)}_2 = 0$.
    
    Suppose the statement holds for $1 \le k < l-1 $, we consider the case of $k+1$. Note that the events $\sth{\norm{W_{k+1}^*}_2 \le p^{-\beta_1}}$ and $\sth{\norm{y_k^*(x)}_2 \le \pth{L p^{-\beta_1}}^k p_0 \sqrt{d_0}}$ are independent. By \eqref{eq:layer_norm_CNN} and the induction statement (I), with probability at least $$\prob{\norm{W_{k+1}^*}_2 \le p^{-\beta_1}}\prob{\norm{y_{k}^*(x)}_2 \le \pth{L p^{-\beta_1}}^k p_0 \sqrt{d_0}} \ge \pth{1 - C_3 \frac{q^2}{p^{1-\beta_1}}}^{k+1},$$ we have  \begin{align*}
    	\norm{y_{k+1}^*(x)}_2 &= \norm{\sigma \pth{W_{k+1}^* y_k^*(x)}}_2 \le L \norm{W_{k+1}^* y_k^*(x)}_2 \le L \norm{W_{k+1}^* }_2 \norm{y_k^*(x)}_2 \\
    	&\le L p^{-\beta_1} \cdot \pth{L p^{-\beta_1}}^k p_0 \sqrt{d_0} = \pth{L p^{-\beta_1}}^{k+1} p_0 \sqrt{d_0},
    \end{align*} which shows (I) in the induction statement. 
    
    We use a similar approach as in the proof for Theorem \ref{thm:FCN_general} to show that (II) holds. Let us denote $\widebar{K}^{(k+1)}_{:,:i,j} \defeq \calM^{(k+1)} \circ K^{(k+1)}_{:,:,i,j}, i, j \in [p] $, i.e., $\widebar{K}^{(k+1)}_{s',t',:,:} =  K^{(k+1)}_{s',t',:,:}$ if $(s',t') \in \calI_{k+1}$ and $\widebar{K}^{(k+1)}_{s',t',:,:} = \vec{0}_{p\times p}$ otherwise. Then $\widebar{W}_{k+1} \defeq W_{k+1}^* - W_{k+1} $ can be represented by \begin{equation}  
    \widebar{W}_{k+1} = \begin{bmatrix}
        \widebar{B}^{(k+1)}_{1,1} & \cdots & \widebar{B}^{(k+1)}_{1, d} \\
        \vdots & \ddots & \vdots \\
        \widebar{B}^{(k+1)}_{d', 1} & \cdots & \widebar{B}^{(k+1)}_{d', d}
    \end{bmatrix},
    \end{equation} where each $\widebar{B}^{(k+1)}_{s,t}$ is a doubly block circulant matrix such that \begin{equation}
        \widebar{B}^{(k+1)}_{s,t} = \begin{bmatrix}
            \textrm{circ}\pth{\widebar{K}^{(k+1)}_{s,t,1,:}} & \textrm{circ}\pth{\widebar{K}^{(k+1)}_{s,t,2,:}} & \cdots & \textrm{circ}\pth{\widebar{K}^{(k+1)}_{s,t,p,:}}\\
            \textrm{circ}\pth{\widebar{K}^{(k+1)}_{s,t,p,:}} & \textrm{circ}\pth{\widebar{K}^{(k+1)}_{s,t,1,:}} & \cdots & \textrm{circ}\pth{\widebar{K}^{(k+1)}_{s,t,p-1,:}} \\
            \vdots & \vdots & \ddots & \vdots \\
            \textrm{circ}\pth{\widebar{K}^{(k+1)}_{s,t,2,:}} & \textrm{circ}\pth{\widebar{K}^{(k+1)}_{s,t,3,:}} & \cdots &  \textrm{circ}\pth{\widebar{K}^{(k+1)}_{s,t,1,:}}
        \end{bmatrix}.
    \end{equation}
    
    Again, let $\widebar{P}^{(k+1,u,v)}\in \reals^{d\times d}, u,v\in[p]$ be such that \begin{equation*}
        \widebar{P}^{(k+1,u,v)}_{s,t} = \pth{S^T \widebar{K}^{(k+1)}_{s,t,:,:}S}_{u,v} = \sum_{i, j \in [q]} \omega^{ui} \widebar{K}^{(k+1)}_{s,t,i,j} \omega^{vj}, \quad s, t \in [d], u, v \in [p],
    \end{equation*} and we denote $\widebar{P}^{(k+1,u,v,i,j)} \defeq \omega^{ui+vj} \widebar{K}^{(k+1)}_{:,:,i,j}, u,v\in[p],i,j\in[q]$. Then we have $\widebar{P}^{(k+1,u,v)} = \sum_{i,j\in[q]} \widebar{P}^{(k+1,u,v,i,j)}$ and \begin{equation*}
            \norm{\widebar{P}^{(k+1, u,v,i,j)}}_2 = \norm{\omega^{ui+vj} \widebar{K}^{(k+1)}_{:,:,i,j}}_2 = \norm{\widebar{K}^{(k+1)}_{:,:,i,j}}, \quad u,v\in[p], i,j\in[q].
        \end{equation*}
        
    By assumption (iii), every entry of $\widebar{K}^{(k+1)}_{:,:,i,j}$ follows a distribution such that the second-order moment is upper-bounded by $\frac{C_1}{p^2d}$ and the fourth-order moment is upper-bounded by $\frac{C_2}{p^4d^2}$. Under event $A$, that the number of non-zero entries in $\widebar{K}^{(k+1)}_{:,:,i,j}$ is at most $\lfloor d^{2-\alpha} \rfloor$ and the number of non-zero entries in each row/column of $\widebar{K}^{(k+1)}_{:,:,i,j}$ is at most $3\lfloor d^{2-\alpha} \rfloor/d$, by Lemma \ref{lemma:latala} and a similar derivation to \eqref{eq:latala1} -- \eqref{eq:apply_latala}, we have \begin{equation*}
        \condexp{}{\norm{\widebar{K}^{(k+1)}_{:,:,i,j}}_2}{A}  \le \frac{C_4}{p}d^{-\frac{1}{4}\alpha},
    \end{equation*}
    where $C_4 = C\pth{2\pth{3C_1}^{\frac{1}{2}} + C_2^{\frac{1}{4}}}$ and $C$ is the universal constant as defined in Lemma \ref{lemma:latala}.
    
    By Lemma \ref{lemma:sedghi}, we have \begin{align*}
        \condexp{}{\norm{W_{k+1}^* - W_{k+1}}_2}{A} &= \condexp{}{\norm{\widebar{W}_{k+1}}_2}{A} = \max_{u,v\in[p]} \sth{\condexp{}{\norm{\widebar{P}^{(k+1,u,v)}}_2}{A}} \\
        &\le \max_{u,v\in[p]} \sth{\sum_{i,j\in[q]} \condexp{}{\norm{\widebar{P}^{(k+1,u,v,i,j)}}_2}{A}} \\
        &= \max_{u,v\in[p]} \sth{\sum_{i,j\in[q]}\condexp{}{\norm{\widebar{K}^{(k+1)}_{:,:,i,j}}_2}{A}}\\
        &\le C_4 \frac{q^2}{p}d^{-\frac{1}{4}\alpha}.
    \end{align*}
    
    
    By the Markov's inequality, for all $t > 0$ we have \begin{align*}
        \prob{\sth{\norm{W_{k+1}^* - W_{k+1}}_2 \ge t}\Big| A} \le \frac{\condexp{}{\norm{W_{k+1}^* - W_{k+1}}_2}{A}}{t}.
    \end{align*}
    Taking $t = d^{-\beta_2}$, we have \begin{align*}
        \prob{\sth{\norm{W_{k+1}^* - W_{k+1}}_2 \le d^{-\beta_2}} \Big| A} \ge 1 - C_4 \frac{q^2}{p} d^{-\frac{1}{4}\alpha+\beta_2}.
    \end{align*}
    
    Similar to \eqref{eq:induct_2_1} -- \eqref{eq:induct_2_2} in the proof of Theorem \ref{thm:FCN_general}, with probability at least 
    \begin{align}
	    & \; \mathbb{P}\bigg(\sth{\norm{\pth{W_{k+1}\big| A} - W_{k+1}^*}_2 \le d^{-\beta_2}} \bigcap \sth{\norm{W_{k+1}^*}_2 \le p^{-\beta_1}} \bigcap \sth{\norm{y_k^*(x)}_2 \le \pth{Lp^{-\beta_1}}^{k} p_0 \sqrt{d} } \notag\\
	    & \qquad\qquad\qquad\quad \bigcap \sth{\norm{\pth{y_k(x)\big| A} - y_k^*(x)}_2 \le \pth{p^{-\beta_1}\pth{p^{-\beta_1} + d^{-\beta_2}}^{k-1}-p^{-k\beta_1}}L^k p_0\sqrt{d_0}} \bigg) \label{eq:theorem2_events}\\
	    \ge & \; \pth{1-C_4 \frac{q^2}{p}d^{-\frac{1}{4}\alpha + \beta_2}} + \pth{1 - C_3\frac{q^2}{p^{1-\beta_1}}} + \pth{1 - C_3\frac{q^2}{p^{1-\beta_1}}}^k + \pth{1-(k-1)C_4 \frac{q^2}{p}d^{-\frac{1}{4}\alpha + \beta_2} - \frac{k^2+k-2}{2}C_3\frac{q^2}{p^{1-\beta_1}}} - 3\notag\\
	    = & \; 1 - kC_4 \frac{q^2}{p}d^{-\frac{1}{4}\alpha + \beta_2} - \frac{(k+1)^2+(k+1)-2}{2}C_3\frac{q^2}{p^{1-\beta_1}},\notag
	\end{align}
    
    
     we have \begin{align*}
	    \norm{\pth{y_{k+1}(x)\big| A} - y_{k+1}^*(x)}_2 \le L^{k+1} p_0 \sqrt{d} \qth{ p^{-\beta_1} \pth{p^{-\beta_1} + d^{-\beta_2}}^{k} - p^{-(k+1)\beta_1}},
	\end{align*} which finishes the induction.
    
    
	We have just shown that with probability at least $1 - (l-2)C_4 \frac{q^2}{p}d^{-\frac{1}{4}\alpha + \beta_2} - \frac{l^2-l-2}{2}C_3\frac{q^2}{p^{1-\beta_1}}$, we have $$\norm{\pth{y_{l-1}(x)\big| A} - y_{l-1}^*(x)}_2 \le L^{l-1} p_0 \sqrt{d} \qth{ p^{-\beta_1} \pth{p^{-\beta_1} + d^{-\beta_2}}^{l-2} - p^{-(l-1)\beta_1}}.$$
	
	Note that the last layer of $F$ is a fully-connected layer with dimension $p^2d \times d_l$. By Lemma \ref{lemma:latala}, the Markov's inequality, and a similar derivation to \eqref{eq:filter_norm_CNN} -- \eqref{eq:filter_norm_CNN1}, there exists a positive constant $C_5$ such that $\prob{\norm{W_l^*} \le p^{-\beta_1}} \ge 1 - \frac{C_5}{p^{1-\beta_1}}$. Therefore, with probability at least \begin{align*}
	    &\pth{1 - \frac{C_5}{p^{1-\beta_1}}} \cdot \pth{1 - (l-2)C_4 \frac{q^2}{p}d^{-\frac{1}{4}\alpha + \beta_2} - \frac{l^2-l-2}{2}C_3\frac{q^2}{p^{1-\beta_1}}} \\
	    \ge \; & 1 - (l-2)C_4 \frac{q^2}{p}d^{-\frac{1}{4}\alpha + \beta_2} - \frac{l^2-l-2}{2}C_3\frac{q^2}{p^{1-\beta_1}} - \frac{C_5}{p^{1-\beta_1}},
	\end{align*} we have that for every $x \in \calC_{p_0^2d_0}$ \begin{align*}
		\norm{\pth{f(x)\big| A} - F(x)}_2 \le p^{-\beta_1} L^{l-1} p_0 \sqrt{d} \qth{ p^{-\beta_1} \pth{p^{-\beta_1} + d^{-\beta_2}}^{l-2} - p^{-(l-1)\beta_1}}.
	\end{align*}
	
    With probability at least $\prob{A} = \pth{1-d^{-\frac{1}{3}}}^{2(l-2)}$ over the randomness of masks, we have \begin{equation*}
        \sup_{x \in \calC_{p_0^2d_0}} \norm{\pth{f(x) \big| A} - F(x)}_2 \le p^{-\beta_1} L^{l-1} p_0 \sqrt{d} \qth{ p^{-\beta_1} \pth{p^{-\beta_1} + d^{-\beta_2}}^{l-2} - p^{-(l-1)\beta_1}}
    \end{equation*} with probability at least $1 - (l-2)C_4 \frac{q^2}{p}d^{-\frac{1}{4}\alpha + \beta_2} - \frac{l^2-l-2}{2}C_3\frac{q^2}{p^{1-\beta_1}} - \frac{C_5}{p^{1-\beta_1}}$. As a result, basic probability yields that \begin{equation*}
        \sup_{x \in \calC_{p_0^2d_0}} \norm{f(x) - F(x)}_2 \le p^{-\beta_1} L^{l-1} p_0 \sqrt{d} \qth{ p^{-\beta_1} \pth{p^{-\beta_1} + d^{-\beta_2}}^{l-2} - p^{-(l-1)\beta_1}}
    \end{equation*} holds with probability at least $\pth{1-d^{-\frac{1}{3}}}^{2(l-2)}\pth{1 - (l-2)C_4 \frac{q^2}{p}d^{-\frac{1}{4}\alpha + \beta_2} - \frac{l^2-l-2}{2}C_3\frac{q^2}{p^{1-\beta_1}} - \frac{C_5}{p^{1-\beta_1}}}$.

\end{proof}

\section{Extension of Magnitude-based Pruning} \label{appendix:sec:magnitude}

In this section, we discuss some extensions of Theorems \ref{thm:FCN_uniform} and \ref{thm:CNN} presented in the main paper. Note that we only provide ideas but not strict proofs in this section, as the results here are based on approximations and further efforts are required to give precise statements.

\subsection{Magnitude-based Pruning of FCNs with Sub-Gaussian Distributions} \label{appendix:magnitude:FCN}

Note that in Theorem \ref{thm:FCN_uniform}, assumption (iii), we assume that the distribution of the weights in the layers of $F$ are  independently and identically following $\calU\qth{-\frac{K}{\sqrt{\max\sth{d_k, d_k-1}}}, \frac{K}{\sqrt{\max\sth{d_k, d_k-1}}}}$. The uniform distribution provides a closed-form order statistics and hence we can bound the gap between weight matrices and pruned weight matrices precisely.
In fact, the uniform and exponential distributions are the only distributions that have a closed-form for order statistics in the literature. 
It is a natural question of what happens if the weights follow a more general distribution, e.g. a sub-Gaussian distribution.

Consider a target weight matrix $W^* \in \reals^{d\times d}$ where we prune the smallest $\lfloor d^{2-\alpha} \rfloor$ entries in $W^*$ based on magnitude. We further assume that the weights in $W^*$ independently and identically follow a sub-Gaussian distribution $\textsf{subG}(\sigma^2)$ with appropriate choice of $\sigma^2$ (e.g., $\sigma^2 = \frac{1}{d}$). Next we present the idea of applying the results of intermediate order statistics to show a similar result in the asymptotic sense.

\begin{theorem}[Lemma 1 of \citet{chibisov1964limit}] \label{theorem:intermediate}
    Let $X_1, X_2, \ldots$ be a sequence of independent random variables with the same distribution function $F$. We denote $X_m^{(n)}$ as the $m$-th largest among $X_1, \ldots, X_n$ and $G_{m,n}(x) = \prob{X_{m}^{(n)} < x}$. If $n \goto \infty, m \goto \infty$, and $m/n \goto 0$, then \begin{equation*}
        \sup_{x} \abs{G_{m,n}(a_n x + b_n) - \Phi\pth{u_n(x)}} \goto 0,
    \end{equation*} where $u_n(x) = \frac{nF(a_n x + b_n)}{\sqrt{m}}$ and $\Phi$ is the cumulative distribution function of the standard Gaussian distribution.
\end{theorem}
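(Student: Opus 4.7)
The plan is to exploit the exact binomial representation of the distribution function of an order statistic, and then to invoke a uniform version of the central limit theorem. First I would observe that, since $X_m^{(n)}$ is the $m$-th largest of the i.i.d.\ sample $X_1,\dots,X_n$, the event $\sth{X_m^{(n)}<x}$ coincides with the event that fewer than $m$ of the $X_i$'s exceed $x$, i.e., with the event $\sth{N_n(x)\ge n-m+1}$, where $N_n(x)\defeq\sum_{i=1}^n \mathbb{1}\sth{X_i<x}$ is $\textrm{Binomial}(n,F(x))$-distributed. This gives the closed form $G_{m,n}(x)=\prob{N_n(x)\ge n-m+1}$ and reduces the whole problem to approximating a binomial tail.

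Next I would apply the central limit theorem for triangular arrays to the standardized sum $\pth{N_n(x)-nF(x)}/\sqrt{nF(x)\pth{1-F(x)}}$. Substituting $x=a_n x'+b_n$ and using the intermediate-order-statistic assumption $m\to\infty$, $m/n\to 0$, together with the admissible scaling of $a_n,b_n$ (so that $n\pth{1-F(a_n x'+b_n)}$ is of order $m$), the variance term simplifies to $\sqrt{m}$ up to a factor of $\sqrt{1-m/n}\to 1$. Plugging into the normal approximation, absorbing the centering $n-m+1$ into the argument, and using $\Phi(\cdot)+\Phi(-\cdot)=1$, one obtains the claimed limiting expression $\Phi\pth{u_n(x')}$ with $u_n(x')=nF(a_n x'+b_n)/\sqrt{m}$, modulo lower-order corrections that vanish under the intermediate scaling.

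The main obstacle is promoting the pointwise CLT convergence above into the uniform convergence in $x$ claimed by the theorem. For that I would invoke a Berry--Esseen bound for sums of independent Bernoulli random variables, which gives a Kolmogorov-distance error of order $1/\sqrt{n F(\cdot)\pth{1-F(\cdot)}}\asymp 1/\sqrt{m}$. Because the approximation is quantitative in the count variable and because $F$ is monotone, this bound transfers directly to a bound that is uniform in the threshold $x$. Two subsidiary steps remain: verifying that the contribution of arguments $x$ for which $nF$ leaves the ``intermediate'' window is negligible (handled by monotonicity and the tail behavior of $\Phi$), and checking that the continuity correction introduced by Berry--Esseen does not spoil the stated rate of convergence. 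Both are routine once the binomial identification is in place, so the crux of the argument is the combination of the exact binomial representation with a quantitative CLT.
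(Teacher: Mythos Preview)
The paper does not supply its own proof of this statement: it is quoted as Lemma~1 of \citet{chibisov1964limit} and invoked as a black box in Appendix~\ref{appendix:magnitude:FCN}. There is therefore no in-paper argument to compare your proposal against.

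Your route---the exact binomial representation $G_{m,n}(x)=\prob{N_n(x)\ge n-m+1}$ followed by a Berry--Esseen bound on the Bernoulli sum to secure uniformity in $x$---is the standard and correct way to establish intermediate order-statistic limits of this kind, and is essentially Chibisov's own method. One caution worth flagging: when you carry out the normal approximation you will find that the argument of $\Phi$ comes out as $\bigl(m-n(1-F(\cdot))\bigr)/\sqrt{m}$, equivalently $\bigl(nF(\cdot)-(n-m)\bigr)/\sqrt{m}$, rather than $nF(\cdot)/\sqrt{m}$ as printed. The paper's transcription of $u_n$ appears either to drop a centering term or to rely on an unstated convention for $a_n,b_n$; reconcile your derivation with Chibisov's original formulation rather than the version reproduced here.
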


Note that the non-zero entries of $\calW \defeq W^* - W$ are the smallest $\lfloor d^{2-\alpha} \rfloor$ order statistics of $\textsf{subG}(\sigma^2)$ based on magnitude, where $W$ is the pruned weight matrix. If we order the weights in $W^*$ by their magnitude, i.e. \begin{equation*}
    \abs{W^*_{i_1, j_1}} \le \abs{W^*_{i_2, j_2}} \le \cdots \le \abs{W^*_{i_{d^2}, j_{d^2}}},
\end{equation*}
then the non-zero entries in $\calW$ are $$ W^*_{i_1, j_1}, W^*_{i_2, j_2}, \ldots,  W^*_{i_{ \lfloor d^{2-\alpha} \rfloor}, j_{ \lfloor d^{2-\alpha} \rfloor}}.$$

Taking $m = \lfloor d^{2-\alpha} \rfloor, n = d^2, a_n = 1, b_n = 0$ in Theorem \ref{theorem:intermediate} and note that $G_{m,n}(x) = \prob{\abs{W^*_{i_{ \lfloor d^{2-\alpha} \rfloor}, j_{ \lfloor d^{2-\alpha} \rfloor}}} \le x}$, we have \begin{equation*}
     \sup_x \abs{\prob{\abs{W^*_{i_{ \lfloor d^{2-\alpha} \rfloor}, j_{ \lfloor d^{2-\alpha} \rfloor}}} \le x} - \Phi\pth{\frac{d^2 F(x)}{\sqrt{\lfloor d^{2-\alpha} \rfloor}}}} \goto 0, \quad x \goto \infty.
\end{equation*}
Thus we can approximate the expectation $\Expect \qth{\abs{W^*_{i_{ \lfloor d^{2-\alpha} \rfloor}, j_{ \lfloor d^{2-\alpha} \rfloor}}}}$ by some positive constant $\beta$, by the properties of the cumulative density function of standard Gaussian and $\textsf{subG}\pth{\sigma^2}$.
Similarly, we can get the estimations of $\Expect \qth{\abs{W^*_{i_{ \lfloor d^{2-\alpha} \rfloor}, j_{ \lfloor d^{2-\alpha} \rfloor}}}^2}$ and $\Expect \qth{\abs{W^*_{i_{ \lfloor d^{2-\alpha} \rfloor}, j_{ \lfloor d^{2-\alpha} \rfloor}}}^4}$. Then we can apply Lemma \ref{lemma:latala} (similar to \eqref{eq:latala1} -- \eqref{eq:apply_latala}) to upper-bound the expectation $\Expect \norm{\calW}_2$. Recall that this is an asymptotic derivation, and we also need to bound the gap between the above second and fourth-order moments when $n = d^2$ is a large but fixed.


\subsection{Magnitude-based Pruning of CNNs} \label{appendix:magnitude:CNN}

We are given a convolutional tensor $\calF\in \reals^{d\times d \times p \times p} $. Let $$W^* = \begin{bmatrix}
    B_{1,1} & \cdots & B_{1, d} \\
    \vdots & \ddots & \vdots \\
    B_{d, 1} & \cdots & B_{d, d}
\end{bmatrix} \in \reals^{p^2d \times p^2d}$$ be the linear transformation corresponding to $\calF$, and tensor $K$ and $B$ as defined in \eqref{eq:calF} -- \eqref{eq:doubly_block}. The magnitude-based filter pruning of CNN is to order the $L_1$ norms $\norm{\textrm{vec}\pth{B_{i,j}}}_1, i,j\in[d]$ (or equivalently, $\norm{\textrm{vec}\pth{K_{i,j,:,:}}}_1$) and set the filters with the smallest $L_1$ norms to be zero. In other words, if we denote $W$ to be the pruned weight matrix, then $$W_* - W = \begin{bmatrix}
    \widebar{B}_{1,1} & \cdots & \widebar{B}_{1, d} \\
    \vdots & \ddots & \vdots \\
    \widebar{B}_{d, 1} & \cdots & \widebar{B}_{d, d}
\end{bmatrix}$$ is a block matrix of $\widebar{B}_{i,j}$, where $\widebar{B}_{i,j} = B_{i,j}$ if $\norm{\textrm{vec}\pth{B_{i,j}}}_1$ is among the smallest $\lfloor d^{2-\alpha} \rfloor$ norms, and $\widebar{B}_{i,j} = \vec{0}_{p\times p}$ otherwise. Similar to Appendix \ref{appendix:magnitude:FCN}, we can upper-bound $\Expect\norm{W^* - W}_2$ by $\Expect \norm{\widebar{B}_0}_2^2$ and $\Expect \norm{\widebar{B}_0}_2^4$, where $\widebar{B}_0 \in \sth{\widebar{B}_{i,j}: i,j\in [d]}$ is the matrix corresponding ot the $\lfloor d^{2-\alpha} \rfloor$-th smallest value based on $L_1$ norms.

Note that the $L_1$ norms are the sum of many random samples drawn from a given distribution. By the Central Limit Theory, $\norm{\textrm{vec}\pth{B_{i,j}}}_1$ can be approximated by a normal distribution. Thus we can estimate $\norm{\textrm{vec}\pth{\widebar{B}_0}}_1$ by a similar approach to the one in Appendix \ref{appendix:magnitude:FCN}. Theorem 6 of \citet{sedghi2018singular} further provides a tool to upper-bound $\norm{\widebar{B}_0}_2$ by $\norm{\textrm{vec}\pth{\widebar{B}_0}}_1$.

Note that we use two approximations in the above derivation. One is for the distribution of $\norm{\textrm{vec}\pth{B_{i,j}}}_1, i,j\in[d]$ and the other one comes from the asymptotic result as discussed in Appendix \ref{appendix:magnitude:FCN}. Caution should be taken while following these steps to attack the magnitude-based pruning problem of CNNs.

\section{Numerical Study} \label{appendix:sec:numerical}

In Sections \ref{appendix:subsec:distribution} and \ref{appendix:subsec:distribution_vgg16}, we show the histograms of some trained FCNs and CNNs. In Sections \ref{appendix:subsec:uniform} and \ref{appendix:subsec:latala}, we show the universal constants in Lemmas \ref{lemma:2norm} and \ref{lemma:latala} as we use them frequently in the paper.

\subsection{Distribution of Weights in Trained FCNs} \label{appendix:subsec:distribution}

We first describe the setting where we train a vanilla FCN. The \texttt{Covertype} dataset \citep{blackard1998comparative} is to predict 7 different forest cover types from cartographic variables. Data is in raw form (not scaled) and contains binary (0 or 1) columns of data for qualitative independent variables (wilderness areas and soil types). The dataset contains about 580,000 samples with 9 numerical and 44 categorical features. We normalize the numerical features by mean and variance of each feature. We build a 5-hidden-layer fully-connected neural network with ReLU activation functions to predict the label of each sample. {There are 1,024 neurons in each hidden layer and thus the first weight matrix has dimension $54\times 1024$, the internal 4 weight matrices have dimension $1024 \times 1024$, and the last weight matrix has dimension $1024 \times 7$.} We minimize the cross-entropy loss using Adam with learning rate 0.001. The batch-size is selected to be 512 and we run 20 epochs of training. The trained network achieves approximately 80\% predicting accuracy.

Figure \ref{fig:hist} in the main paper shows the histogram of the entries in all weight matrices. We mainly focus on the second to fifth layers because we do not perform any pruning on the first and last layers. In these 4 layers, the weights are approximately distributed following a Gaussian distribution. We also report the means and variances of the entries in each internal layer in Table \ref{table:hist}. As we can see from the results, for the internal weight matrices, the means are close to zero while the variances are approximately bounded by $\frac{3}{1024}$, which is also the initialization variance suggested by \citet{glorot2010understanding}. {We have also tested several other random initial weights and network architectures, and the results and conclusions are similar and not presented.}

\begin{table}[h!]
    \caption{Expectation and variance of the entries in all weight matrices}
    \label{table:hist}
    \centering
    \begin{tabular}{ccccccc}
    \toprule
		Layer  & 1 & 2 & 3 & 4 & 5 & 6 \\
    \midrule
        Mean  & -0.0309 & -0.0215 & -0.0078 & -0.0119 & -0.0092 & -0.0275  \\
        Variance & 0.0155 & 0.0035 & 0.0022 & 0.0019 & 0.0016 & 0.0057  \\
        \bottomrule
    \end{tabular}
\end{table}

\subsection{Distribution of Weights in VGG16}\label{appendix:subsec:distribution_vgg16}

We plot the histogram of weights in different layers of VGG16 \citep{simonyan2014very}. The pre-trained model is imported from \texttt{PyTorch} package \cite{NEURIPS2019_9015} {where the weights are trained on a variety of image datasets}. Figure \ref{fig:VGG16} shows the results for all layers of the pre-trained VGG16 (13 convulotional layers and 3 fully-connected layers). As we can see, the entries in the internal layers follow Gaussian distributions approximately.

\begin{figure}[h!]
    \centering
    \subfigure{\includegraphics[width=0.24\textwidth]{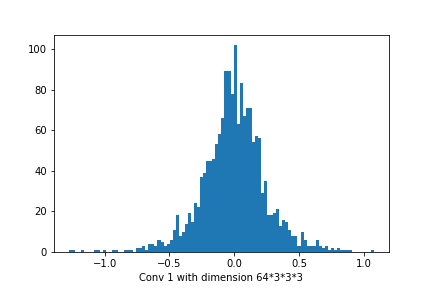}} 
    \subfigure{\includegraphics[width=0.24\textwidth]{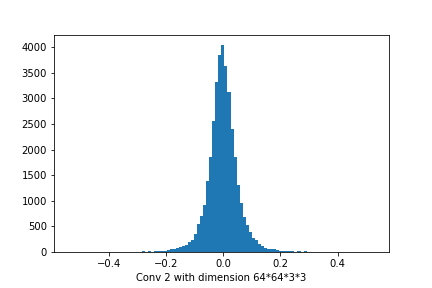}}
    \subfigure{\includegraphics[width=0.24\textwidth]{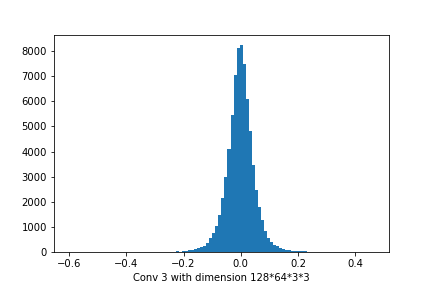}}
    \subfigure{\includegraphics[width=0.24\textwidth]{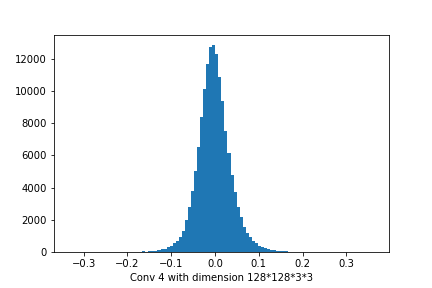}}\\
    \subfigure{\includegraphics[width=0.24\textwidth]{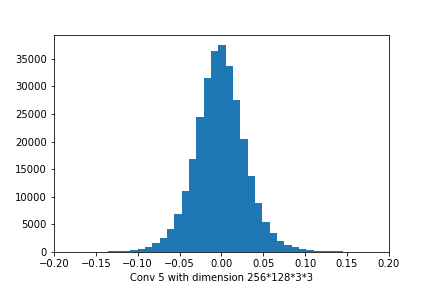}} 
    \subfigure{\includegraphics[width=0.24\textwidth]{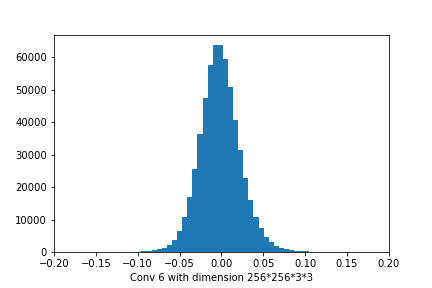}}
    \subfigure{\includegraphics[width=0.24\textwidth]{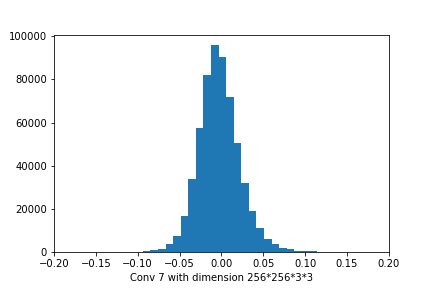}}
    \subfigure{\includegraphics[width=0.24\textwidth]{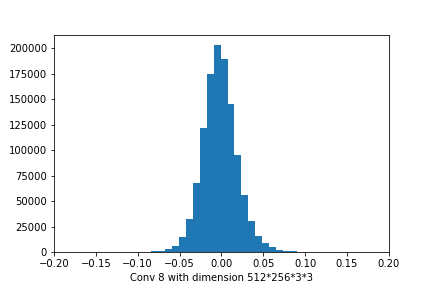}}\\
    \subfigure{\includegraphics[width=0.24\textwidth]{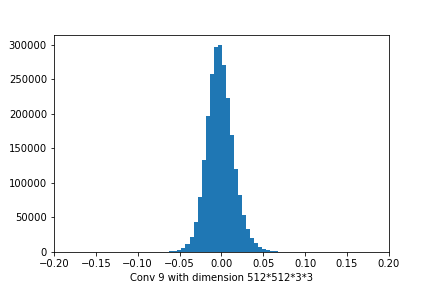}} 
    \subfigure{\includegraphics[width=0.24\textwidth]{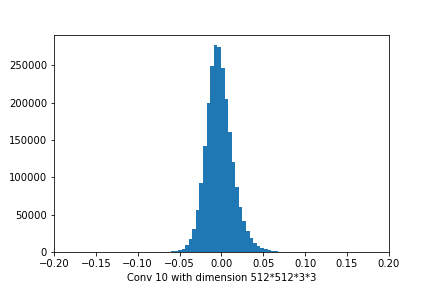}}
    \subfigure{\includegraphics[width=0.24\textwidth]{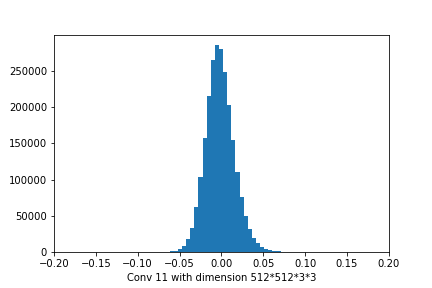}}
    \subfigure{\includegraphics[width=0.24\textwidth]{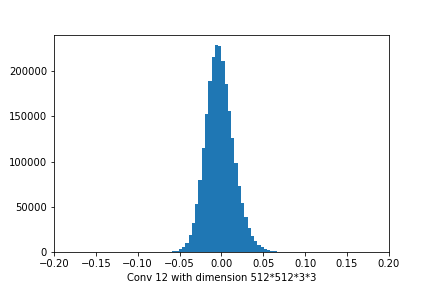}}\\
    \subfigure{\includegraphics[width=0.24\textwidth]{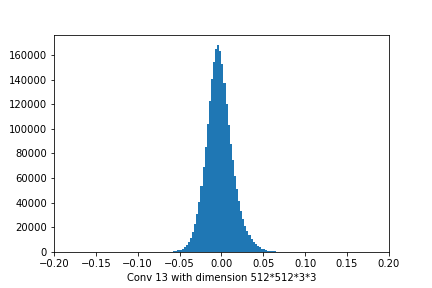}} 
    \subfigure{\includegraphics[width=0.24\textwidth]{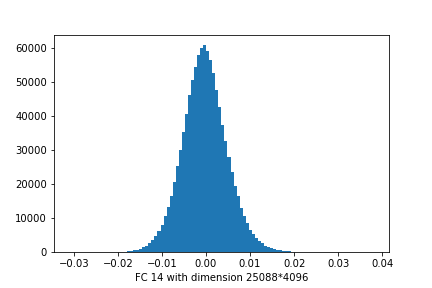}}
    \subfigure{\includegraphics[width=0.24\textwidth]{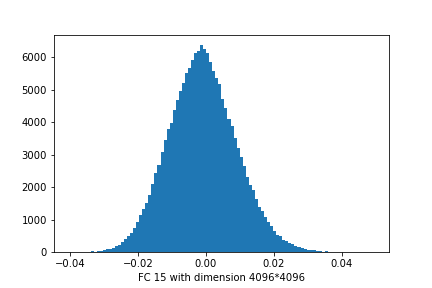}}
    \subfigure{\includegraphics[width=0.24\textwidth]{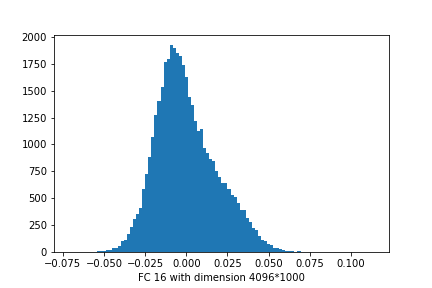}}\\
    \caption{Histogram of entries of all weight matrices of a pre-trained VGG16}
    \label{fig:VGG16}
\end{figure}

\subsection{Constants in Lemma \ref{lemma:2norm}} \label{appendix:subsec:uniform}

Lemma \ref{lemma:2norm} gives an upper-bound of the random matrix $B \in \reals^{n_1 \times n_2}$ whose entries are independently and identically following a uniform distribution $\calU\qth{-\frac{K}{\sqrt{n}}, \frac{K}{\sqrt{n}}}$, where $n = \max\sth{n_1, n_2}$ and $K$ is a positive constant. To better understand the values of constants $c_0$ and $\delta_0$, we take various tuples of $(n_1, n_2, K)$ and calculate the norm $\norm{B}_2$. In the numerical experiments, we generate in total $N=1000$ random matrices and report $c_0$ and $\delta_0$ that satisfy $\prob{\norm{B}_2 \le c_0} = 1 - 2e^{-4\delta_0 n} = q$ for $q = 95\%, 99\%, 99.9\%, 99.99\%$. We also report the mean and standard deviation of $\norm{B}_2$ for reference. The results are given in Table \ref{table:1}. {The table shows that, even if $n_1$ and $n_2$ are on the low end with respect to the actual use cases, we can still have a small $c_0$ that is close to 1 and a small $\delta_0$ that is close to 0. Note that these two quantities are frequently used in Theorem \ref{thm:FCN_uniform} and we observe that the constant terms in the theorem are mild while the probability that the statement hold is positive.}
\begin{table}[h!]
    \caption{Numerical results for the constants in Lemma \ref{lemma:2norm}}
    \label{table:1}
    \centering
    \begin{tabular}{ccccccccccccc}
    \toprule
		\multirow{2}{*}{$n_1$} & \multirow{2}{*}{$n_2$} & \multirow{2}{*}{$K$} & \multirow{2}{*}{$\Expect \norm{B}_2$} & \multirow{2}{*}{$\textrm{std}\pth{\norm{B}_2}$} & \multicolumn{2}{c}{$q=95\%$} & \multicolumn{2}{c}{$q=99\%$} & \multicolumn{2}{c}{$q=99.9\%$} & \multicolumn{2}{c}{$q=99.99\%$} \\
		\cline{6-7} \cline{8-9} \cline{10-11} \cline{12-13}
		~ & ~ & ~ & ~ & ~ & $c_0$ & $\delta_0$ & $c_0$ & $\delta_0$ & $c_0$ & $\delta_0$ & $c_0$ & $\delta_0$ \\
    \midrule
        32 & 32 & 1 & 1.087 & 0.038 & 1.15 & 0.029 & 1.183 & 0.041 & 1.206 & 0.059 & 1.218 & 0.077 \\ 
        32 & 32 & $\sqrt{3}$ & 1.882 & 0.066 & 1.996 & 0.029 & 2.044 & 0.041 & 2.069 & 0.059 & 2.131 & 0.077 \\ 
        32 & 64 & 1 & 0.941 & 0.027 & 0.988 & 0.014 & 1.015 & 0.021 & 1.039 & 0.03 & 1.042 & 0.039 \\ 
        32 & 64 & $\sqrt{3}$ & 1.631 & 0.046 & 1.707 & 0.014 & 1.743 & 0.021 & 1.786 & 0.03 & 1.797 & 0.039 \\ 
        32 & 128 & 1 & 0.836 & 0.018 & 0.867 & 0.007 & 0.878 & 0.01 & 0.895 & 0.015 & 0.902 & 0.019 \\ 
        32 & 128 & $\sqrt{3}$ & 1.449 & 0.032 & 1.503 & 0.007 & 1.528 & 0.01 & 1.577 & 0.015 & 1.579 & 0.019 \\ 
        32 & 256 & 1 & 0.762 & 0.013 & 0.784 & 0.004 & 0.794 & 0.005 & 0.806 & 0.007 & 0.811 & 0.01 \\ 
        32 & 256 & $\sqrt{3}$ & 1.319 & 0.022 & 1.357 & 0.004 & 1.371 & 0.005 & 1.39 & 0.007 & 1.393 & 0.01 \\ 
        32 & 512 & 1 & 0.708 & 0.009 & 0.723 & 0.002 & 0.731 & 0.003 & 0.74 & 0.004 & 0.747 & 0.005 \\ 
        32 & 512 & $\sqrt{3}$ & 1.226 & 0.016 & 1.253 & 0.002 & 1.267 & 0.003 & 1.278 & 0.004 & 1.283 & 0.005 \\
        64 & 64 & 1 & 1.114 & 0.026 & 1.158 & 0.014 & 1.183 & 0.021 & 1.205 & 0.03 & 1.209 & 0.039 \\ 
        64 & 64 & $\sqrt{3}$ & 1.932 & 0.045 & 2.009 & 0.014 & 2.045 & 0.021 & 2.07 & 0.03 & 2.086 & 0.039 \\ 
        64 & 128 & 1 & 0.959 & 0.018 & 0.992 & 0.007 & 1.005 & 0.01 & 1.04 & 0.015 & 1.054 & 0.019 \\ 
        64 & 128 & $\sqrt{3}$ & 1.66 & 0.031 & 1.711 & 0.007 & 1.743 & 0.01 & 1.782 & 0.015 & 1.785 & 0.019 \\ 
        64 & 256 & 1 & 0.848 & 0.012 & 0.868 & 0.004 & 0.88 & 0.005 & 0.887 & 0.007 & 0.888 & 0.01 \\ 
        64 & 256 & $\sqrt{3}$ & 1.47 & 0.021 & 1.508 & 0.004 & 1.523 & 0.005 & 1.53 & 0.007 & 1.554 & 0.01 \\ 
        64 & 512 & 1 & 0.77 & 0.008 & 0.785 & 0.002 & 0.792 & 0.003 & 0.796 & 0.004 & 0.801 & 0.005 \\ 
        64 & 512 & $\sqrt{3}$ & 1.333 & 0.015 & 1.359 & 0.002 & 1.371 & 0.003 & 1.388 & 0.004 & 1.392 & 0.005 \\ 
        128 & 128 & 1 & 1.131 & 0.017 & 1.159 & 0.007 & 1.173 & 0.01 & 1.199 & 0.015 & 1.205 & 0.019 \\ 
        128 & 128 & $\sqrt{3}$ & 1.956 & 0.029 & 2.008 & 0.007 & 2.024 & 0.01 & 2.044 & 0.015 & 2.045 & 0.019 \\ 
        128 & 256 & 1 & 0.969 & 0.012 & 0.99 & 0.004 & 0.999 & 0.005 & 1.012 & 0.007 & 1.013 & 0.01 \\ 
        128 & 256 & $\sqrt{3}$ & 1.679 & 0.019 & 1.712 & 0.004 & 1.728 & 0.005 & 1.743 & 0.007 & 1.746 & 0.01 \\ 
        128 & 512 & 1 & 0.856 & 0.008 & 0.87 & 0.002 & 0.875 & 0.003 & 0.881 & 0.004 & 0.885 & 0.005 \\ 
        128 & 512 & $\sqrt{3}$ & 1.482 & 0.014 & 1.507 & 0.002 & 1.52 & 0.003 & 1.527 & 0.004 & 1.528 & 0.005 \\ 
        256 & 256 & 1 & 1.14 & 0.011 & 1.16 & 0.004 & 1.17 & 0.005 & 1.18 & 0.007 & 1.181 & 0.01 \\ 
        256 & 256 & $\sqrt{3}$ & 1.976 & 0.021 & 2.01 & 0.004 & 2.027 & 0.005 & 2.036 & 0.007 & 2.036 & 0.01 \\ 
        256 & 512 & 1 & 0.976 & 0.008 & 0.989 & 0.002 & 0.995 & 0.003 & 1.002 & 0.004 & 1.014 & 0.005 \\ 
        256 & 512 & $\sqrt{3}$ & 1.691 & 0.013 & 1.714 & 0.002 & 1.727 & 0.003 & 1.735 & 0.004 & 1.735 & 0.005 \\ 
        512 & 512 & 1 & 1.146 & 0.007 & 1.159 & 0.002 & 1.163 & 0.003 & 1.172 & 0.004 & 1.174 & 0.005 \\ 
        512 & 512 & $\sqrt{3}$ & 1.985 & 0.012 & 2.006 & 0.002 & 2.015 & 0.003 & 2.033 & 0.004 & 2.04 & 0.005 \\ 
        \bottomrule
    \end{tabular}
\end{table}

\subsection{Constant in Lemma \ref{lemma:latala}} \label{appendix:subsec:latala}
Lemma \ref{lemma:latala} shows that there exists a universal constant $C$ such that, for any random matrix $A$ whose entries are independent, we have \begin{equation} \label{eq:latala_numerical}
        \Expect\norm{A}_2 \le C \qth{\max_i \pth{\sum_j \Expect A_{i, j}^2}^{\frac{1}{2}} + \max_j \pth{\sum_i \Expect A_{i, j}^2}^{\frac{1}{2}} + \pth{\sum_{i,j}\Expect A_{i, j}^4}^{\frac{1}{4}}}.
    \end{equation}
    
We use this lemma many times to bound the $L_2$ norm of various random matrices, e.g., in \eqref{eq:apply_latala} and \eqref{eq:filter_norm_CNN}. In the following, we consider the cases where the elements of $A \in \reals^{d \times d}$ follows $U \defeq \calU\qth{-\sqrt{\frac{3}{d}}, \sqrt{\frac{3}{d}}}$ and $\calN\pth{0, \frac{K}{d}}$ for some positive constant $K$, respectively. We also consider the case where we initialize the elements of $A$ by samples of $\calN\pth{0, \frac{1}{d}}$, but we set $\lfloor d^{2-\alpha} \rfloor$ entries to be zero randomly (thus it aligns with the use case in \eqref{eq:apply_latala}).

In the numerical experiments, we generate in total $N=500$ random matrices $A$ and calculate the quantities $\Expect\norm{A}_2, \max_i \pth{\sum_j \Expect A_{i, j}^2}^{\frac{1}{2}}, \max_j \pth{\sum_i \Expect A_{i, j}^2}^{\frac{1}{2}}$ and $\pth{\sum_{i,j}\Expect A_{i, j}^4}^{\frac{1}{4}}$. In Table \ref{table:2}, we report the minimum $C$ such that \eqref{eq:latala_numerical} holds with the choices of $d$, distribution of $A_{i,j}$, and $\alpha$ (if necessary). 

\begin{table}[h!]
\linespread{2} 
    \caption{Numerical results for the constant in Lemma \ref{lemma:latala}}
    \label{table:2}
    \centering
    \begin{tabular}{cccccccc}
    \toprule
		$d$ & Distribution & $\alpha$ & $\max_i \pth{\sum_j \Expect A_{i, j}^2}^{\frac{1}{2}}$ & $\max_j \pth{\sum_i \Expect A_{i, j}^2}^{\frac{1}{2}}$ & $\pth{\sum_{i,j}\Expect A_{i, j}^4}^{\frac{1}{4}}$ & $\Expect\norm{A}_2$ & $C$\\
    \midrule 
        32 & $U$ & N/A & 1.006 & 1.006 & 1.159 & 1.888 & 0.596 \\ [3pt]
        64 & $U$  & N/A & 1.006 & 1.005 & 1.159 & 1.934 & 0.61 \\ [3pt]
        128 & $U$  & N/A & 1.005 & 1.005 & 1.159 & 1.958 & 0.618 \\ [3pt]
        256 & $U$  & N/A & 1.004 & 1.003 & 1.158 & 1.976 & 0.624 \\ [3pt]
        512 & $U$  & N/A & 1.003 & 1.002 & 1.158 & 1.985 & 0.627 \\[3pt]
        32 & $\calN\pth{0, \frac{1}{d}}$ & N/A & 1.011 & 1.014 & 1.314 & 1.905 & 0.571 \\ [3pt]
        64 & $\calN\pth{0, \frac{1}{d}}$ & N/A & 1.008 & 1.008 & 1.315 & 1.947 & 0.585 \\ [3pt]
        128 & $\calN\pth{0, \frac{1}{d}}$ & N/A & 1.005 & 1.007 & 1.316 & 1.965 & 0.59 \\ [3pt]
        256 & $\calN\pth{0, \frac{1}{d}}$ & N/A & 1.005 & 1.006 & 1.316 & 1.979 & 0.595 \\ [3pt]
        512 & $\calN\pth{0, \frac{1}{d}}$ & N/A & 1.004 & 1.004 & 1.316 & 1.988 & 0.598 \\ [3pt]
        32 & $\calN\pth{0, \frac{3}{d}}$ & N/A & 1.751 & 1.745 & 2.279 & 3.295 & 0.571 \\ [3pt]
        64 & $\calN\pth{0, \frac{3}{d}}$ & N/A & 1.755 & 1.744 & 2.28 & 3.361 & 0.582 \\ [3pt]
        128 & $\calN\pth{0, \frac{3}{d}}$ & N/A & 1.742 & 1.743 & 2.28 & 3.405 & 0.591 \\ [3pt]
        256 & $\calN\pth{0, \frac{3}{d}}$ & N/A & 1.743 & 1.742 & 2.28 & 3.428 & 0.595 \\ [3pt]
        512 & $\calN\pth{0, \frac{3}{d}}$ & N/A & 1.74 & 1.739 & 2.279 & 3.441 & 0.598 \\ [3pt]
        32 & $\calN\pth{0, \frac{1}{d}}$ & 0.01 & 0.626 & 0.63 & 1.033 & 1.237 & 0.54 \\ [3pt]
        64 & $\calN\pth{0, \frac{1}{d}}$ & 0.01 & 0.632 & 0.629 & 1.035 & 1.239 & 0.54 \\ [3pt]
        128 & $\calN\pth{0, \frac{1}{d}}$ & 0.01 & 0.63 & 0.63 & 1.037 & 1.242 & 0.541 \\ [3pt]
        256 & $\calN\pth{0, \frac{1}{d}}$ & 0.01 & 0.63 & 0.63 & 1.039 & 1.246 & 0.542 \\ [3pt]
        32 & $\calN\pth{0, \frac{1}{d}}$ & 0.1 & 0.714 & 0.713 & 1.103 & 1.379 & 0.545 \\ [3pt]
        64 & $\calN\pth{0, \frac{1}{d}}$ & 0.1 & 0.729 & 0.729 & 1.117 & 1.426 & 0.554 \\ [3pt]
        128 & $\calN\pth{0, \frac{1}{d}}$ & 0.1 & 0.744 & 0.744 & 1.129 & 1.459 & 0.558 \\ [3pt]
        256 & $\calN\pth{0, \frac{1}{d}}$ & 0.1 & 0.758 & 0.756 & 1.14 & 1.491 & 0.562 \\ [3pt]
        32 & $\calN\pth{0, \frac{1}{d}}$ & 0.5 & 0.928 & 0.925 & 1.258 & 1.759 & 0.565 \\ [3pt]
        64 & $\calN\pth{0, \frac{1}{d}}$ & 0.5 & 0.95 & 0.948 & 1.275 & 1.831 & 0.577 \\ [3pt]
        128 & $\calN\pth{0, \frac{1}{d}}$ & 0.5 & 0.964 & 0.964 & 1.288 & 1.883 & 0.586 \\ [3pt]
        256 & $\calN\pth{0, \frac{1}{d}}$ & 0.5 & 0.975 & 0.974 & 1.296 & 1.92 & 0.592 \\[3pt]
    \bottomrule
    \end{tabular}
\end{table}

\section{Discussion} \label{appendix:sec:disscussion}
In this section, we discuss some assumptions made to simply the presentations. We provide (possible) ways to avoid them but the detailed proofs are omitted.

\subsection{Independency of Weights in the Target Network}\label{appendix:subsec:independent}

The assumption of independent trained weights satisfied to a certain degree.
Many existing works show that the trained weights are not ``far away'' from the initialization and thus certain levels of independency remains among the trained weights. For example, \citet{Bai2020Beyond} show that the trained weights can be approximated by a Taylor expansion around the initialization and the coefficients of the polynomial are relatively small. This also aligns with the observation from the NTK literature \citep{NEURIPS2018_5a4be1fa} that the trained weights are close to initialization. There are no well-accepted metrics to measure how close are the weights to independency, and thus we assume them to be independent. 

There are other ways to relax independency. For random pruning, independency is assumed so that we can apply the Latala's inequality (Lemma \ref{lemma:latala}). There also exist other versions of spectral norm bounds for sub-Gaussian random matrix with non-i.i.d. entries (Chapter 5 of \citet{pastur2011eigenvalue}) and for a matrix with independent rows and columns \citep{vershynin_2012}. 
For magnitude-based pruning, the assumption is used to derive the explicit form of expectation of order statistics. By assuming an equal correlation between weights, we can also give the explicit forms (Chapter 5 of \citet{david2004order}). The general form of order statistics for dependent uniform samples can be achieved approximately in the same way.

\subsection{With-replacement and Without-replacement Sampling for Random Pruning} \label{appendix:subsec:diff}

Under the random pruning scheme, we select $N$ entries uniformly at random from a $d\times d$ weight matrix and set them to zero. The proposed approach in the beginning of Section \ref{sec:FCN} corresponds to ``with-replacement'' sampling since an entry might be selected multiple times. Another ``without-replacement'' sampling approach refers to selecting $N$ non-overlapping entries from the weight matrix. Note that with a positive probability of $\frac{\binom{d^2}{N}}{d^{2N}}$, the entries selected by the ``with-replacement'' approach have no repeated elements and the two approaches align. In this sense, we can derive the results of the ``without-replacement'' approach from the stated results in this work by simply multiplying the corresponding probability that all selected entries are not repeated. 

\subsection{Global and Layer-wise Magnitude-based Pruning} \label{appendix:subsec:magnitude}

In this paper, the magnitude-based pruning is defined layer-wise as we order the weights in each layer based on magnitude separately and prune the smallest ones. There is also another ``global'' version where the weights of the entire network are sorted and the weights with the smallest magnitudes are pruned. Next we show the connection between these two settings and how to extend the proofs to the global setting.

Suppose that we want to prune a total of $N$ weights in a $l$-layer network. If we treat the small weights as balls and layers as bins, then by Lemma \ref{lemma:ballsinbins}, the maximum load in  each bin is bounded by $O(N/l)$ with high probability. In other words, we expect to see that the appearances of pruned weights in all layers are approximately uniform (the numbers can differ by a constant but not orders of magnitude) with high probability. This is also the reason why we rarely see that the small weights appear in the same layer of a trained network in practice. Under this high-probability event, we get back to the layer-wise magnitude-based pruning setting excepts that the number of weights to be pruned in each layer may vary by a constant. In this sense, the original proofs can be easily revised to fit the global magnitude-based setting.

\end{document}